\documentclass{article}

\PassOptionsToPackage{numbers, sort}{natbib}
\usepackage[final]{neurips_2025}
\usepackage{custom}

\DeclareMathOperator*{\argmax}{arg\,max}
\DeclareMathOperator*{\argmin}{arg\,min}

\newcommand{\EE}{\mathbb{E}}      
\newcommand{\RR}{\mathbb{R}}      
\def\x{\times}                    


\DeclareMathAlphabet{\mathsfit}{\encodingdefault}{\sfdefault}{m}{sl}
\SetMathAlphabet{\mathsfit}{bold}{\encodingdefault}{\sfdefault}{bx}{n}

\renewcommand{\vec}[1]{\bm{#1}}            
\newcommand{\mat}[1]{\bm{#1}}              
\newcommand{\rv}[1]{\mathrm{#1}}           
\newcommand{\tens}[1]{\bm{\mathsfit{#1}}}  

\newcommand{\veczero}[1]{\bm{0}_{#1}}      
\newcommand{\vecone}[1]{\bm{1}_{#1}}       


\newcommand{\conv}{\mathbin{\star}}        



\title{ModHiFi: Identifying High Fidelity predictive components for Model Modification}

\author{
  Dhruva Kashyap \\
  CSA, IISc\\
  \texttt{kdhruva@iisc.ac.in}
  \And
  Chaitanya Murti \\
  HP Inc. AI Lab\\
  \texttt{chaitanya.murti@hp.com}
  \And
  Pranav Nayak\\
  CSA, IISc\\
  \texttt{pranavk@iisc.ac.in}
  \And
  Tanay Narshana \thanks{Author primarily contributed to this work before joining Google.} \\
  Google\\
  \texttt{tanaynarshana@google.com}
  \And
  Chiranjib Bhattacharyya\\
  CSA, IISc\\
  \texttt{chiru@iisc.ac.in}
}

\begin{document}

\maketitle


\begin{abstract}
Open weight models, which are ubiquitous, rarely provide access to their training data or loss function. This makes modifying such models for tasks such as pruning or unlearning, which are constrained by this unavailability, an active area of research. Existing techniques typically require gradients or ground-truth labels, rendering them infeasible in settings with limited computational resources.
In this work, we investigate the fundamental question of identifying components that are critical to the model's predictive performance, without access to either gradients or the loss function, and with only distributional access such as synthetic data. 
We theoretically demonstrate that the global error is linearly bounded by local reconstruction errors for Lipschitz-continuous networks such as CNNs and well-trained Transformers (which, contrary to existing literature, we find exhibit Lipschitz continuity).
This motivates using the locally reconstructive behavior of component subsets to quantify their global importance, via a metric that we term \emph{Subset Fidelity}. 
In the uncorrelated features setting, selecting individual components based on their Subset Fidelity scores is optimal, which we utilize to propose \textbf{ModHiFi}, an algorithm for model modification that requires neither training data nor access to a loss function. 
\textbf{ModHiFi-P}, for structured pruning, achieves an 11\% speedup over the current state of the art on ImageNet models and competitive performance on language models. \textbf{ModHiFi-U}, for classwise unlearning, achieves complete unlearning on CIFAR-10 without fine-tuning and demonstrates competitive performance on Swin Transformers.\footnote{Our code is available at \url{https://github.com/DhruvaKashyap/modhifi}}
\end{abstract}

\section{Introduction}
\label{sec:intro}
Modern deep learning has made significant strides in a wide variety of tasks, such as classification \citep{krizhevsky2012,krizhevsky2009learning}, image generation \citep{goodfellow2020generative}, and natural language processing \citep{min2023recent}; moreover, \emph{well-trained} open weight models for such tasks are easily accessible. 
However, significant challenges remain in their deployment, such as inference in resource-constrained settings \citep{shuvo2022efficient,prakash2019optimizing}, inference with unbalanced or biased data \citep{hooker2019compressed, hooker2020characterising}, and interpretable inference \citep{zhang2021survey}. These challenges have increased interest in methods that modify the parameters of well-trained models to alter their behavior \citep{santurkar2021editing,sahoo2024layerwise, shah2024decomposing}.  
These methods include pruning \citep{hoefler2021sparsity}, classwise unlearning \citep{kodge2024deepunlearning,shah2024decomposing, jain2022distilling}, and debiasing \citep{shah2024decomposing, mitchell2021fast}, among other model modifications. Moreover, recent work has studied model modification in the setting where the original training data and loss function are unavailable \citep{murti2024discedit}; this is motivated by concerns related to privacy and security \citep{yin2020dreaming}, and also the use of \emph{synthetic data}, which has become critical in a variety of language modeling settings \cite{ding2024data,tan2024large}.
Thus, we address the challenging problem of \emph{altering well-trained models without training data or the loss function, and only with distributional access to the original training distribution in the form of synthetic data}, focusing specifically on structured pruning and classwise unlearning. 

Modifying open weight models without the loss function and only synthetic data requires answering a fundamental question: \emph{which components in a model contribute significantly to its predictive performance\footnote{We measure predictive performance using accuracy for classification tasks, and perplexity and other measures for language modeling tasks \citep{shah2024decomposing,murti2024discedit}.}?}  However, most methods that identify critical components for specific modifications (e.g., pruning) cannot be applied to others (e.g., unlearning) \citep{murti2024discedit}, often require expensive fine-tuning, and are architecture-specific. Moreover, most methods utilize gradients to assess the impact of a component on the loss objective, which is not feasible in the absence of the loss function and the training data. While the LLM pruning literature uses calibration datasets to mitigate the problem of the absence of datasets \citep{ma2023llm, ashkboos2024slicegpt}, the problem of achieving sparsity in vision models without original training data is hard and unsolved \citep{hoefler2021sparsity}. Moreover, the issue of performing classwise unlearning without access to the original training data has not been addressed \citep{jia2023model,murti2024discedit}. 

Towards enabling the modification of well-trained open weight models amidst these challenges, we make the following contributions:

\begin{enumerate}[leftmargin=*,label=({C}{{\arabic*}})]
    \item\label{contribution_1} \textbf{Local-to-Global with Lipschitzness}. An open question is the extent to which local model modifications impact the predictive performance of the model. In the absence of loss functions and training sets, estimating the impact of component modification by using gradients (as done in \citep{liu2021group, ma2023llm, jia2023model}) is infeasible. To address this, in \cref{thm:HiFi_global}, we show that for Lipschitz continuous networks, the reconstruction error at the final layer is at most linear in the local reconstruction errors. 
    Moreover, contrary to the assertion that transformers are not Lipschitz continuous \citep{qi2023lipsformer}, in \cref{corollary:norm}, we show that this is not the case for \emph{well-trained} transformers, allowing us to apply \cref{thm:HiFi_global} to not just CNNs, but well-trained ViTs and LLMs as well.

    \item \textbf{Identifying Subsets of Important Components}.
    Contrary to prior work, which usually infers saliencies for single components, we propose measuring the importance of sets of components to understand the cumulative effects of groups of components on a model's predictive performance. Leveraging \cref{thm:HiFi_global}, we propose \emph{Subset Fidelity}, which quantifies the extent to which a subset of components can reconstruct the output after modifying their weights. However, computing optimal subsets is NP-complete, motivating us to compute Subset Fidelity scores for singleton sets. \cref{thm:uncorrelated} establishes that selecting singletons with the highest subset fidelity scores is optimal when the features are uncorrelated.

    \item \label{contribution_2} \textbf{Modifying Models with \hyperref[alg:editx]{ModHiFi-X.}} Motivated by \cref{thm:uncorrelated}, we propose the \textbf{ModHiFi} algorithm, which uses the subset fidelity of singletons to modify models for pruning and classwise unlearning; the algorithm identifies important components using the singleton scores, and removes them (for classwise unlearning, \textbf{ModHiFi-U}) or retains them (for structured pruning, \textbf{ModHiFi-P}). We demonstrate that ModHiFi-P achieves state-of-the-art speedup for ImageNet models and consistently competes with current baselines for language models. For classwise unlearning, ModHiFi-U achieves complete unlearning on all CIFAR-10 classes without fine-tuning and is competitive with baselines on Swin-Transformers that require fine-tuning. When allowing for a similar fine-tuning budget as said baselines, ModHiFi-U outperforms, particularly when given access to training data. These empirical results demonstrate the practical effectiveness of Subset Fidelity.
\end{enumerate} 

\section{Background, Setup, and Related Work}
\label{sec:rel_work}

In this section, we review the background relevant to our study, establish the notation, and formalize the model modification problem. We also unify Convolutional Networks (CNNs) and Transformers under a single abstraction that underpins our theoretical results in \cref{sec:method}.

\subsection{Background and Notation}

\paragraph{Notation}
\label{sec:notation}
Let \([p] = \{1, \ldots, p\}\) for \(p \in \mathbb{N}\).
We denote vectors by \(\vec{v} \in \mathbb{R}^{n}\) with entries \(v_{i}\), and matrices by \(\mat{B} \in \mathbb{R}^{n \times m}\) with rows \(\vec{b}_{i}^{\top}\) and columns \(\mat{B}_{:,j}\).
The vectors \(\vecone{d}\) and \(\veczero{d}\) denote the all-ones and all-zeros vectors in \(\mathbb{R}^{d}\), respectively.
We use \(\|\vec{v}\|_{2}\) for the Euclidean norm.
For matrices \(\mat{C}, \mat{D}\), the inner product is \(\langle \mat{C}, \mat{D} \rangle = \mathrm{Tr}(\mat{C}^{\top}\mat{D})\), the Frobenius norm is \(\|\mat{C}\| = \sqrt{\langle \mat{C}, \mat{C}\rangle}\), and the spectral norm \(\|\mat{C}\|_2\) is the largest singular value.
For index sets \(A \subseteq [n]\) and \(B \subseteq [m]\), \(\mat{C}[A,B]\) denotes the submatrix of \(\mat{C}\) defined by these indices.
Expectations of a random variable \(\rv{X}\) are written \(\mathbb{E}_{\rv{X}}[\cdot]\), omitting the subscript when clear from context.

\paragraph{2D Convolution}
\label{sec:setup:nnp:conv2d}
Consider the \(l\)-th layer of a convolutional network. It transforms an input (the output of preceding layers) \(\Phi^{l}(\mat{X}) \in \mathbb{R}^{c_{\mathrm{in}}^{l} \times h^{l-1} \times w^{l-1}}\) into output \(\tens{Y}^{l}(\mat{X}) \in \mathbb{R}^{c_{\mathrm{out}}^{l} \times h^{l} \times w^{l}}\).
The layer is parameterized by a weight tensor \(\tens{W}^{l} \in \mathbb{R}^{c_{\mathrm{out}}^{l} \times c_{\mathrm{in}}^{l} \times k^{l} \times k^{l}}\).
Each output channel \(c \in [c_{\mathrm{out}}^{l}]\) is computed as the sum of convolved input channels:
\begin{equation}
\mat{Y}_{c}^{l}(\mat{X})
    = \sum_{i=1}^{c_{\mathrm{in}}^{l}}
      \Phi_{i}^{l}(\mat{X}) \conv \mat{W}_{ci}^{l} = \sum_{i=1}^{c_{\mathrm{in}}^{l}} \mat{A}_{ci}^{l}(\mat{X}),
\tag{\texttt{CONV}}
\label{eq:conv}
\end{equation}
where \(\conv\) denotes the standard 2D convolution.
We define \(\mat{A}_{ci}^{l}(\mat{X}) \coloneqq \Phi_{i}^{l}(\mat{X}) \conv \mat{W}_{ci}^{l} \ \ (\in \mathbb{R}^{h^{l} \times w^{l}})\) as the \emph{input contribution} from channel \(i\) to output channel \(c\). For notational simplicity, we omit explicit bias terms and stride/padding specifications, as our analysis generalizes to these standard configurations without loss of generality.

\paragraph{Transformers}
\label{sec:setup:nnp:tfers}
Transformer blocks consist of Multi-Head Attention (MHA) and Feed-Forward Networks (FFN), with pre-normalization (LayerNorm or RMSNorm) \citep{ba2016layer, ashkboos2024slicegpt}.
Our analysis focuses on the FFN; we leave attention-specific analysis for future work.
Let the input to the \(l\)-th layer be \(\phi^{l}(\mat{X}) \in \mathbb{R}^{T \times d}\), where \(T\) is sequence length and \(d\) is model dimension.
The FFN comprises two linear transformations, \(W_{U}^{l} \in \mathbb{R}^{d \times d_{\mathrm{ff}}}\) and \(W_{D}^{l} \in \mathbb{R}^{d_{\mathrm{ff}} \times d}\), and an elementwise nonlinearity \(\sigma(\cdot)\) and it's output, 
\(
\mathrm{FFN}^{l}(\phi^{l}(\mat{X}))
    = \sigma(\phi^{l}(\mat{X}) W_{U}^{l})\, W_{D}^{l}.
\)
Defining the intermediate activation \(\Phi^{l}(\mat{X}) \coloneqq \sigma(\phi^{l}(\mat{X}) W_{U}^{l}) \in \mathbb{R}^{T \times d_{\mathrm{ff}}}\), the contribution from intermediate neuron \(i \in [d_{\mathrm{ff}}]\) to output coordinate \(c \in [d]\) is:
\begin{equation}
\mat{A}_{ci}^{l}(\mat{X})
    \coloneqq \Phi^{l}_{:,i}(\mat{X})\, W_{D,ci}^{l}.
\tag{\texttt{LIN}}
\label{eq:lin}
\end{equation}

\paragraph{Unified Notation}
\label{sec:rel_work_common}
To unify these architectures, we define a common abstraction used in our theoretical results.
Let \(\mathrm{N}_{\theta} = f^{L} \circ \cdots \circ f^{1}\) be a network composed of \(L\) layers.
Each layer \(l\) maps an input \(\Phi^{l}(\mat{X})\) to an output \(\tens{Y}^{l}(\mat{X})\).
Crucially, for both CNNs and Transformers, the output channel \(c\) can be decomposed as a sum of atomic input contributions:
\(
\mat{Y}_{c}^{l}(\mat{X})
    = \sum_{i=1}^{c_{\mathrm{in}}^{l}}
      \mat{A}_{ci}^{l}(\mat{X}),
\)
where \(\mat{A}_{ci}^{l}\) represents either the spatial convolution (\cref{eq:conv}) or the token-wise linear projection (\cref{eq:lin}). This decomposition is central to our analysis of component importance. 
This additive structure allows us to analyze component fidelity in an architecture-agnostic manner.

\subsection{Modifying Open Weight Models without Training Data or the Loss Function via Distributional Access}
\label{sec:setup:editing}

We formally define \emph{model modification} as the process of selectively altering parameters of a pre-trained model, without retraining from scratch, to satisfy constraints such as efficiency, privacy, or safety \citep{shah2024decomposing, jia2023model, hoefler2021sparsity,santurkar2021editing}.
This includes tasks including structured pruning, unlearning \citep{kurmanji2024towards}, debiasing \citep{jain2022distilling}, continual or life-long learning \cite{golkar2019continual,sahoo2024layerwise}.
A major impediment in real-world modification is the unavailability of the original training data and loss function \citep{murti2024discedit,hoefler2021sparsity}.
To address this, we operate under the constraint of \emph{distributional access}, specifically utilizing \emph{synthetic data} \citep{tan2024large, ding2024data}, to proxy the underlying data distribution without requiring the original corpus.

These considerations motivate the central question addressed in this work:
\emph{Can we effectively modify trained models, for tasks such as structured pruning or unlearning, using only distributional access provided through synthetic data?}

\paragraph{Formulating Model Modification}
Let \(\theta^{\star} \in \mathbb{R}^{D}\) be the parameters of a well-trained model.
We seek a modification mask \(\mathsf{m}^{\star} \in \mathcal{M}\) (where \(\mathcal{M}\) defines permissible modifications, e.g., binary masks for pruning) to produce modified parameters \(\theta^{E} = \theta^{\star} \odot \mathsf{m}^{\star}\).
Given data distributions $\{\mathcal{D}_i\}_{i=1}^K$ and weights $\vec{\alpha} \in \RR^K$, the optimal modification mask is defined as:
\begin{align}
\mathsf{m}^{\star} = \argmin_{\mathsf{m} \in \mathcal{M}} \sum_{i} \alpha_i \, \mathbb{E}_{\rv{X} \sim \mathcal{D}_i} \left[ \mathcal{L}(\mathrm{N}_{\theta^{\star} \odot \mathsf{m}}(\rv{X})) \right].
\tag{\texttt{MODIFY}} \label{eq:edit}
\end{align}
We instantiate this framework for two distinct tasks:

\textbf{Structured Pruning} The goal is to maximize performance subject to sparsity. 
Let the parameters be partitioned into $G$ disjoint structured groups $\{\mathcal{G}_g\}_{g=1}^G$ (e.g., filters, channels, rows, or columns), with
$\theta^\star = (\theta^\star_{\mathcal{G}_1}, \dots, \theta^\star_{\mathcal{G}_G})$.
The admissible set enforces a sparsity budget $B$,
\(\mathcal M_{\mathrm{SP}}:=\{\mathsf m \in \mathbb R^D~|~\exists\, \mathsf z \in \{0,1\}^G,\;\boldsymbol\delta \in \mathbb R^D\text{ s.t. }\mathsf m_j = \mathsf z_g \delta_j \;\forall j\in\mathcal G_g,\;\sum_{g=1}^G z_g \le B\}\).
Structured pruning is recovered from \eqref{eq:edit} by setting $K=1$, $\mathcal{D}_1=\mathcal{D}$ (original task distribution), $\alpha_1=1$, and $\mathcal{M}=\mathcal{M}_{\mathrm{SP}}$, yielding
\begin{equation}
    \mathsf{m}^{\star} = \argmin_{\mathsf{m} \in \mathcal{M}_{SP}} \, 
    \mathbb{E}_{\mathcal{D}} \left[ \mathcal{L}(\mathrm{N}_{\theta^{\star} \odot \mathsf{m}}(\rv{X})) 
    \right].
    \tag{\texttt{STRUCT-PRUNE}}
    \label{eq:sp}
\end{equation}

\textbf{Classwise Unlearning}
The goal is to degrade performance on a \emph{forget} distribution $\mathcal D_f$ while preserving performance on a \emph{retain} distribution $\mathcal D_r$. We impose no additional structural constraints on the modification and set $\mathcal M_U = \mathbb {R} ^D$.
Classwise unlearning is obtained from~\eqref{eq:edit} by setting $K=2$, $(\mathcal D_1,\mathcal D_2)=(\mathcal D_r,\mathcal D_f)$,
$(\alpha_1,\alpha_2)=(1,-1)$, and $\mathcal M=\mathcal M_U$, yielding
\begin{equation}
    \mathsf m^\star~=~
\argmin_{\mathsf m \in \mathcal M_U}
\mathbb E_{\rv X\sim\mathcal D_r}
[\mathcal L(\mathrm N_{\theta^\star\odot\mathsf m}(\rv X))]-\mathbb E_{\rv X\sim\mathcal D_f}[\mathcal L(\mathrm N_{\theta^\star\odot\mathsf m}(\rv X))]
.
\tag{\texttt{UNLEARN}}
\label{eq:unlearn}
\end{equation}
Our core challenge is to solve~\eqref{eq:edit} using only synthetic samples, \emph{without access to ground-truth labels or the original loss}.

\subsection{Related Work}
\label{sec:relwork_short}

We briefly situate our work within the literature on vision and language model modification. A comprehensive survey is provided in \cref{app:relwork}.

\textbf{Vision Model Modification}
While structured pruning is well-established for CNNs and ViTs \citep{hoefler2021sparsity,fang2024isomorphic, yu2022width,fang2024isomorphic,yu2022unified}, and classwise unlearning has seen recent progress \citep{jia2023model, fan2023salun}, these tasks are typically treated in isolation. 
Crucially, prior methods for jointly addressing these problems rely heavily on access to labeled data \citep{murti2024discedit}. 
Our work presents the first unified framework for both pruning and unlearning, which operates effectively using only unlabeled synthetic data.

\textbf{LLM Modification}
Efficiency in LLMs is primarily addressed via structured pruning \citep{ma2023llm, men2024shortgpt} or sparsification \citep{ashkboos2024slicegpt}. 
However, these methods are often architecture-specific and do not extend to unlearning. 
By validating our method on both LLMs and vision models, we demonstrate a generalized approach to model modification that bridges the gap between these distinct domains.
\section{Which Components Are Important for Modifying Well-Trained Models?}
\label{sec:method}

We now address the problem of identifying model components critical to predictive performance. We introduce \emph{Subset Fidelity}, a metric that quantifies the local reconstructive capacity of component groups and \emph{High-Fidelity (HiFi)} components.
We show theoretically that maximizing local fidelity minimizes a linear upper bound on the global predictive error. 

\subsection{High-Fidelity Components and the Subset Fidelity Score}
\label{sec:method:hifi}

Our objective is to estimate the impact of removing a subset of input contributions on the model's output, after optimally compensating for this removal. Directly quantifying this effect is difficult, so we introduce the \emph{Subset Fidelity}, a measure of how well a subset of components can locally approximate the layer's output.

\begin{restatable}[Subset Fidelity]{definition}{FSDefn}
\label{defn:FS}
The \emph{fidelity} of a subset of components \(C \subseteq [c^l_{in}]\) in layer \(l\) for output channel \(c\) is defined as
\begin{equation}
    \mathrm{FS}^l_c(C) 
    \coloneqq \max_{\vec{\delta}^l_c\in\RR^{c_{in}^l}}\left( 1 - \frac{ \EE\left[\|\mat{Y}^l_c(\rv{X}) - \sum_{i \in C}\delta^l_{ci}\mat{A}^l_{ci}(\rv{X})\|^2\right]}
    {\EE\left[\|\mat{Y}^l_c(\rv{X})\|^2\right]}\right)
    \label{eq:genfs},
\end{equation}
where \(\vec{\delta}^l_c\) is the \emph{compensation term}.
\end{restatable}

The following properties (proved in \cref{app:proofs:fs_properties}) justify its use as an importance measure.

\begin{restatable}[Properties of Subset Fidelity]{lemma}{FSProp}
\label{lemma:fs_property}
For any subset \(C \subseteq [c^l_{in}]\) in layer \(l\), \textbf{(Boundedness)} \(0 \le \mathrm{FS}^l_c(C) \le 1\) and 
    \textbf{(Monotonicity)} If \(D \subseteq C\), then \(\mathrm{FS}^l_c(D) \le \mathrm{FS}^l_c(C)\).
\end{restatable}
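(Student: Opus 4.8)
The plan is to read both properties straight off the variational definition \cref{eq:genfs}, exploiting that $\mathrm{FS}^l_c(C)$ is a maximum of $1$ minus a ratio of nonnegative quantities, and that the feasible set of compensation terms only grows as $C$ grows.

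\textbf{Boundedness.} For the upper bound I would note that, for every $\vec{\delta}^l_c$, the numerator $\EE[\|\mat{Y}^l_c(\rv{X}) - \sum_{i\in C}\delta^l_{ci}\mat{A}^l_{ci}(\rv{X})\|^2]$ and the denominator $\EE[\|\mat{Y}^l_c(\rv{X})\|^2]$ are expectations of squared norms, hence nonnegative; under the standing nondegeneracy condition $\EE[\|\mat{Y}^l_c(\rv{X})\|^2]>0$, the ratio is $\ge 0$, so the bracketed expression is $\le 1$ pointwise in $\vec{\delta}^l_c$ and therefore its supremum is $\le 1$. For the lower bound I would exhibit the feasible point $\vec{\delta}^l_c = \veczero{c^l_{in}}$: it makes the numerator equal to the denominator, so the bracket equals $0$, and since $\mathrm{FS}^l_c(C)$ is the maximum over all $\vec{\delta}^l_c$ it is at least $0$. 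For completeness I would remark that the maximum is attained, since the inner problem is an ordinary least-squares fit of $\mat{Y}^l_c$ onto the $L^2$-span of $\{\mat{A}^l_{ci}\}_{i\in C}$, which always admits a minimizer.

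\textbf{Monotonicity.} The key observation is that enlarging the index set enlarges the set of reconstructions available to the maximizer. Given $D \subseteq C$, I would take any $\vec{\delta}^l_c \in \RR^{c^l_{in}}$ supported on $D$ and note that it is equally a feasible compensation term for $C$, with $\sum_{i\in C}\delta^l_{ci}\mat{A}^l_{ci}(\rv{X}) = \sum_{i\in D}\delta^l_{ci}\mat{A}^l_{ci}(\rv{X})$ because the coordinates in $C\setminus D$ contribute $0$. Hence every value of the bracketed objective attainable in the $D$-problem is attainable in the $C$-problem, so passing to maxima gives $\mathrm{FS}^l_c(D) \le \mathrm{FS}^l_c(C)$. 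Equivalently, the objective in \cref{eq:genfs} depends on $\vec{\delta}^l_c$ only through its coordinates in $C$, so the $D$-maximization is the $C$-maximization restricted to the subspace $\{\delta^l_{ci}=0 : i \in C\setminus D\}$, and restricting the domain cannot increase a maximum.

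I do not anticipate a real obstacle: both claims reduce to one-line relaxation/monotonicity arguments once the definition is unpacked. The only point needing care is the well-posedness of the ratio and of the maximum, which I would handle via the explicit assumption $\EE[\|\mat{Y}^l_c(\rv{X})\|^2]>0$ together with the least-squares remark above; these can be stated once at the start of \cref{app:proofs:fs_properties} and reused for the remaining fidelity lemmas.
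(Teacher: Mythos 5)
Your proposal is correct and follows essentially the same route as the paper's proof in \cref{app:proofs:fs_properties}: the upper bound from nonnegativity of the squared-error ratio, the lower bound by exhibiting the zero compensation term, and monotonicity by viewing the $D$-problem as the $C$-problem restricted to $\{\delta^l_{ci}=0 : i \in C\setminus D\}$. Your added remarks on well-posedness (the nondegeneracy condition $\EE[\|\mat{Y}^l_c(\rv{X})\|^2]>0$ and attainment of the maximum via least squares) are minor refinements the paper leaves implicit.
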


A larger Subset Fidelity indicates that the subset more effectively reconstructs the output, thereby reducing the error of approximating the sum of components with components from a subset. 
\cref{lemma:fs_property} implies two key insights:
(1) Fidelity serves as a principled measure of component importance, and  
(2) Monotonicity suggests that greedy selection strategies may be effective.

\begin{remark}
    \cref{eq:genfs} is a generalizes the formulation of \citet{elhalabi2022dataefficient}.
    In this work, we focus only on the case where the subset fidelities are measured with the expected squared difference. We leave to future work an exploration of other possible measures of distributional similarity.
\end{remark}

To capture the tradeoff between the size of a subset and its fidelity, we define \textsc{HiFi} Sets.

\begin{restatable}[\((k,\eta)\)-\textsc{HiFi} Set]{definition}{ketaHFS}
\label{def:HiFi}
Given a target subset size \(k\) and a fidelity threshold \(\eta \in (0,1)\), the \((k,\eta)\)-\textsc{HiFi} Set \(S^{k,\eta}_c\) for output channel \(c\) is any subset in \([c_{in}^l]\) satisfying
\begin{equation}
    \mathrm{FS}^l_c(S^{k,\eta}_c) \ge \eta, 
    \quad |S^{k,\eta}_c| \le k.
    \tag{\textsc{HiFi}} \label{eq:HiFi}
\end{equation}
\end{restatable}

Thus, attributing predictive performance to components reduces to finding the \textsc{HiFi} set for a given \((k,\eta)\). 
We can reduce the identification of \textsc{HiFi} sets to solving an optimization problem, the solution of which yields the \emph{Maximum Fidelity Subset}, which contains the components that best recover the layer's output.
\begin{definition}[\(k\)-Maximum Fidelity Subset]
\label{def:MFS}
Given a target subset size \(k\) for layer \(l\), the \emph{Maximum Fidelity Subset} \(S^{l\star}_c\) for channel \(c\) is defined as
\begin{equation}
    S^{l\star}_c = 
    \argmax_{S \subseteq [c_{in}^l],\, |S| = k} 
    \mathrm{FS}^l_c(S).
    \tag{\textsc{k-MFS}} \label{eq:MFS}
\end{equation}
\end{definition}
A simple algorithm for identifying a \((k,\eta)\)-\textsc{HiFi} set is to solve \cref{eq:MFS} for the given \(k\) and check whether its fidelity exceeds \(\eta\).  
If it does not, no such \((k,\eta)\)-\textsc{HiFi} set exists.  
Before proceeding to our theoretical analysis, we empirically verify whether small HiFi sets actually exist in standard models. Our experiments in \cref{sec:expt:hifivalidity} empirically establish the existence of a small subset of components that can achieve high fidelity. Moreover, in \cref{expts:hifieffectiveness}, we validate the effectiveness of HiFi components with the model's predictive performance. \cref{fig:monte} indicates a sample of the results indicating that fewer than 20\% of components can achieve high fidelity (\(\ge 0.8\)).

\begin{figure}[t]
    \centering
    \begin{subfigure}[b]{0.245\textwidth}
        \includegraphics[width=0.7\linewidth]{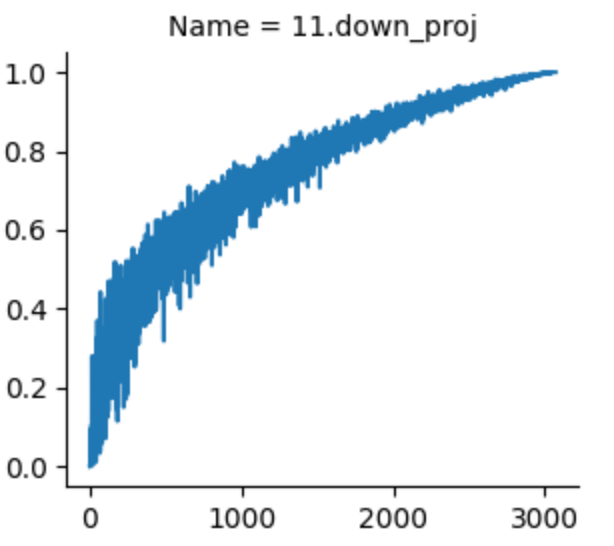}
        \caption{OPT-125M (WikiText)}
        \label{fig:subfig1}
    \end{subfigure}
    \hfill
    \begin{subfigure}[b]{0.245\textwidth}
        \includegraphics[width=0.7\linewidth]{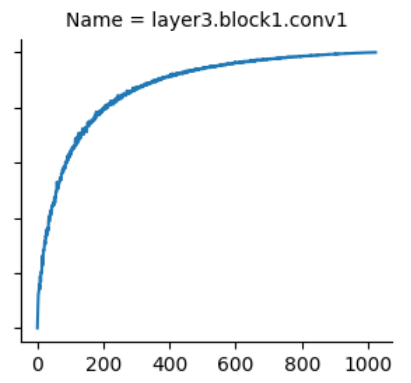}
        \caption{ResNet50 (CIFAR-10)}
        \label{fig:subfig2}
    \end{subfigure}
    \hfill
    \begin{subfigure}[b]{0.245\textwidth}
        \includegraphics[width=0.7\linewidth]{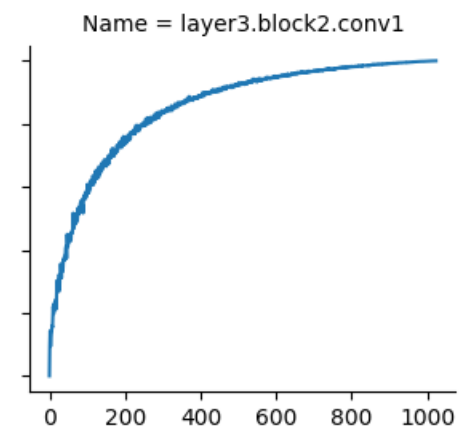}
        \caption{ResNet50 (CIFAR-100)}
        \label{fig:subfig3}
    \end{subfigure}
    \hfill
    \begin{subfigure}[b]{0.245\textwidth}
        \includegraphics[width=0.7\linewidth]{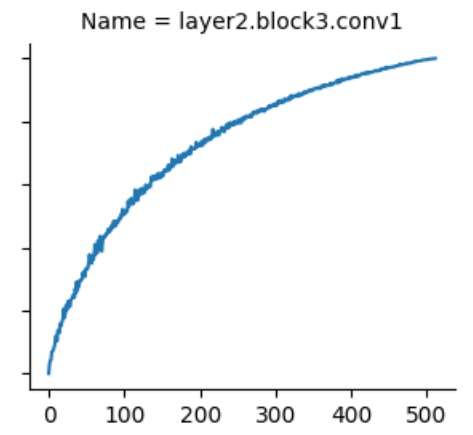}
        \caption{ResNet50 (ImageNet)}
        \label{fig:subfig4}
    \end{subfigure}
    \caption{Monte Carlo estimation of \cref{eq:MFS} across selected layers of various models. The x-axis indicates subset size \(k\), and the y-axis the maximum fidelity found across random samples.}
    \label{fig:monte}
\end{figure}

\subsection{Local Distributional Measures of Component Importance}
\label{sec:hifi:local}

Finding \textsc{HiFi} subsets corresponds to finding subsets that minimize the \(l_2\) reconstruction error while accounting for weight compensation. Additionally, it enables the derivation of a closed-form expression for weight compensation, allowing for accuracy recovery without requiring fine-tuning.

\paragraph{Bounding Global Error via Local Modification} We now show that the influence of a component on its immediate layer output provides a tractable proxy for its overall effect on model predictions. 
The \emph{global error} is the expectation of the squared difference in the predictions of a network under a modification.

\begin{restatable}[Local to Global]{theorem}{LoctoGlob}
\label{thm:HiFi_global}

Consider a network \(\mathrm{N}_\theta\) as defined in \cref{sec:rel_work_common}. Let \(\mat{M}^l\) be a mask modifying parameters at layer \(l\), and let \(\vec{m}^l_c\) be the mask vector for output channel \(c\).
Assume there exist scalars \(r^\ell > 0\) for all layers \(\ell > l\) such that \(\|\mat{\Phi}^\ell_c(\rv{X})\|_F \ge r^\ell\) almost surely. Then, 
\begin{align}
\mathbb{E}\!\left[\| \mathrm{N}_\theta(\rv{X}) - \mathrm{N}_{\theta\odot \mat{M}^l}(\rv{X}) \|^2\right]
\le \mathcal{O} \left(\sum_{c=1}^{c_{out}^l} \EE\left[\|\mat{Y}^l_c(\rv{X}) - \sum_{i \in C}m^l_{ci}\mat{A}^l_{ci}(\rv{X})\|^2\right]
\right)
\label{eq:recon_err}
\end{align}
\end{restatable}

\begin{proof}[Sketch]
    The proof relies on the propagation of error through Lipschitz-continuous layers. See \cref{app:proofs:loc2glob}.
\end{proof}
\cref{thm:HiFi_global} upper-bounds the global error, given by the left-hand side, by a linear function of the local reconstruction errors for each channel in layer $l$. This implies that global error grows at most linearly with local error, making local fidelity a practical, architecture-agnostic proxy for component influence. 
The theorem requires that the networks discussed in this work are Lipschitz continuous under suitable conditions. While CNNs are known to be Lipschitz continuous \citep{yu2018nisp}, transformers are not \citep{qi2023lipsformer}. 
In \cref{corollary:norm}, we show that this is not the case for \emph{well-trained} transformers.

\begin{remark}    
The leading constant in the order notation quantifies the amplification of local errors through subsequent layers and activations, and is independent of the data distribution, depending only on the model's architecture. 
Empirical estimates of the constant reported in \cref{app:exp:consts} demonstrate the practicality of these constants.
\end{remark}

\paragraph{Subset Fidelity for Individual Components}
Next, we show that both the compensation term and the singleton fidelity scores admit closed-form expressions, thus motivating their use in this work. 
A derivation is provided in \cref{app:proofs:fscomp}.

\begin{restatable}[Compensation and Singleton Fidelity]{proposition}{corr}
    \label{lemma:FSComp}
    For the \(l_2\) reconstruction error, the optimal compensation term \(\delta^\star_c\), which is the value at which the fidelity score is computed according to \cref{eq:genfs} for a subset \(C\), is given by,
    \begin{equation}
    \delta_{ci}^{l\star}(C) =
    \begin{cases}
      1 + (({\mat{Q}^l_c}[C,C])^{-1})_i^\top {\mat{Q}^l_c}{[C,\overline{C}]}\vecone{n-k} & \text{if } i \in C\\
      0 & \text{if } i \notin C
    \end{cases}
    \tag{\textsc{FS}}
    \end{equation}
    where \(\mat{Q}^l_c \in \mathbb{R}^{c_{in}^l \times c_{in}^l}\) is the \emph{component similarity matrix (CSM)} for channel \(c\), with entries \((\mat{Q}^l_c)_{ij} = \mathbb{E}[\langle \mat{A}^l_{ci}(\rv{X}), \mat{A}^l_{cj}(\rv{X}) \rangle]\).
    The singleton fidelity scores are:
    \begin{align}
    \label{eq:fidscore}
    s^l_{ci} = \mathrm{FS}_c^l(\{i\}) 
    = 1 - 
    \frac{\EE[\|\mat{Y}_c^{l}(\rv{X}) - \alpha^l_{ci}\mat{A}^{l}_{ci}(\rv{X})\|^2]}
    {\EE[\|\mat{Y}_c^{l}(\rv{X})\|^2]},
    \quad
    \alpha_{ci}^l =
    \frac{\EE[\langle \mat{Y}_c^{l}(\rv{X}), \mat{A}^{l}_{ci}(\rv{X})\rangle]}
    {\EE[\|\mat{A}^{l}_{ci}(\rv{X})\|^2]}.
    \end{align}
\end{restatable}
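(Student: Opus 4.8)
The plan is to solve, in closed form, the inner maximization defining $\mathrm{FS}^l_c(C)$ in \cref{eq:genfs}. Assuming $\EE[\|\mat{Y}^l_c(\rv{X})\|^2]>0$ so the ratio is well defined, and since the denominator is independent of $\vec\delta^l_c$, this maximization is equivalent to \emph{minimizing} the reconstruction error $g(\vec\delta^l_c)\coloneqq\EE[\|\mat{Y}^l_c(\rv{X})-\sum_{i\in C}\delta^l_{ci}\mat{A}^l_{ci}(\rv{X})\|^2]$ over vectors $\vec\delta^l_c\in\RR^{c^l_{in}}$ supported on $C$. First I would invoke the additive decomposition $\mat{Y}^l_c(\rv{X})=\sum_{i=1}^{c^l_{in}}\mat{A}^l_{ci}(\rv{X})$ from \cref{sec:rel_work_common} to rewrite the residual as $\sum_{i\in C}(1-\delta^l_{ci})\mat{A}^l_{ci}(\rv{X})+\sum_{i\notin C}\mat{A}^l_{ci}(\rv{X})$, and introduce $\vec\beta^l_c\coloneqq\vecone{c^l_{in}}-\vec\delta^l_c$, noting that $\delta^l_{ci}=0$ for $i\notin C$ forces $\beta^l_{ci}=1$ there. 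Expanding the squared Frobenius norm bilinearly and exchanging the finite sum with the expectation gives $g=(\vec\beta^l_c)^\top\mat{Q}^l_c\,\vec\beta^l_c$, where $\mat{Q}^l_c$ is exactly the component similarity matrix of \cref{eq:recon_err}; in particular $\mat{Q}^l_c$ is symmetric and positive semidefinite, being a Gram matrix of the $L^2$-valued contributions.

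Next I would minimize this quadratic over the coordinates free to vary, namely those indexed by $C$. Writing $n\coloneqq c^l_{in}$, $k\coloneqq|C|$, and partitioning $\vec\beta^l_c$ into its restriction $\vec\beta_C$ to $C$ and the fixed block $\vecone{n-k}$ on $\overline{C}$, we get $g=\vec\beta_C^\top\mat{Q}^l_c[C,C]\vec\beta_C+2\vec\beta_C^\top\mat{Q}^l_c[C,\overline{C}]\vecone{n-k}+\vecone{n-k}^\top\mat{Q}^l_c[\overline{C},\overline{C}]\vecone{n-k}$. This is a convex quadratic in $\vec\beta_C$ (its Hessian $2\mat{Q}^l_c[C,C]$ is positive semidefinite), so any stationary point is a global minimizer, and the first-order condition is the normal equation $\mat{Q}^l_c[C,C]\vec\beta_C=-\mat{Q}^l_c[C,\overline{C}]\vecone{n-k}$. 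Under the stated invertibility of $\mat{Q}^l_c[C,C]$ this has the unique solution $\vec\beta_C=-(\mat{Q}^l_c[C,C])^{-1}\mat{Q}^l_c[C,\overline{C}]\vecone{n-k}$, and translating back via $\delta^{l\star}_{ci}=1-\beta^l_{ci}$ yields $\delta^{l\star}_{ci}=1+\bigl((\mat{Q}^l_c[C,C])^{-1}\mat{Q}^l_c[C,\overline{C}]\vecone{n-k}\bigr)_i$ for $i\in C$ and $0$ otherwise, which is the claimed formula (the $i$-th coordinate of $M\vec v$ equals the $i$-th row of $M$ applied to $\vec v$).

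Finally I would specialize to a singleton $C=\{i\}$. Then $\mat{Q}^l_c[C,C]$ is the scalar $\EE[\|\mat{A}^l_{ci}(\rv{X})\|^2]$, and $\mat{Q}^l_c[C,\overline{C}]\vecone{n-1}=\sum_{j\neq i}\EE[\langle\mat{A}^l_{ci},\mat{A}^l_{cj}\rangle]=\EE[\langle\mat{A}^l_{ci},\,\mat{Y}^l_c-\mat{A}^l_{ci}\rangle]$ again by the additive decomposition; substituting into the general formula and cancelling the $\EE[\|\mat{A}^l_{ci}\|^2]$ term gives $\delta^{l\star}_{ci}=\EE[\langle\mat{Y}^l_c(\rv{X}),\mat{A}^l_{ci}(\rv{X})\rangle]/\EE[\|\mat{A}^l_{ci}(\rv{X})\|^2]=\alpha^l_{ci}$, and plugging $\alpha^l_{ci}$ back into \cref{eq:genfs} produces the stated score $s^l_{ci}$. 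The one point I would treat with care — rather than a genuine obstacle — is the invertibility hypothesis: if $\{\mat{A}^l_{ci}\}_{i\in C}$ are linearly dependent in $L^2$ then $\mat{Q}^l_c[C,C]$ is singular, but $g$ remains a convex quadratic bounded below, so a minimizer exists and every solution of the normal equation attains the same optimal fidelity; replacing the inverse by the Moore--Penrose pseudoinverse then recovers a canonical choice with no loss of generality. Everything else is a routine least-squares computation.
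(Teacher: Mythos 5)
Your proposal is correct and follows essentially the same route as the paper's proof: the substitution $\vec\beta = \vecone{c^l_{in}} - \vec\delta^l_c$ is exactly the paper's $\mathbf{u} = \mathbf{1} - \boldsymbol{\delta}$, and the block partition, convex quadratic, and normal equations are identical. You go slightly further by explicitly carrying out the singleton specialization to recover $\alpha^l_{ci}$ and by noting the pseudoinverse fix when $\mat{Q}^l_c[C,C]$ is singular, both of which the paper leaves implicit.
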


Note that solving \cref{eq:MFS} exactly is still equivalent to a constrained binary quadratic optimization problem, known to be NP-hard \citep{arora_barak}.  
Viewing \(\mat{Q}^c\) as the adjacency matrix of a weighted graph, maximizing \cref{eq:MFS} corresponds to identifying a clique of size \(k\), the decision version of the \textsc{Maximum Clique} problem.  
Intuitively, such cliques correspond to groups of components whose joint removal maximally increases the reconstruction error.

\paragraph{Computing the k-MFS}
Since fidelity is monotonic, a natural heuristic selects the \(k\) components with the highest singleton fidelities \(s^l_{ci}\); we call this strategy: \texttt{NAIVE}.  
To compute the set of highest fidelity, the k-MFS, we identify conditions under which the \texttt{NAIVE} selection strategy is optimal. 
\begin{restatable}{theorem}{optlabel}
\label{thm:uncorrelated}
Consider output channel \(c\) in the \(l^{th}\) layer of a network described in \cref{sec:rel_work_common}. 
Let the \(s^l_{ci}\) be defined according to \cref{eq:fidscore} and \(S^{l\star}_c\) be defined according to \cref{def:MFS}.
Let 
\(\hat{S}^l_c = \{i \mid s^l_{ci}\geq s_{(k)}\}\)
where \(s_{(k)}\) is the \(k^{th}\) largest value of \(\vec{s}^l_c\). 
Assuming that there are no ties, \(|\hat{S}^l_c| = k\).
If \(\EE[\langle \mat{A}^l_{ci}(\rv{X}), \mat{A}^l_{cj}(\rv{X}) \rangle] = 0 \ \ \forall i\neq j\), then \(\hat{S}^l_c =  S^{l\star}_c\).
\end{restatable}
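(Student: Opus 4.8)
The plan is to exploit the orthogonality hypothesis to show that the component similarity matrix $\mat{Q}^l_c$ becomes diagonal, which collapses Subset Fidelity into a modular (additive) set function; the theorem then follows from the elementary fact that a modular function under a cardinality constraint is maximized by greedy top-$k$ selection.

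First I would record the consequences of the assumption $\EE[\langle \mat{A}^l_{ci}(\rv{X}), \mat{A}^l_{cj}(\rv{X}) \rangle] = 0$ for $i \neq j$: the CSM is $\mat{Q}^l_c = \mathrm{diag}(q_1,\dots,q_{c_{in}^l})$ with $q_i := \EE[\|\mat{A}^l_{ci}(\rv{X})\|^2] \ge 0$. Writing $\mat{Y}^l_c(\rv{X}) = \sum_i \mat{A}^l_{ci}(\rv{X})$ and expanding, orthogonality gives $\EE[\|\mat{Y}^l_c(\rv{X})\|^2] = \sum_i q_i$ and $\EE[\langle \mat{Y}^l_c(\rv{X}), \mat{A}^l_{ci}(\rv{X})\rangle] = q_i$.

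Second, I would compute $\mathrm{FS}^l_c(C)$ in closed form. Expanding the numerator of \cref{eq:genfs} and invoking orthogonality annihilates every cross term $\delta_{ci}\delta_{cj}$ with $i \neq j$, leaving $\sum_i q_i - 2\sum_{i\in C}\delta_{ci}q_i + \sum_{i\in C}\delta_{ci}^2 q_i$, which is separable over $i\in C$ and minimized at $\delta_{ci}=1$ for $q_i>0$ (for $q_i=0$ the summand vanishes identically, so the choice is immaterial — consistent with \cref{lemma:FSComp}, where $\mat{Q}^l_c[C,\overline{C}]=0$ forces $\delta^{l\star}_{ci}(C)=1$). Hence $\mathrm{FS}^l_c(C) = 1 - \frac{\sum_{i\notin C}q_i}{\sum_j q_j} = \frac{\sum_{i\in C}q_i}{\sum_j q_j}$, and in particular $s^l_{ci} = \mathrm{FS}^l_c(\{i\}) = q_i/\sum_j q_j$ (matching \cref{eq:fidscore}, since $\alpha^l_{ci}=1$ here). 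I would also note $\sum_j q_j = \EE[\|\mat{Y}^l_c(\rv{X})\|^2]>0$, else $\mat{Y}^l_c\equiv 0$ and the fidelity is trivial. Since $\mathrm{FS}^l_c(C)$ is an increasing affine image of the modular map $C \mapsto \sum_{i\in C} q_i$, the maximization $\argmax_{|S|=k}\mathrm{FS}^l_c(S)$ is solved exactly by taking the $k$ indices with the largest $q_i$, equivalently the $k$ indices with the largest $s^l_{ci}$ (they differ only by the positive global constant $\sum_j q_j$). Under the no-ties assumption this top-$k$ index set is unique, has cardinality exactly $k$, and equals $\hat{S}^l_c=\{i : s^l_{ci}\ge s_{(k)}\}$; therefore $\hat{S}^l_c = S^{l\star}_c$, concluding the argument.

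The genuinely delicate points are bookkeeping rather than mathematics: handling degenerate channels or components ($q_i=0$, or $\EE[\|\mat{Y}^l_c(\rv{X})\|^2]=0$) so that $\mathrm{FS}$ stays well-defined and the compensation optimizer agrees with \cref{lemma:FSComp}, and checking that $S^{l\star}_c$ (whose defining $\argmax$ ranges over exact-size-$k$ subsets) and $\hat{S}^l_c$ have the same cardinality $k$ so that the conclusion is literal set equality. The only ``obstacle,'' such as it is, is recognizing that orthogonality is precisely the condition under which $\mathrm{FS}$ is modular — once that is in hand the rest is immediate.
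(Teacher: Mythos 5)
Your proposal is correct and follows essentially the same route as the paper's proof: orthogonality makes \(\mat{Q}^l_c\) diagonal, the optimal compensation becomes \(\delta_{ci}=1\) so that \(\mathrm{FS}^l_c(S)=\sum_{i\in S}q_i/\Tr(\mat{Q}^l_c)\) is modular, and the cardinality-constrained maximum is the top-\(k\) set of singleton scores. Your extra care about degenerate components (\(q_i=0\)) and the cardinality bookkeeping is a minor refinement the paper glosses over, but it does not change the argument.
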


\begin{proof}[Sketch]
    Under the assumptions, the objective simplifies from quadratic to linear. See \cref{app:proofs:uncorrelated}.
\end{proof}

\begin{remark}
\cref{thm:uncorrelated} connects a statistical property of the representations to the efficient discovery of \textsc{HiFi} components. 
It states that when the input contributions are pairwise uncorrelated, the optimal subset is the set of components with the highest fidelity score. 
\end{remark}

Although the assumption of uncorrelated features rarely holds exactly in practice, it offers a sound theoretical justification for \texttt{NAIVE} \textsc{HiFi} selection. We demonstrate the practical effectiveness of \texttt{NAIVE} \textsc{HiFi} selection through our experiments in \cref{sec:expts}.

\section{Modifying Model Behavior using HiFi Sets}
\label{sec:cobra}

We now propose \textsc{ModHiFi}, a unified algorithmic framework for model modification using only distributional access.
We apply this framework to two distinct tasks: structured pruning \textsc{(ModHiFi-P)} and classwise unlearning \textsc{(ModHiFi-U)}. The central idea is to identify high-fidelity (\textsc{HiFi}) components and then modify them in a targeted manner using a unified algorithmic procedure, as shown in \cref{alg:editx}. The two tasks operate as duals: pruning \emph{retains} the high-fidelity components necessary for general performance, while unlearning \emph{removes} the high-fidelity components most discriminative for a specific target class. Additional details, including complexity and implementation specifics, are provided in \cref{app:algo}.

\paragraph{Structured Pruning}
To address \cref{eq:sp}, where the objective is to remove entire input channels (or features) that contribute minimally to the model's predictive performance. In convolutional architectures, we identify and remove input channels across all layers that do not appear in the \textsc{HiFi} sets of any output channel of the residual-coupled layers. For CNNs, pruning is applied to the input channels of convolutional layers. For LLMs, we target the input features of the MLP down-projection matrices (\(W^D\)). After pruning, we compute the optimal compensation term \(\delta^\star\) (derived in \cref{lemma:FSComp}) using the remaining weights. This step restores the fidelity of the layer output without requiring gradient-based fine-tuning.

\begin{wrapfigure}{r}{0.55\textwidth}
\vspace{-25pt}
\begin{minipage}{0.55\textwidth}
\begin{algorithm}[H]
{\small
    \caption{\textsf{ModHiFi-X}}
    \label{alg:editx}
    \begin{algorithmic}[1]
        \REQUIRE Model parameters \(\theta\), layer \(l\), \(k\) components, threshold \(\eta\), data \(\mathcal{D}\)
        \ENSURE Modified parameters \(\theta^E\)
        \STATE \textbf{Estimate Fidelity:} Compute singleton scores \(\mathbf{s}^l\) on \(\mathcal{D}\) via \cref{eq:fidscore}.
        \STATE \textbf{Select HiFi Set:} \(H_l \gets \text{Top-}k \text{ indices of } \mathbf{s}^l\).
        \IF{\(X = \text{Prune}\)}
            \FOR{\(i \in [c_{in}^l] \setminus \{i \mid (c,i) \in H_l\}\)}
                \STATE \(\mat{W}_{c,i}^l \gets \mat{0} \quad \forall c \in [c_{out}^l]\)
            \ENDFOR
            \STATE Apply compensation \(\delta^\star\) to remaining weights.
        \ELSIF{\(X = \text{Unlearn}\)}
            \FOR{\((c, i) \in H_l\)}
                \STATE \(\mat{W}_{c,i}^l \gets \mat{0}\)
            \ENDFOR
        \ENDIF
        \RETURN \(\hat{\theta}\)
    \end{algorithmic}
}
\end{algorithm}
\end{minipage}
\vspace{-25pt}
\end{wrapfigure}
\paragraph{Class Unlearning}
The goal of \cref{eq:unlearn} is to erase the influence of a specific \emph{forget class}. To perform unlearning, we first compute \textsc{HiFi} sets using only samples from the class we wish to forget. The components in these sets are then zeroed out, effectively erasing the influence of that class. This causes the model's predictive performance on the forgotten class to degrade, without significantly impacting the performance of other classes.

\paragraph{Fidelity Estimation}
For vision models, the singleton fidelity score \(\mathrm{FS}_c^l(\cdot)\) can be estimated efficiently using distributional access to the input data, i.e., synthetic samples. 
In practice, for vision models, we estimate the scalar coefficients \(\alpha_{ci}^l\) directly via batched forward passes on synthetic samples. A large \(\alpha_{ci}^l\) indicates a high-fidelity component. 
For LLMs, we develop a tractable Cholesky-based heuristic to estimate the score, providing details in \cref{app:algo:identhifi}.
\section{Experiments}
\label{sec:expts}

We empirically validate our framework by addressing four central questions:

\vspace{-8pt}
\begin{questions}[leftmargin=*,label=({Q}{{\arabic*}})]
    \item\label{questionSubsetExistence}\textbf{Existence of HiFi components}. Do a small subset of components exist that can achieve high fidelity?
    \item\label{questionHiFiEffectiveness}\textbf{Effectiveness of \textsc{HiFi} components}. Do \textsc{HiFi} components accurately represent those components important for the predictive performance?
    \item\label{questionDimeP}\textbf{Effectiveness of using \textsc{HiFi} components for pruning using {ModHiFi-P}}. Does ModHiFi-P result in better accuracy-sparsity tradeoff compared to structured pruning algorithms for vision tasks and language modeling tasks?
    \item\label{questionDimeU}\textbf{Effectiveness of using \textsc{HiFi} components for machine unlearning using {ModHiFi-U}}. Is it possible to perform machine unlearning, as posed by Jia et al. \citep{jia2023model}, without finetuning? If so, how does ModHiFi-U compare to their method?
\end{questions}

\subsection{Details of the experimental setup}
\label{sec:expts:setup}

\paragraph{Models, Datasets, and Evaluation} We conduct experiments on ResNet-50/101 \citep{he2016deep}, VGG19 \citep{simonyan2015very}, Swin-Transformer \citep{liu2021swintransformerhierarchicalvision} and Llama-2-7B \citep{touvron2023llama}, benchmarking against relevant experiments from related literature \citep{ashkboos2024slicegpt, ma2023llm}. For vision tasks, we measure the classification accuracy, and for NLP tasks, we use EleutherAI's \verb|lm-eval-harness| \citep{eval-harness}.

\paragraph{Distributional Access} For CIFAR10/100 \citep{krizhevsky2009learning}, we use synthetically generated images as detailed in \cref{app:exp:synth}.
We use Alpaca \citep{alpaca} (a synthetic dataset) and WikiText-2 \citep{merity2016pointersentinelmixturemodels} as calibration data for NLP tasks following related literature \citep{ashkboos2024slicegpt, ma2023llm}. We provide \textbf{ablations} to measure the impact of synthetic data quality in \cref{app:exp:synth:ablations}.

\paragraph{Compute platform and implementation details} We discuss the compute platform, implementation details, and hyperparameters used for our experiments in \cref{app:compute}.

\subsection{\texorpdfstring{Existence of \textsc{HiFi} components: Exploring \labelcref{questionSubsetExistence}}{Existence of HiFi components: Exploring Q1}}
\label{sec:expt:hifivalidity}

To empirically assess whether small subsets can achieve high fidelity, we estimate $S^{\star}_c$ by sampling random subsets of size $k$ across different architectures and selecting the subset with the highest fidelity. Detailed results are presented in \cref{app:hifi}.  
\begin{obs}
Across all evaluated models, each layer typically contains a small subset of input channels (fewer than 20\%) that achieves high subset fidelity (\(\ge 0.8\)).
\end{obs}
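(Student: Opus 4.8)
Because this statement is an empirical observation rather than a theorem, the plan is to \emph{validate} it by Monte Carlo estimation of the Maximum Fidelity Subset (\cref{eq:MFS}) across the evaluated models---OPT-125M on WikiText and ResNet-50 on CIFAR-10/100/ImageNet---over a representative selection of layers, exactly as visualized in \cref{fig:monte}. The idea is to sweep the subset size $k$ from small to large and, for each $k$, estimate $\mathrm{FS}^l_c(S^{l\star}_c)$ \emph{from below}; if even a crude estimate crosses $0.8$ at some $k < 0.2\, c_{in}^l$, then a $(k,0.8)$-\textsc{HiFi} set (\cref{def:HiFi}) demonstrably exists in that layer.

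First I would fix a model and layer $l$ and, for each output channel $c$, precompute the component similarity matrix $\mat{Q}^l_c$ from forward passes on the available (synthetic or calibration) data, since every fidelity computation reduces to a quadratic form in $\mat{Q}^l_c$. For a candidate subset $C$ of size $k$, the fidelity is evaluated in closed form using the optimal compensation term $\delta^\star_c$ from \cref{lemma:FSComp}; this eliminates the inner optimization and makes each subset evaluation a single linear solve against the submatrix $\mat{Q}^l_c[C,C]$. I would then sample a large batch of random size-$k$ subsets, evaluate each, and record the maximum fidelity found, aggregating over output channels $c$ to produce the per-layer curve. The key point is that every sampled subset yields a lower bound $\mathrm{FS}^l_c(C) \le \mathrm{FS}^l_c(S^{l\star}_c)$, so the estimate is conservative: crossing the threshold via sampling guarantees the true optimum also crosses it. Reading off the smallest $k$ at which the curve reaches $0.8$ then yields the stated bound.

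Two structural facts make the sweep well-behaved. By monotonicity (\cref{lemma:fs_property}), $\mathrm{FS}^l_c$ is nondecreasing in the subset, so the estimated max-fidelity curve is, up to sampling noise, monotone in $k$; this lets me start at small $k$ and stop as soon as the threshold is met. Boundedness, $0 \le \mathrm{FS}^l_c(C) \le 1$, ensures the tracked quantity is a well-scaled fraction, so the criterion ``$\ge 0.8$'' applies uniformly across layers of very different widths.

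The main obstacle is that computing $S^{l\star}_c$ exactly is NP-hard---maximizing \cref{eq:MFS} is the decision version of \textsc{Maximum Clique} on the graph with adjacency $\mat{Q}^l_c$---so random sampling can only \emph{underestimate} the achievable fidelity, and for very wide layers the number of size-$k$ subsets is astronomically large, raising the risk that sampling misses a strong clique and reports a pessimistic curve. I would mitigate this by (i) leaning on the lower-bound property, which makes any positive finding trustworthy even under incomplete sampling, and (ii) scaling the sampling budget with layer width and reporting estimates over multiple seeds to bound the Monte Carlo variance. The honest reading of the observation is therefore that small \textsc{HiFi} sets \emph{provably exist}---a good subset is explicitly exhibited---while the converse claim, that no smaller subset could suffice, is neither needed nor asserted.
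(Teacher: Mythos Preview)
Your proposal is correct and matches the paper's own validation essentially point-for-point: the paper also estimates \cref{eq:MFS} by randomly sampling subsets of each size $k$ (1000 for ResNet-50, 100 for OPT-125M), taking the maximum fidelity found, and noting explicitly that this yields a lower bound on the true curve; the observation is then read off from the resulting plots in \cref{fig:monte} and \cref{app:hifi:monte}. Your additional framing via monotonicity and boundedness, and the remark that a positive finding is trustworthy despite incomplete sampling, are nice touches but do not depart from the paper's method.
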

This empirical observation suggests that in trained models, only a small subset of components in each layer is responsible for the model's prediction. 
This observation aligns with the success of structured pruning algorithms in constructing small subnetworks with high statistical performance.

\subsection{\texorpdfstring{Effectiveness of \textsc{HiFi} components: Exploring \labelcref{questionHiFiEffectiveness}}{Effectiveness of HiFi components: Exploring Q2}}
\label{expts:hifieffectiveness}
\begin{figure}[t]
    \centering
    \begin{subfigure}[b]{0.24\textwidth}
        \includegraphics[width=\linewidth]{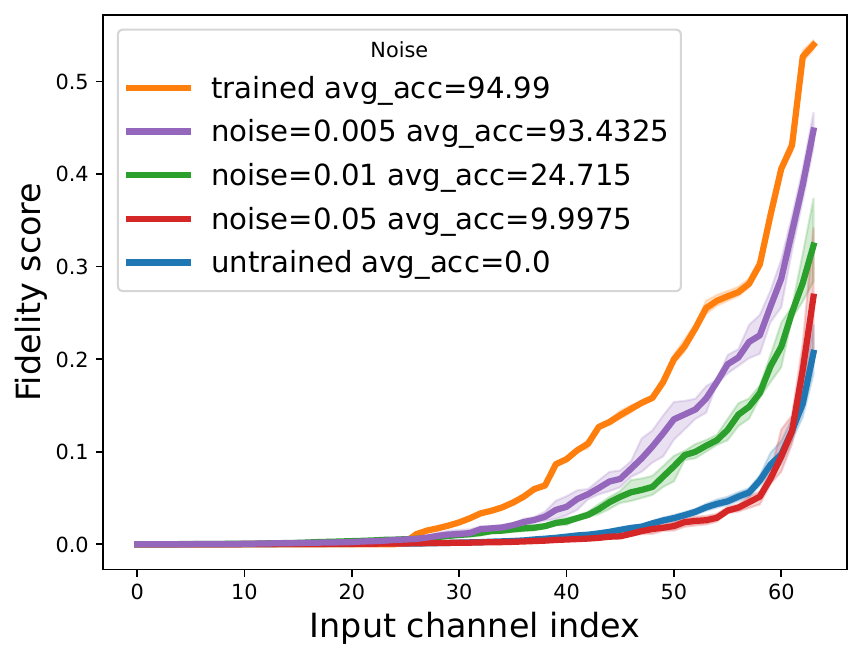}
    \end{subfigure}
    \hfill
    \begin{subfigure}[b]{0.24\textwidth}
        \includegraphics[width=\linewidth]{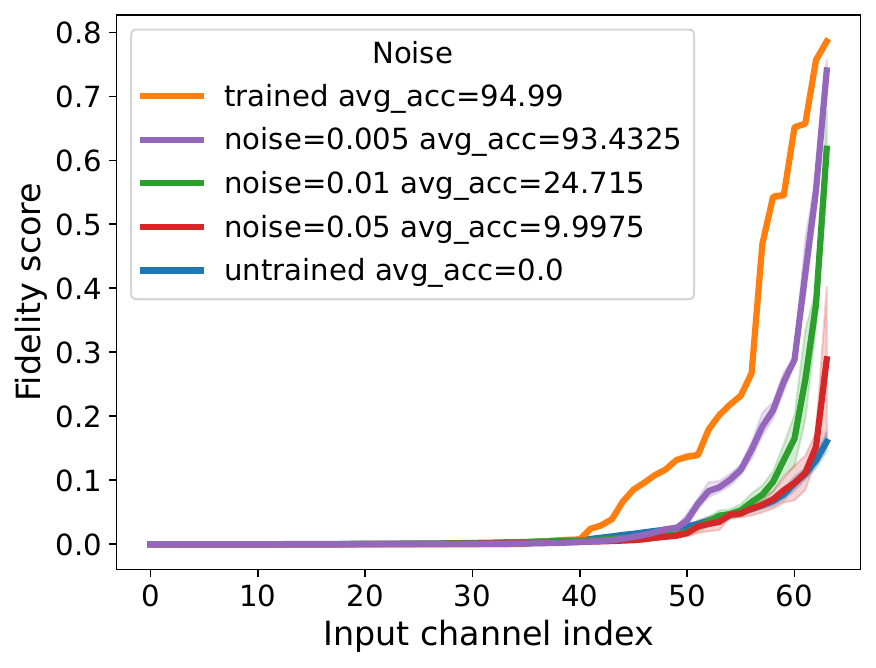}
    \end{subfigure}
    \hfill
    \begin{subfigure}[b]{0.24\textwidth}
        \includegraphics[width=\linewidth]{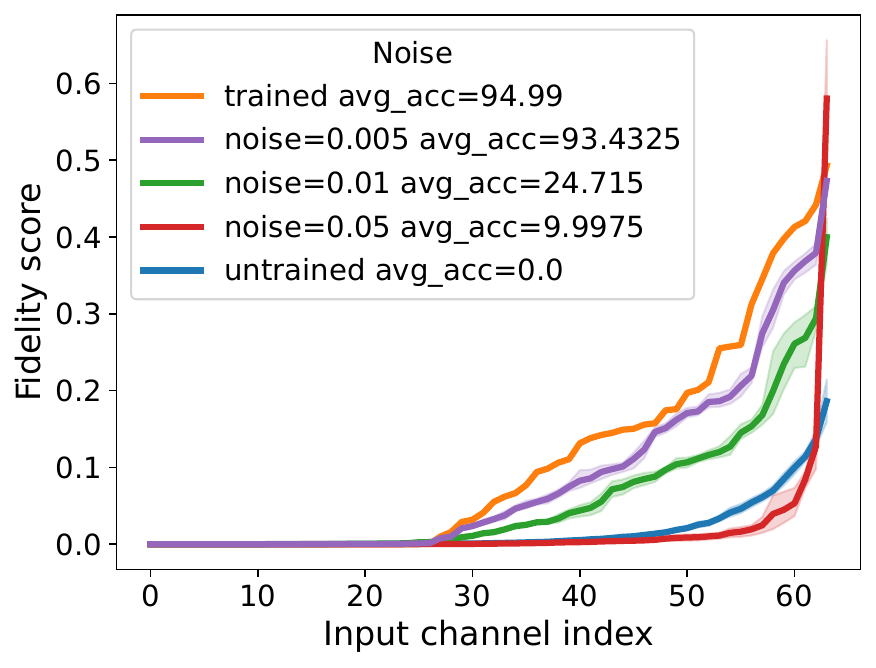}
    \end{subfigure}
    \hfill
    \begin{subfigure}[b]{0.24\textwidth}
        \includegraphics[width=\linewidth]{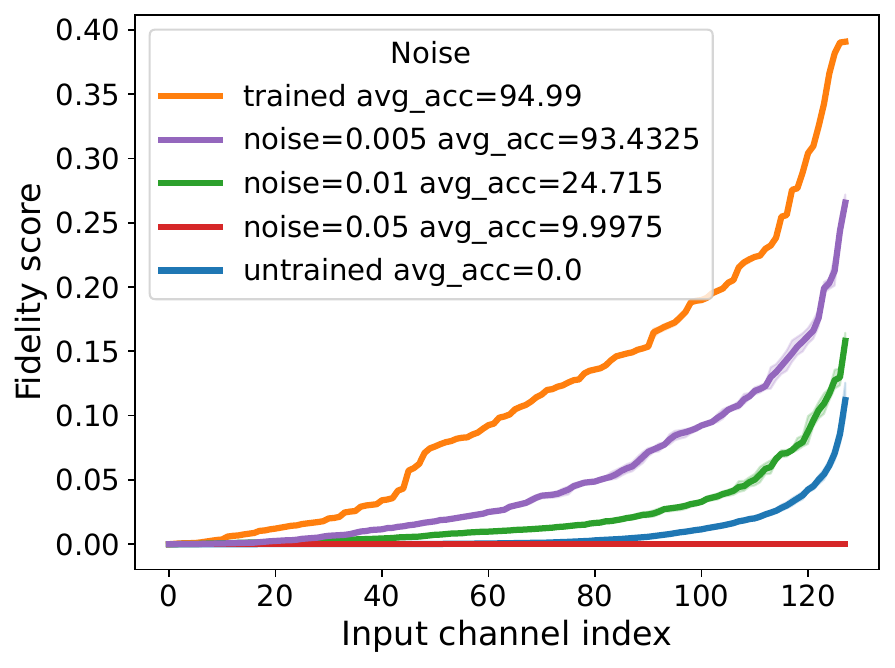}
    \end{subfigure}
    \caption{Fidelity score of selected layers of a ResNet-50 model on CIFAR10 and the effect of noise on the fidelity score.}
    \label{fig:noise_fs}
\end{figure}
To answer \labelcref{questionHiFiEffectiveness}, and verify whether \textsc{HiFi} components are the components that matter for the final predictive performance, we measure the effects of the fidelity of a component getting destroyed by noising. For a ResNet-50 on CIFAR-10, when 20\% of the \textsc{HiFi} components are perturbed with a zero mean Gaussian noise with standard deviation of 0.01, the accuracy of the model drops by around 12\%. In contrast, perturbing 80\% of the non-\textsc{HiFi} components identically results in an accuracy drop of only 1\%. At 50\% of components with a noise of standard deviation 0.02, the accuracy drops by 85\% when \textsc{HiFi} components are noised compared to only around 1.4\% when non-\textsc{HiFi} components are noised. In \cref{app:hifi:robust}, we make similar observations across various models and tasks. In \cref{app:hifi:counter}, we additionally performed experiments where we compare the \emph{removal} of \textsc{HiFi}, non \textsc{HiFi}, and random sets of the same size and make similar observations.

\subsection{\texorpdfstring{Structured Pruning Experiments: \labelcref{questionDimeP}}{Structured Pruning Experiments: Q3}}

\subsubsection{Vision Models}

\paragraph{Baselines} We compare against the state-of-the-art structured pruning algorithms specialized for pruning vision models \citep{narshana2022dfpc, luo2017thinet,wang2021neural,chen2021trainonceoneshotneural}, and present additional results on other architectures and datasets in \cref{app:prune} where we make similar observations.
Following \citep{2208.11580}, we update the batch norm statistics using the data from distributional access.
{
\setlength{\textfloatsep}{6pt}
\begin{table}[t]
    \caption{Comparison of pruning methods on ResNet50 evaluated on ImageNet.}
    \label{tab:imagenet_prune}
    \centering
    \small
    \setlength{\tabcolsep}{3.5pt}
    \begin{tabular}{@{}lccccc@{}}
        \toprule
        \textbf{Algorithm} & \textbf{Accuracy} & \textbf{FLOP Reduction} & \textbf{Param Reduction} & \textbf{CPU Speedup} & \textbf{GPU Speedup} \\
        \midrule
        Unpruned & 76.1 & 1x & 1x & 1x & 1x \\
        GReg-2~\citep{wang2021neural} & 73.9 & 3.02x & 2.31x & 1.36x & 1.53x \\
        OTO~\citep{chen2021trainonceoneshotneural} & 74.7 & 2.86x & 2.81x & 1.25x & 1.45x \\
        DepGRAPH~\citep{fang2023depgraphstructuralpruning} & 75.83 & 2.07x & - & - & - \\
        ThiNet~\citep{luo2017thinet} & 71.6 & \underline{3.46x} & \underline{2.95x} & 1.38x & 1.50x \\
        DFPC (30)~\citep{narshana2022dfpc} & \underline{75.9} & 1.98x & 1.84x & 1.42x & 1.53x \\
        DFPC (54)~\citep{narshana2022dfpc} & 73.80 & \underline{3.46x} & 2.65x & \underline{2.37x} & \textbf{2.38x} \\
        \textbf{Ours} & \textbf{76.70} & 2.17x & 1.47x & 1.69x & \underline{1.70x} \\
        \textbf{Ours} & 73.82 & \textbf{3.66x} & \textbf{3.05x} & \textbf{2.42x} & \textbf{2.38x} \\
        \bottomrule
    \end{tabular}
    \vspace{6pt}
    \captionof{table}{Comparison of pruning methods on ResNet50 with CIFAR10 \textit{(ST: Synthetic Tuning)}.}
    \label{tab:cifar10_prune}
    \centering
    \small
    \begin{tabular}{lccc}
        \toprule
        \textbf{Algorithm} & \textbf{Accuracy} & \textbf{FLOP Reduction} & \textbf{Param Reduction} \\
        \midrule
        Unpruned & 94.99 & 1x & 1x \\
        DFPC~\citep{narshana2022dfpc} & 90.25 & 1.46x & 2.07x \\
        $L_2$~\citep{li2017pruning} & 15.91 & \textbf{4.07x} & \underline{4.71x} \\
        $L_2$ w/ ST~\citep{li2017pruning} & 90.12 & \textbf{4.07x} & \underline{4.71x} \\
        \textbf{Ours} & \textbf{91.02} & \textbf{4.07x} & \textbf{5.36x} \\
        \bottomrule
    \end{tabular}
\end{table}
}

\paragraph{Observations} We find that our method yields a better accuracy-vs-sparsity tradeoff compared to other algorithms across various datasets.
We also \textit{train} a model obtained with \(L_2\) norm-based structured pruning \emph{using the synthetic set} based on CIFAR10 for comparison. In \cref{tab:cifar10_prune}, we observe that for the \emph{same FLOP sparsity}, our method obtains higher accuracy than the model finetuned on synthetic samples, indicating that our method can \emph{outperform finetuning in some cases} using synthetic samples for the same sparsity. For the ImageNet dataset, we compare our approach against various state-of-the-art structured pruning algorithms for networks with complex interconnections, including those trained on the ImageNet training set. In \cref{tab:imagenet_prune}, we observe that for models of similar accuracy, our algorithm obtains the best accuracy-speedup tradeoff with fewer epochs of finetuning. Details of pre-trained networks and post-training are given in \cref{app:pretrain}. 
Our study of the effect of the quality of synthetic samples on our algorithm in \cref{app:exp:synth:ablations} indicates that the sparsity-accuracy tradeoff of our algorithm degrades with lower quality samples, but it does not degrade as much as \(L_2\) pruning + finetuning on synthetic samples.

\subsubsection{Large Language Models}

\textbf{Baselines}
We evaluate ModHiFi on Llama-2-7B, comparing it against state-of-the-art algorithms for structured pruning \citep{ashkboos2024slicegpt,men2024shortgpt}. The use of calibration datasets to compute statistics aligns with our framing of distributional access to data, as LLMs do not make their training data openly accessible. Unless otherwise specified, the algorithms use WikiText-2 for calibration, with 128 samples of length 1024 \footnote{ShortGPT's calibration data is not publicly available.}. None of the algorithms performs post-pruning recovery finetuning. Additional details about our choice of baselines can be found in \cref{app:prune:baseline_selection}.

\textbf{Evaluation}
We also measure the performance of the model via its zero-shot accuracy on a suite of standard NLP tasks \citep{clark2018thinksolvedquestionanswering, sakaguchi2019winograndeadversarialwinogradschema, zellers2019hellaswagmachinereallyfinish, bisk2019piqareasoningphysicalcommonsense} and WikiText perplexity. In \cref{tab:llama_zeroshot}, we observe that our method is competitive, with consistently high average and task-specific performance, and outperforms at moderate sparsity levels. We find that the quality of the calibration set plays a crucial role, with the performance of ModHiFi-P-Alpaca outperforming that of ModHiFi-P-WikiText. This indicates that retaining only \textsc{HiFi} components provides a \textbf{model-agnostic} approach to structured pruning, with its application to LLMs requiring no modifications beyond its application to vision models. 

\subsection{\texorpdfstring{Class Unlearning Experiments: \labelcref{questionDimeU}}{Class Unlearning Experiments: Q4}}

\paragraph{Baselines and Metrics} We report the forget and retain accuracy averaged across 10 classes of the CIFAR10 dataset on ResNet-50 and Swin-T models. We benchmark against Gradient Ascent and \citet{jia2023model}, which are both retraining-based techniques for Unlearning.

\textbf{Unlearning Results} We report the results of our algorithm in \cref{tab:forgetc10}.
To answer \labelcref{questionDimeU}, we observe that it is possible to perform unlearning \textbf{without finetuning} in a general editing framework \(10\times\) faster than our baseline. In \cref{app:unlearn:finetune}, we compare results with finetuning using synthetic and training data.
We note that the results for Swin-Transformer without finetuning fail to achieve the state of the art. However, as reported in \cref{app:unlearn:finetune}, we observe a drastic improvement with only three epochs of finetuning on synthetic samples. After 10 epochs of finetuning with our algorithm, we find that our forget accuracy is superior to that of \citep{jia2023model} (who use full training) when using synthetic samples. Both forget and remain accuracy are superior when using training samples. Experiments with VGG-19 are present in \cref{app:unlearn} where we make similar observations.

\begin{table}[t]
    \caption{Comparison of pruning methods on Llama-2-7B, measured with PPL and task accuracy}
    \label{tab:llama_zeroshot}
    \centering
    {\fontsize{8pt}{9.5pt}\selectfont
    \setlength{\tabcolsep}{3pt}
    \begin{tabular}{@{}clccccccc@{}}
    \toprule
    \textbf{Sparsity} & \textbf{Algorithm} & \textbf{WikiText PPL \(\downarrow\)} & \textbf{ARC-e \(\uparrow\)} & \textbf{ARC-c \(\uparrow\)} & \textbf{PIQA \(\uparrow\)} & \textbf{WinoG.\(\uparrow\)} & \textbf{HellaS. \(\uparrow\)} & \textbf{Average} \\
    \midrule
    0\% & Dense & 5.12 & 74.58 & 46.25 & 79.11 & 69.06 & 75.99 & 69.00\\
    \midrule

    \multirow{4}{*}{10\%} 
    & SliceGPT \citep{ashkboos2024slicegpt}  & 6.46 & 56.14 & 35.33 & 69.53 & 64.80 & 59.02 & 59.96\\
    & ModHiFi-P-WikiText (\textbf{ours}) & \textbf{5.97} & \underline{68.1} & \underline{41.89} & \underline{75.89} & \underline{65.43} & \underline{69.92} & \underline{64.23}\\
    & ModHiFi--P-Alpaca (\textbf{ours}) & 6.36 & \textbf{71.42} & \textbf{42.06} & \textbf{76.44} & \textbf{68.19} & \textbf{71.67} & \textbf{65.96}\\
    
    \midrule

    \multirow{4}{*}{20\%} 
    & ShortGPT \citep{men2024shortgpt}   & 14.32 & 58.33 & \underline{38.05} & \underline{72.58} & \textbf{65.51} & \textbf{65.27} & \underline{59.95}\\
    & SliceGPT \citep{ashkboos2024slicegpt}  & 8.13 & 50.08 & 31.14 & 64.85 & 62.04 & 48.84 & 51.39\\

    & ModHiFi-P-WikiText (\textbf{ours}) & \textbf{7.91} & \underline{60.1} & 34.89 & 70.62 & 61.48 & 58.7 & 57.16\\
    & ModHiFi-P-Alpaca (\textbf{ours}) & 9.38 & \textbf{64.73} & \textbf{38.22} &\textbf{ 72.79} & \underline{64.64} & \underline{62.7} & \textbf{60.62}\\    
    \midrule
    \multirow{4}{*}{30\%} 
    & ShortGPT \citep{men2024shortgpt}   & 33.21   & 48.65 & \textbf{32.85} & \underline{64.31} & \textbf{64.33} & \textbf{56.13} & \textbf{53.25} \\
    & SliceGPT \citep{ashkboos2024slicegpt} & \textbf{10.96} & 44.19 & 27.47 & 58.71 & 57.46 & 41.27 & 45.82\\
    & ModHiFi-P-WikiText (\textbf{ours}) & 11.53 & \underline{48.98} & 28.07 & 64.03 & 55.88 & 46.19 & 48.63\\
    & ModHiFi-P-Alpaca (\textbf{ours}) & 14.78 & \textbf{53.15} & \underline{32.5} & \textbf{66.59} & \underline{59.35} & \underline{50.61} & \underline{52.44}\\
    \bottomrule
    \end{tabular}
    }
\end{table}
\begin{table}[t]
    \captionof{table}{Comparison of class unlearning methods on CIFAR10.}
    \label{tab:forgetc10}
    \centering
    \small
    \begin{tabular}{l l c c c}
    \toprule
    \textbf{Model} & \textbf{Algorithm} & \textbf{Forget Acc.} & \textbf{Remain Acc.} & \textbf{Time (sec)} \\
    \midrule
    \multirow{4}{*}{ResNet-50}
     & Base               & 94.99 & 94.99 & - \\
     & Gradient Ascent    & 6.59  & 93.44 & 30 \\
     & Jia et al.~\citep{jia2023model} & 3.54  & \textbf{94.14} & 363 \\
     & \textbf{Ours}      & \textbf{0.2}  & 92.98 & \textbf{10} \\
    \midrule
    \multirow{3}{*}{Swin-T \cite{liu2021swintransformerhierarchicalvision}} 
     & Base               & 92.31 & 92.31 & - \\
     & Jia et al.~\citep{jia2023model} & \textbf{1.20}  & \textbf{90.69} & 235 \\
     & \textbf{Ours}      & 8.83  & 73.57 & \textbf{2} \\
    \bottomrule
    \end{tabular}
\end{table}
\section{Discussion and Conclusion}
\label{sec:discussion}

We have addressed the challenge of modifying well-trained deep networks without access to gradients, loss functions, or original training data.
By theoretically connecting local layer-wise reconstruction to global predictive error, we established \emph{Subset Fidelity} as a rigorous proxy for component importance.
Our empirical analysis reveals a fundamental property of modern networks: predictive performance is concentrated in sparse \textsc{HiFi} substructures that are robust to noise and identifiable via synthetic data. 
Leveraging this insight, we proposed \textsc{ModHiFi}, a unified framework for model modification.
Unlike prior architecture-specific heuristics, \textsc{ModHiFi} is domain-agnostic, effectively handling both structured pruning and classwise unlearning across CNNs and Transformers.
Crucially, our method is designed for the regime of \emph{distributional access}, making it uniquely suited for modern deployments where privacy or scale necessitates the use of synthetic data.

\textbf{Limitations and Future Work}
Our theoretical bounds in \cref{thm:HiFi_global} rely on the local Lipschitz continuity of the network. 
While we demonstrate that this property holds for well-trained models (including Transformers on bounded domains), it is not guaranteed at initialization. 
This suggests that the emergence of High-Fidelity components is a consequence of the training dynamics. 
In this work, we use the expected square loss as a measure of distributional similarity, and we leave for future work the exploration of other metrics of distributional similarity, like the TV distance or Wasserstein metric.


\begin{ack}
\label{sec:ack}

We, the authors, gratefully acknowledge AMD for its support. The authors thank Ramaswamy Govindarajan (IISc) and Prakash Raghavendra (AMD) for their insight and assistance in this work. The authors are also grateful to the reviewers of this work for their valuable feedback, which has significantly improved the content.
\end{ack}

\bibliographystyle{plainnat}
\bibliography{refs}

\section*{NeurIPS Paper Checklist}

\begin{enumerate}

\item {\bf Claims}
    \item[] Question: Do the main claims made in the abstract and introduction accurately reflect the paper's contributions and scope?
    \item[] Answer: \answerYes{} 
    \item[] Justification: We link each contribution in the paper's contents when we theoretically or empirically justify the claims.
    \item[] Guidelines:
    \begin{itemize}
        \item The answer NA means that the abstract and introduction do not include the claims made in the paper.
        \item The abstract and/or introduction should clearly state the claims made, including the contributions made in the paper and important assumptions and limitations. A No or NA answer to this question will not be perceived well by the reviewers. 
        \item The claims made should match theoretical and experimental results, and reflect how much the results can be expected to generalize to other settings. 
        \item It is fine to include aspirational goals as motivation as long as it is clear that these goals are not attained by the paper. 
    \end{itemize}

\item {\bf Limitations}
    \item[] Question: Does the paper discuss the limitations of the work performed by the authors?
    \item[] Answer: \answerYes{} 
    \item[] Justification: We discuss limitations in Section \ref{sec:discussion}.
    \item[] Guidelines:
    \begin{itemize}
        \item The answer NA means that the paper has no limitation while the answer No means that the paper has limitations, but those are not discussed in the paper. 
        \item The authors are encouraged to create a separate "Limitations" section in their paper.
        \item The paper should point out any strong assumptions and how robust the results are to violations of these assumptions (e.g., independence assumptions, noiseless settings, model well-specification, asymptotic approximations only holding locally). The authors should reflect on how these assumptions might be violated in practice and what the implications would be.
        \item The authors should reflect on the scope of the claims made, e.g., if the approach was only tested on a few datasets or with a few runs. In general, empirical results often depend on implicit assumptions, which should be articulated.
        \item The authors should reflect on the factors that influence the performance of the approach. For example, a facial recognition algorithm may perform poorly when image resolution is low or images are taken in low lighting. Or a speech-to-text system might not be used reliably to provide closed captions for online lectures because it fails to handle technical jargon.
        \item The authors should discuss the computational efficiency of the proposed algorithms and how they scale with dataset size.
        \item If applicable, the authors should discuss possible limitations of their approach to address problems of privacy and fairness.
        \item While the authors might fear that complete honesty about limitations might be used by reviewers as grounds for rejection, a worse outcome might be that reviewers discover limitations that aren't acknowledged in the paper. The authors should use their best judgment and recognize that individual actions in favor of transparency play an important role in developing norms that preserve the integrity of the community. Reviewers will be specifically instructed to not penalize honesty concerning limitations.
    \end{itemize}

\item {\bf Theory assumptions and proofs}
    \item[] Question: For each theoretical result, does the paper provide the full set of assumptions and a complete (and correct) proof?
    \item[] Answer: \answerYes{} 
    \item[] Justification: Yes, for each theoretical result, we provide a complete set of assumptions and a correct proof. Proofs are attached in the appendix, \cref{app:proofs}. We link the relevant appendices in the main body for the reader.
    \item[] Guidelines:
    \begin{itemize}
        \item The answer NA means that the paper does not include theoretical results. 
        \item All the theorems, formulas, and proofs in the paper should be numbered and cross-referenced.
        \item All assumptions should be clearly stated or referenced in the statement of any theorems.
        \item The proofs can either appear in the main paper or the supplemental material, but if they appear in the supplemental material, the authors are encouraged to provide a short proof sketch to provide intuition. 
        \item Inversely, any informal proof provided in the core of the paper should be complemented by formal proofs provided in appendix or supplemental material.
        \item Theorems and Lemmas that the proof relies upon should be properly referenced. 
    \end{itemize}

    \item {\bf Experimental result reproducibility}
    \item[] Question: Does the paper fully disclose all the information needed to reproduce the main experimental results of the paper to the extent that it affects the main claims and/or conclusions of the paper (regardless of whether the code and data are provided or not)?
    \item[] Answer: \answerYes{} 
    \item[] Justification: Yes, we fully disclose all the information to reproduce the main experimental results in Section \ref{sec:expts} and the appendices mentioned within that section. 
    \item[] Guidelines:
    \begin{itemize}
        \item The answer NA means that the paper does not include experiments.
        \item If the paper includes experiments, a No answer to this question will not be perceived well by the reviewers: Making the paper reproducible is important, regardless of whether the code and data are provided or not.
        \item If the contribution is a dataset and/or model, the authors should describe the steps taken to make their results reproducible or verifiable. 
        \item Depending on the contribution, reproducibility can be accomplished in various ways. For example, if the contribution is a novel architecture, describing the architecture fully might suffice, or if the contribution is a specific model and empirical evaluation, it may be necessary to either make it possible for others to replicate the model with the same dataset, or provide access to the model. In general. releasing code and data is often one good way to accomplish this, but reproducibility can also be provided via detailed instructions for how to replicate the results, access to a hosted model (e.g., in the case of a large language model), releasing of a model checkpoint, or other means that are appropriate to the research performed.
        \item While NeurIPS does not require releasing code, the conference does require all submissions to provide some reasonable avenue for reproducibility, which may depend on the nature of the contribution. For example
        \begin{enumerate}
            \item If the contribution is primarily a new algorithm, the paper should make it clear how to reproduce that algorithm.
            \item If the contribution is primarily a new model architecture, the paper should describe the architecture clearly and fully.
            \item If the contribution is a new model (e.g., a large language model), then there should either be a way to access this model for reproducing the results or a way to reproduce the model (e.g., with an open-source dataset or instructions for how to construct the dataset).
            \item We recognize that reproducibility may be tricky in some cases, in which case authors are welcome to describe the particular way they provide for reproducibility. In the case of closed-source models, it may be that access to the model is limited in some way (e.g., to registered users), but it should be possible for other researchers to have some path to reproducing or verifying the results.
        \end{enumerate}
    \end{itemize}

\item {\bf Open access to data and code}
    \item[] Question: Does the paper provide open access to the data and code, with sufficient instructions to faithfully reproduce the main experimental results, as described in supplemental material?
    \item[] Answer: \answerYes{} 
    \item[] Justification: The code and the instructions to reproduce the experiments are provided at the GitHub: \url{https://github.com/DhruvaKashyap/modhifi}. Moreover, the data sets used are open-sourced, and details on how to obtain them are provided on our GitHub.
    \item[] Guidelines:
    \begin{itemize}
        \item The answer NA means that paper does not include experiments requiring code.
        \item Please see the NeurIPS code and data submission guidelines (\url{https://nips.cc/public/guides/CodeSubmissionPolicy}) for more details.
        \item While we encourage the release of code and data, we understand that this might not be possible, so “No” is an acceptable answer. Papers cannot be rejected simply for not including code, unless this is central to the contribution (e.g., for a new open-source benchmark).
        \item The instructions should contain the exact command and environment needed to run to reproduce the results. See the NeurIPS code and data submission guidelines (\url{https://nips.cc/public/guides/CodeSubmissionPolicy}) for more details.
        \item The authors should provide instructions on data access and preparation, including how to access the raw data, preprocessed data, intermediate data, and generated data, etc.
        \item The authors should provide scripts to reproduce all experimental results for the new proposed method and baselines. If only a subset of experiments are reproducible, they should state which ones are omitted from the script and why.
        \item At submission time, to preserve anonymity, the authors should release anonymized versions (if applicable).
        \item Providing as much information as possible in supplemental material (appended to the paper) is recommended, but including URLs to data and code is permitted.
    \end{itemize}

\item {\bf Experimental setting/details}
    \item[] Question: Does the paper specify all the training and test details (e.g., data splits, hyperparameters, how they were chosen, type of optimizer, etc.) necessary to understand the results?
    \item[] Answer: \answerYes{} 
    \item[] Justification: All details to understand the results and reproduce the results are provided in Section \ref{sec:expts} and the appendices mentioned therein.
    \item[] Guidelines:
    \begin{itemize}
        \item The answer NA means that the paper does not include experiments.
        \item The experimental setting should be presented in the core of the paper to a level of detail that is necessary to appreciate the results and make sense of them.
        \item The full details can be provided either with the code, in appendix, or as supplemental material.
    \end{itemize}

\item {\bf Experiment statistical significance}
    \item[] Question: Does the paper report error bars suitably and correctly defined or other appropriate information about the statistical significance of the experiments?
    \item[] Answer: \answerYes{} 
    \item[] Justification: For experiments where error bars are relevant and computationally feasible, we report them.
    \item[] Guidelines:
    \begin{itemize}
        \item The answer NA means that the paper does not include experiments.
        \item The authors should answer "Yes" if the results are accompanied by error bars, confidence intervals, or statistical significance tests, at least for the experiments that support the main claims of the paper.
        \item The factors of variability that the error bars are capturing should be clearly stated (for example, train/test split, initialization, random drawing of some parameter, or overall run with given experimental conditions).
        \item The method for calculating the error bars should be explained (closed form formula, call to a library function, bootstrap, etc.)
        \item The assumptions made should be given (e.g., Normally distributed errors).
        \item It should be clear whether the error bar is the standard deviation or the standard error of the mean.
        \item It is OK to report 1-sigma error bars, but one should state it. The authors should preferably report a 2-sigma error bar than state that they have a 96\% CI, if the hypothesis of Normality of errors is not verified.
        \item For asymmetric distributions, the authors should be careful not to show in tables or figures symmetric error bars that would yield results that are out of range (e.g. negative error rates).
        \item If error bars are reported in tables or plots, The authors should explain in the text how they were calculated and reference the corresponding figures or tables in the text.
    \end{itemize}

\item {\bf Experiments compute resources}
    \item[] Question: For each experiment, does the paper provide sufficient information on the computer resources (type of compute workers, memory, time of execution) needed to reproduce the experiments?
    \item[] Answer: \answerYes{} 
    \item[] Justification: We sufficiently describe the compute resources used for our experiments in Section \ref{sec:expts} and the appendices referred to within the section, specifically, \cref{app:compute}.
    \item[] Guidelines:
    \begin{itemize}
        \item The answer NA means that the paper does not include experiments.
        \item The paper should indicate the type of compute workers CPU or GPU, internal cluster, or cloud provider, including relevant memory and storage.
        \item The paper should provide the amount of compute required for each of the individual experimental runs as well as estimate the total compute. 
        \item The paper should disclose whether the full research project required more compute than the experiments reported in the paper (e.g., preliminary or failed experiments that didn't make it into the paper). 
    \end{itemize}
    
\item {\bf Code of ethics}
    \item[] Question: Does the research conducted in the paper conform, in every respect, with the NeurIPS Code of Ethics \url{https://neurips.cc/public/EthicsGuidelines}?
    \item[] Answer: \answerYes{} 
    \item[] Justification: We go through the NeurIPS Code of Ethics and confirm that we adhere to them.
    \item[] Guidelines:
    \begin{itemize}
        \item The answer NA means that the authors have not reviewed the NeurIPS Code of Ethics.
        \item If the authors answer No, they should explain the special circumstances that require a deviation from the Code of Ethics.
        \item The authors should make sure to preserve anonymity (e.g., if there is a special consideration due to laws or regulations in their jurisdiction).
    \end{itemize}

\item {\bf Broader impacts}
    \item[] Question: Does the paper discuss both potential positive societal impacts and negative societal impacts of the work performed?
    \item[] Answer: \answerNo{} 
    \item[] Justification: We do not discuss the societal impact of the work performed in this manuscript since this is foundational research and not tied to any particular applications.
    \item[] Guidelines:
    \begin{itemize}
        \item The answer NA means that there is no societal impact of the work performed.
        \item If the authors answer NA or No, they should explain why their work has no societal impact or why the paper does not address societal impact.
        \item Examples of negative societal impacts include potential malicious or unintended uses (e.g., disinformation, generating fake profiles, surveillance), fairness considerations (e.g., deployment of technologies that could make decisions that unfairly impact specific groups), privacy considerations, and security considerations.
        \item The conference expects that many papers will be foundational research and not tied to particular applications, let alone deployments. However, if there is a direct path to any negative applications, the authors should point it out. For example, it is legitimate to point out that an improvement in the quality of generative models could be used to generate deepfakes for disinformation. On the other hand, it is not needed to point out that a generic algorithm for optimizing neural networks could enable people to train models that generate Deepfakes faster.
        \item The authors should consider possible harms that could arise when the technology is being used as intended and functioning correctly, harms that could arise when the technology is being used as intended but gives incorrect results, and harms following from (intentional or unintentional) misuse of the technology.
        \item If there are negative societal impacts, the authors could also discuss possible mitigation strategies (e.g., gated release of models, providing defenses in addition to attacks, mechanisms for monitoring misuse, mechanisms to monitor how a system learns from feedback over time, improving the efficiency and accessibility of ML).
    \end{itemize}
    
\item {\bf Safeguards}
    \item[] Question: Does the paper describe safeguards that have been put in place for responsible release of data or models that have a high risk for misuse (e.g., pretrained language models, image generators, or scraped datasets)?
    \item[] Answer: \answerNA{} 
    \item[] Justification: We do not release generative models or data in this work.
    \item[] Guidelines:
    \begin{itemize}
        \item The answer NA means that the paper poses no such risks.
        \item Released models that have a high risk for misuse or dual-use should be released with necessary safeguards to allow for controlled use of the model, for example by requiring that users adhere to usage guidelines or restrictions to access the model or implementing safety filters. 
        \item Datasets that have been scraped from the Internet could pose safety risks. The authors should describe how they avoided releasing unsafe images.
        \item We recognize that providing effective safeguards is challenging, and many papers do not require this, but we encourage authors to take this into account and make a best faith effort.
    \end{itemize}

\item {\bf Licenses for existing assets}
    \item[] Question: Are the creators or original owners of assets (e.g., code, data, models), used in the paper, properly credited and are the license and terms of use explicitly mentioned and properly respected?
    \item[] Answer: \answerYes{} 
    \item[] Justification: We credit original owners of assets used in this work appropriately through citations.
    \item[] Guidelines:
    \begin{itemize}
        \item The answer NA means that the paper does not use existing assets.
        \item The authors should cite the original paper that produced the code package or dataset.
        \item The authors should state which version of the asset is used and, if possible, include a URL.
        \item The name of the license (e.g., CC-BY 4.0) should be included for each asset.
        \item For scraped data from a particular source (e.g., website), the copyright and terms of service of that source should be provided.
        \item If assets are released, the license, copyright information, and terms of use in the package should be provided. For popular datasets, \url{paperswithcode.com/datasets} has curated licenses for some datasets. Their licensing guide can help determine the license of a dataset.
        \item For existing datasets that are re-packaged, both the original license and the license of the derived asset (if it has changed) should be provided.
        \item If this information is not available online, the authors are encouraged to reach out to the asset's creators.
    \end{itemize}

\item {\bf New assets}
    \item[] Question: Are new assets introduced in the paper well documented and is the documentation provided alongside the assets?
    \item[] Answer: \answerNA{} 
    \item[] Justification: The paper does not release any new assets.
    \item[] Guidelines:
    \begin{itemize}
        \item The answer NA means that the paper does not release new assets.
        \item Researchers should communicate the details of the dataset/code/model as part of their submissions via structured templates. This includes details about training, license, limitations, etc. 
        \item The paper should discuss whether and how consent was obtained from people whose asset is used.
        \item At submission time, remember to anonymize your assets (if applicable). You can either create an anonymized URL or include an anonymized zip file.
    \end{itemize}

\item {\bf Crowdsourcing and research with human subjects}
    \item[] Question: For crowdsourcing experiments and research with human subjects, does the paper include the full text of instructions given to participants and screenshots, if applicable, as well as details about compensation (if any)? 
    \item[] Answer: \answerNA{} 
    \item[] Justification: The paper does not involve crowdsourcing nor research with human subjects.
    \item[] Guidelines:
    \begin{itemize}
        \item The answer NA means that the paper does not involve crowdsourcing nor research with human subjects.
        \item Including this information in the supplemental material is fine, but if the main contribution of the paper involves human subjects, then as much detail as possible should be included in the main paper. 
        \item According to the NeurIPS Code of Ethics, workers involved in data collection, curation, or other labor should be paid at least the minimum wage in the country of the data collector. 
    \end{itemize}

\item {\bf Institutional review board (IRB) approvals or equivalent for research with human subjects}
    \item[] Question: Does the paper describe potential risks incurred by study participants, whether such risks were disclosed to the subjects, and whether Institutional Review Board (IRB) approvals (or an equivalent approval/review based on the requirements of your country or institution) were obtained?
    \item[] Answer: \answerNA{} 
    \item[] Justification: The paper does not involve crowdsourcing nor research with human subjects.
    \item[] Guidelines:
    \begin{itemize}
        \item The answer NA means that the paper does not involve crowdsourcing nor research with human subjects.
        \item Depending on the country in which research is conducted, IRB approval (or equivalent) may be required for any human subjects research. If you obtained IRB approval, you should clearly state this in the paper. 
        \item We recognize that the procedures for this may vary significantly between institutions and locations, and we expect authors to adhere to the NeurIPS Code of Ethics and the guidelines for their institution. 
        \item For initial submissions, do not include any information that would break anonymity (if applicable), such as the institution conducting the review.
    \end{itemize}

\item {\bf Declaration of LLM usage}
    \item[] Question: Does the paper describe the usage of LLMs if it is an important, original, or non-standard component of the core methods in this research? Note that if the LLM is used only for writing, editing, or formatting purposes and does not impact the core methodology, scientific rigorousness, or originality of the research, declaration is not required.
    \item[] Answer: \answerNA{} 
    \item[] Justification: We do not use LLMs to develop any methods presented in this work. We clarify further in \cref{app:llm}.
    \item[] Guidelines:
    \begin{itemize}
        \item The answer NA means that the core method development in this research does not involve LLMs as any important, original, or non-standard components.
        \item Please refer to our LLM policy (\url{https://neurips.cc/Conferences/2025/LLM}) for what should or should not be described.
    \end{itemize}

\end{enumerate}

\newpage
\appendix
\begin{center} \large
    \textbf{APPENDIX}
\end{center}

This appendix and their takeaways are summarized below for ease of navigation:

\begin{enumerate}
    \item \cref{app:relwork} contains additional related work and description of the gaps in existing literature addressed in our work.
    \item \cref{app:proofs} contains proofs and discussions not presented in the main body. In particular, we present the following:
    \begin{itemize}
        \item In \cref{app:proofs:loc2glob}, we provide the proof for \cref{thm:HiFi_global}. We also state and justify the assumption used towards proving the Theorem.
        \item In \cref{app:proofs:fs_properties}, we  prove the properties of Subset Fidelity stated in \cref{lemma:fs_property}.
        \item In \cref{app:proofs:uncorrelated}, we prove the optimality of the naive algorithm in selecting the \(k\)-MFS Optimal set. While the assumption of uncorrelated features might not hold under practical scenarios, this result provides an indication that the method could result in effective identification of critical model components in practical settings. Our experiments in \cref{sec:expts} and \cref{app:exp} practically demonstrate the empirical efficacy of the methodology.
    \end{itemize}
    \item \cref{app:exp} contains additional experimental validation that make the following points:
    \begin{itemize}
        \item In \cref{app:hifi} we validate the existence of HiFi sets.
        \begin{itemize}
            \item  In \cref{app:hifi:monte}, we show the results of the full Monte-Carlo experiments on more models and datasets to strengthen our answer for \cref{questionSubsetExistence}.
            \item In \cref{app:hifi:counter}, we conduct counterfactual experiments to validate if the sets computed by our proposed method are disproportionately responsible for predictive performance. \textbf{This emphasizes the effectiveness of Subset Fidelity in addition to our theoretical results in \cref{thm:uncorrelated}}.
            \item In \cref{app:hifi:robust}, we empirically discuss the sensitivity of the estimation of \(\mat{Q}^c\). This ablation study shows that the Subset Fidelity score is robust to the number of samples used for estimation. Among the datasets used, we see that we require at most 200 synthetic samples per class for accurate estimation.
        \end{itemize}
         
        \item In \cref{app:exp:synth}, we study the effect of quality of synthetic samples on our proposed method. We find that higher quality data leads to an improved sparsity accuracy tradeoff. 

        \item In \cref{app:prune} we provide pruning results on additional datasets strengthen our validation of \cref{questionDimeP}.
        \begin{itemize}
            \item In \cref{app:prune:ablations} we show that each sub-module of our model modification is critical towards successful model modification.
            \item In \cref{app:prune:pruned_imagenet} we compare the pruned ImageNet models of ModHiFi-P against SoTA data-free structured pruning DFPC \citep{narshana2022dfpc} to compare layer-wise sparsity to see where is improved speedup coming from.
            \item In \cref{app:prune:baseline_selection}, we justify the appropriateness of our baselines for the LLM pruning baselines.
        \end{itemize}
        \item In \cref{app:unlearn} we provide additional unlearning experiments with different models to strengthen our validation of \cref{questionDimeU} and discuss unlearning with finetuning. We validate that ModHifi-U performs competitively without finetuning against baselines. Moreover, with very few epochs of finetuning over synthetically generated samples, we achieve complete unlearning, as opposed to our baselines who fully-finetune on entire training set.
         \item In \cref{app:compute}, we discuss implementation and compute platform details and additional timing measurements, to improve replicability of our empirical results.
        \item In \cref{app:hyper} we discuss hyperparameters used for training, to improve replicability of our empirical results.
    \end{itemize}
    \item \cref{app:algo} contains additional algorithmic details including
    \begin{itemize}
        \item \cref{app:algo:lipschitz_detail} clarifies of the role of Lipschitz Constants in our algorithms since they are critical to \cref{thm:HiFi_global}.
        \item \cref{app:algo:identhifi} presents practical details as to how we implement fidelity estimation in our model modification algorithms.
        \item \cref{app:algo:cobra:compcost} presents the computational complexity of estimating fidelity. The fidelity is linear in number of layers as opposed to the component identification module of SoTA in data-free structured pruning, which is quadratic in the number of layers.
    \end{itemize}
\end{enumerate}

\section{Related Work}
\label{app:relwork}

We present a short literature review in \cref{sec:rel_work}.
In this section, we discuss recent related work on structured pruning and unlearning not discussed in the main body.

\paragraph{Structured Pruning}

Structured pruning has been widely researched, with a wide variety of methods proposed for it \cite{hoefler2021sparsity}. Unlike unstructured pruning, which sparsifies the weight matrices without changing the architecture of the model \cite{lecun1989optimal,2208.11580,blalock2020state,tanaka2020pruning,frankle2018lottery}, structured pruning enables immediate improvements in real-world performance measures such as inference time and memory footprint without requiring specialized hardware or software  \cite{hoefler2021sparsity,molchanov2016pruning, prakash2019optimizing}. A variety of methods have been proposed for structured pruning of convolutional networks, including using norms of weight tensors \cite{li2016pruning,li2017pruning}, directional derivative scores \cite{molchanov2016pruning,molchanov2019importance, shen2022structural}, feature map ranks \cite{lin2020hrank,sui2021chip}, coresets \cite{liebenwein2019provable,baykal2018data,tukan2022pruning},  discriminative ability of filters \cite{murti2022tvsprune,liu2021discrimination}, and reconstruction error \cite{yu2018nisp,elhalabi2022dataefficient, shuan2025numerical}. However, modern neural networks possess complex interconnections, making them difficult to structurally compress \cite{liu2021group,narshana2022dfpc, fang2023depgraph}, for which some recent algorithms have been proposed that use gradient information \cite{liu2021group} or bounds on the reconstruction error \cite{yu2018nisp,narshana2022dfpc}.  Moreover, pruning without access to either the training data or the loss function is an increasingly important area of research, for which some works have been proposed that use the discriminative ability of filters as a saliency \cite{liu2021discrimination,murti2022tvsprune}. However, none of these works address the problem of pruning large language models.

Pruning of Large Language Models (LLMs) has garnered significant interest in recent years \cite{zhu2024survey}. A variety of unstructured pruning methods have been proposed, such as \cite{frantar2023sparsegpt,sun2023simple}. However, these methods do not provide direct improvements on inference time and memory footprint. 
Thus, the problem of pruning models with structural interconnections has naturally been applied to pruning LLMs as well, in works such as \cite{ashkboos2024slicegpt,ma2023llm, xia2023sheared, men2024shortgpt}. A key drawback of these works is that most are not applicable to CNNs or other kinds of models. Our work proposes a unified framework for both pruning models with complex interconnections, including transformers and ResNets, as well as classwise unlearning.

\paragraph{Classwise Unlearning}

Machine unlearning has gained significant interest in recent years, both for data privacy concerns as well as connections to continual learning \cite{bourtoule2021machine,nguyen2022survey, wang2024comprehensive, hooker2019compressed}. Machine unlearning is typically categorized into exact and approximate unlearning \cite{xu2024machine}. Exact unlearning involves training models from scratch without the forget data (the data to be forgotten), or by training modules or experts on subsets of data \cite{yan2022arcane,xiong2023exact}. Approximate unlearning, on the other hand, refers to techniques that approximate exact unlearning via various approaches \cite{izzo2021approximate,xu2024machine}. Machine unlearning can be further classified into sample unlearning (wherein individual samples or random subsets of samples are unlearned)\cite{sekhari2021remember,ullah2023adaptive} or classwise unlearning (where classes or concepts are unlearned) \cite{graves2021amnesiac, gandikota2023erasing}. In this work, we focus on classwise unlearning. 

A variety of approaches have been proposed for classwise unlearning \cite{graves2021amnesiac}. Popular methods include fine-tuning the model without data from the forget class \cite{warnecke2021machine, golatkar2020eternal}, gradient ascent on the forget set \cite{graves2021amnesiac, thudi2022unrolling}, distillation-based approaches \cite{kurmanji2024towards}, and influence function based methods \cite{izzo2021approximate}. More recent work studies using sparsity for machine unlearning, such as \cite{jia2023model}, which first sparsifies the model, and then applies a fine-tuning-based unlearning algorithm, or \cite{murti2024discedit,wang2022federated}, which identify class-discriminative filters in CNNs, and removes them for unlearning. Two key drawbacks of prior art, however, are: first they exclusively address classwise unlearning, and do not address wider problems of model modification. Second, all prior art assumes access to the original training data. Our proposed approach for classwise unlearning differs from prior art because it only requires synthetic class data, uses a variety of granularities for sparsity in unlearning, and is part of a unified approach to model modification.

\section{Proofs}
\label{app:proofs}

In this section, we restate the formal statements made in the main body of the paper and present the proofs omitted in the main body. We follow the notation defined in \cref{sec:rel_work}.

\subsection{\texorpdfstring{Proof of \cref{thm:HiFi_global}}{Proof of Theorem 3.1}}
\label{app:proofs:loc2glob}

We now provide a proof of \cref{thm:HiFi_global}. We first state the properties of the normalization layer and provide empirical evidence to justify their validity, followed by a restatement of the theorem and its proof. 

\begin{definition}[RMSNorm and LayerNorm]
    \label{defn:rmslayernorm}
    Consider the \(l\)-th layer parameterized by \(\boldsymbol{\gamma}^l, \boldsymbol{\beta}^l \in \mathbb{R}^d\). For an input \(\boldsymbol{\phi}(\vec{x}) \in \mathbb{R}^d\), the output \(\mathbf{y} \in \mathbb{R}^d\) is defined as:
    \begin{equation*}
    \mathrm{NM}(\boldsymbol{\phi}(\vec{x})) = \boldsymbol{\gamma}^l \odot \frac{\mathbf{z}}{\|\mathbf{z}\|_2} + \boldsymbol{\beta}^l, \quad \text{where } \mathbf{z} = \mathbf{M}\boldsymbol{\phi}(\vec{x}).
    \end{equation*}
    Here, \(\odot\) denotes the Hadamard product. For RMSNorm, \(\mathbf{M} = \mathbf{I}_d\). For LayerNorm, \(\mathbf{M} = \mathbf{I}_d - \frac{1}{d}\mathbf{1}_d \mathbf{1}_d^\top\) (the centering matrix).
\end{definition}

Normalization layers are not globally Lipschitz continuous due to the singularity at zero. However, they are locally Lipschitz on domains bounded away from the origin.

\begin{definition}
    A function \(f: \RR^m \rightarrow \RR^n\) is Lipschitz continuous in its domain if there exists a positive scalar constant L such that 
    \begin{equation*}
    \|f(\vec{x}) - f (\vec{y})\|_2\le L \|\vec{x}-\vec{y}\|_2\quad \forall \vec{x},\vec{y}\in\RR^m        
    \end{equation*}
    for all \(\vec{x},\vec{y}\) in the domain of \(f\).
\end{definition}

\begin{lemma}
\label{lemma:norm}
Let \(\mathcal{X}_r = \{\mathbf{x} \in \mathbb{R}^d \mid \|\mathbf{x}\|_2 \ge r > 0\}\). Define the map \(f: \mathcal{X}_r \to \mathbb{S}^{d-1}\) as \(f(\mathbf{x}) = \frac{\mathbf{x}}{\|\mathbf{x}\|_2}\). Then \(f\) is Lipschitz continuous on \(\mathcal{X}_r\) with constant \(L_{f} = 1/r\). That is,
\begin{equation*}    
    \|f(\vec{x}) - f(\vec{y})\|_2 \le \frac{1}{r} \|\vec{x} - \vec{y}\|
    \end{equation*}
\end{lemma}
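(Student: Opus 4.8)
The plan is to reduce the claimed inequality to a one-variable trigonometric estimate comparing the two endpoints directly, rather than integrating a derivative bound along a path. Fix $\mathbf{x}, \mathbf{y} \in \mathcal{X}_r$, set $a = \|\mathbf{x}\|_2 \ge r$ and $b = \|\mathbf{y}\|_2 \ge r$, and let $\theta \in [0,\pi]$ be the angle between $\mathbf{x}$ and $\mathbf{y}$, so that $\langle \mathbf{x},\mathbf{y}\rangle = ab\cos\theta$. Since $f(\mathbf{x})$ and $f(\mathbf{y})$ are unit vectors in the directions of $\mathbf{x}$ and $\mathbf{y}$, expanding the square gives $\|f(\mathbf{x}) - f(\mathbf{y})\|_2^2 = 2 - 2\cos\theta$, while the law of cosines gives $\|\mathbf{x} - \mathbf{y}\|_2^2 = a^2 + b^2 - 2ab\cos\theta$. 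Hence it suffices to prove the scalar inequality $a^2 + b^2 - 2abc \ge r^2(2 - 2c)$ for all $a,b \ge r$ and $c = \cos\theta \in [-1,1]$.

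To establish this scalar inequality I would minimize the left-hand side $h(a,b) = a^2 + b^2 - 2abc$ over $\{a \ge r,\ b \ge r\}$. The function $h$ is a convex quadratic (strictly convex when $|c| < 1$, with Hessian eigenvalues $2(1\pm c)$), and its unconstrained minimizer is the origin, which lies outside the feasible region; therefore the constrained minimum is attained on the boundary. On the ray $\{a = r,\ b \ge r\}$ one has $\partial_b h(r,b) = 2b - 2rc \ge 2r(1-c) \ge 0$, so $h(r,\cdot)$ is nondecreasing and minimized at $b = r$ with value $h(r,r) = 2r^2(1-c)$; the ray $\{b = r,\ a \ge r\}$ is symmetric. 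Thus $h(a,b) \ge 2r^2(1-c) = r^2(2 - 2c)$, which is exactly the needed bound. Chaining back through the two identities gives $\|\mathbf{x} - \mathbf{y}\|_2^2 \ge r^2\,\|f(\mathbf{x}) - f(\mathbf{y})\|_2^2$, i.e. $L_f = 1/r$; the constant is sharp, as seen by taking $\|\mathbf{x}\|_2 = \|\mathbf{y}\|_2 = r$.

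The point that requires care — and the reason this is not a one-line mean value estimate — is that $\mathcal{X}_r = \{\mathbf{x} : \|\mathbf{x}\|_2 \ge r\}$ is \emph{not convex}. One can compute $Df(\mathbf{x}) = \tfrac{1}{\|\mathbf{x}\|_2}\bigl(I - \mathbf{x}\mathbf{x}^\top/\|\mathbf{x}\|_2^2\bigr)$, whose spectral norm is $1/\|\mathbf{x}\|_2 \le 1/r$ on $\mathcal{X}_r$; but integrating this bound along the segment from $\mathbf{x}$ to $\mathbf{y}$ is not legitimate, because that segment may cross the excluded ball $\{\|\mathbf{z}\|_2 < r\}$. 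The endpoint reduction above circumvents this issue. As an alternative, one may observe that for $\mathbf{x} \in \mathcal{X}_r$ the map $r\,f(\mathbf{x}) = r\mathbf{x}/\|\mathbf{x}\|_2$ coincides with the metric projection of $\mathbf{x}$ onto the closed ball $\{\|\mathbf{z}\|_2 \le r\}$, which is nonexpansive as a projection onto a convex set; dividing by $r$ recovers the same constant. I would present the self-contained trigonometric version.
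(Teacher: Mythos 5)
Your proof is correct and follows essentially the same route as the paper's: both compare the endpoints directly via the identities $\|f(\mathbf{x})-f(\mathbf{y})\|_2^2 = 2-2\cos\theta$ and $\|\mathbf{x}-\mathbf{y}\|_2^2 = a^2+b^2-2ab\cos\theta$, reducing the claim to the scalar bound $a^2+b^2-2abc \ge r^2(2-2c)$ for $a,b\ge r$. The only difference is in how that scalar bound is obtained --- the paper gets it in one line by writing $a^2+b^2-2abc = ab\left(\tfrac{a}{b}+\tfrac{b}{a}-2c\right)$ and applying AM-GM, whereas you minimize the convex quadratic over the boundary of the feasible region; both are valid.
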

\begin{proof}
    For any \(\mathbf{x}, \mathbf{y} \in \mathcal{X}_r\), 
    \begin{align*}
    \|f(\vec{x}) - f(\vec{y})\|_2^2 &= \left\|\frac{\vec{x}}{\|\vec{x}\|} - \frac{\vec{y}}{\|\vec{y}\|}\right\|^2_2\\
    &= 2 - 2 \frac{\vec{x}^\top\vec{y}}{\|\vec{x}\|\vec{y}\|}
    \end{align*}
    Simultaneously, 
    \begin{align*}
        \|\mathbf{x} - \mathbf{y}\|^2_2 
        &= \|\mathbf{x}\|^2 + \|\mathbf{y}\|^2 - 2\mathbf{x}^\top\mathbf{y} \\
        &= \|\mathbf{x}\|\|\mathbf{y}\| \left(\frac{\|\mathbf{x}\|}{\|\mathbf{y}\|} + \frac{\|\mathbf{y}\|}{\|\mathbf{x}\|} - 2\frac{\mathbf{x}^\top\mathbf{y}}{\|\mathbf{x}\|\|\mathbf{y}\|}\right).
    \end{align*}
    Using the AM-GM inequality \(a + 1/a \ge 2\) for \(a > 0\), and noting that \(\|\mathbf{x}\|, \|\mathbf{y}\| \ge r\):
    \begin{align*}
        \|\mathbf{x} - \mathbf{y}\|^2_2 
        &\ge \|\mathbf{x}\|\|\mathbf{y}\| \left(2 - 2\frac{\mathbf{x}^\top\mathbf{y}}{\|\mathbf{x}\|\|\mathbf{y}\|}\right) 
        = \|\mathbf{x}\|\|\mathbf{y}\| \, \|f(\mathbf{x}) - f(\mathbf{y})\|_2^2
        \ge r^2 \|f(\mathbf{x}) - f(\mathbf{y})\|_2^2.
    \end{align*}
    Rearranging terms completes the proof.    
\end{proof}

Using \cref{lemma:norm}, we can show that the operation performed by normalization layers is Lipschitz continuous in \cref{corollary:norm}.

\begin{corollary}
\label{corollary:norm}
    Let the input to the \(l\)-th normalization layer satisfy \(\|\mathbf{M}\boldsymbol{\Phi}(\vec{x})\|_2 \ge r > 0\) for all \(\vec{x}\). Then, the normalization layer satisfies
    \[\|\mathrm{NM}(\boldsymbol{\phi}(\vec{x})) - \mathrm{NM}(\boldsymbol{\phi}(\vec{y}))\|\leq \frac{\|\boldsymbol{\gamma}^l\|_\infty}{r} \|\mathbf{M}\|_2 \|\boldsymbol{\phi}(\vec{x})-\boldsymbol{\phi}(\vec{y})\|. \]
    Note that \(\|\mathbf{M}\|_2 = 1\) for both RMSNorm and LayerNorm.
\end{corollary}

\begin{proof}
    Let \(\mathbf{u}(\mathbf{x}) = \mathbf{M}\mathbf{x}\). Then \(\|\mathrm{NM}(\mathbf{x}) - \mathrm{NM}(\mathbf{y})\|_2 = \|\boldsymbol{\gamma}^l \odot \left(\frac{\mathbf{u}(\mathbf{x})}{\|\mathbf{u}(\vec{x})\|} - \frac{\mathbf{u}(\mathbf{y})}{\|\mathbf{u}(\vec{y})\|}\right)\|_2 \le \|\boldsymbol{\gamma}^l\|_\infty \frac{1}{r} \|\mathbf{M}(\mathbf{x}-\mathbf{y})\|_2\) by applying \cref{lemma:norm} to complete the proof.
\end{proof}

While the lower bound assumption \(\|\mathbf{z}\| \ge r\) is technically not guaranteed for all \(\mathbf{x} \in \mathbb{R}^d\), we empirically verify that for trained networks, activation norms are strictly bounded away from zero. This validates the local Lipschitz property in the region of interest.
We show the layer-wise minimum norm of the pre-LayerNorm representations in \cref{fig:norms}, estimated on 100 samples from the Alpaca dataset. For various models, we observe the lower bound to be between 0.2 and 60. 
For clarity of exposition, we only show the layers with the largest and smallest values, along with 5 randomly selected layers. Code for generating these plots can be found in \cref{app:exp}.
We also observe that this value tends to increase for layers deeper in the network, and leave the utilization of this observation to future work.

\subsubsection{Main Proof}

We first state a well-known fact about Lipschitz functions. We then restate and prove \cref{thm:HiFi_global}.

\begin{fact}
\label{fact:f2}
    A function \(f = f^L\circ f^{L-1}\circ\ldots\circ f^1\) where each \(f^i\) is Lipschitz continuous with Lipschitz constant \(L^i\), is Lipschitz continuous with Lipschitz constant \(\prod_{i=1}^L L^i\).
\end{fact}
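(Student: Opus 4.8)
The plan is to prove this by a straightforward induction on the number of layers $L$, using only the definition of Lipschitz continuity. First I would dispatch the base case $L=1$, which is exactly the hypothesis. For the inductive step I would write $f = f^L \circ g$ with $g = f^{L-1}\circ\cdots\circ f^1$, invoke the inductive hypothesis that $g$ is Lipschitz with constant $\prod_{i=1}^{L-1} L^i$ on its domain, and then chain the two bounds: for any $\vec{x},\vec{y}$ in that domain,
\[
\|f(\vec{x}) - f(\vec{y})\|_2 = \|f^L(g(\vec{x})) - f^L(g(\vec{y}))\|_2 \le L^L\,\|g(\vec{x}) - g(\vec{y})\|_2 \le L^L\prod_{i=1}^{L-1} L^i\,\|\vec{x}-\vec{y}\|_2 = \prod_{i=1}^{L} L^i\,\|\vec{x}-\vec{y}\|_2,
\]
which closes the induction.

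The only genuine subtlety — and what I would flag as the ``hard part'' — is the bookkeeping of domains. The Lipschitz estimates established here for normalization layers (via \cref{lemma:norm} and \cref{corollary:norm}) are only \emph{local}: they hold on the region where pre-normalization activations are bounded away from the origin. To apply the chain above, one must ensure that the image of $f^{i-1}\circ\cdots\circ f^1$ actually lands in the domain on which $f^i$ was shown to be Lipschitz. For the networks considered in this work this is precisely the content of the almost-sure lower-bound assumptions on $\|\mat{\Phi}^\ell_c(\rv{X})\|_F$ and on $\|\mathbf{M}\boldsymbol{\Phi}(\vec{x})\|_2$ that were stated and empirically verified in the preceding discussion, so the inductive argument goes through on the support of the data distribution $\mathcal{D}$.

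Beyond this domain caveat there is nothing deep: the statement is the classical composition property of Lipschitz maps, and the same one-line chaining also explains why \cref{thm:HiFi_global} only needs a single architecture-dependent constant $C_l$ absorbing the product of the per-layer Lipschitz constants for all layers $\ell > l$. I would therefore keep the write-up short, presenting the induction in two sentences and devoting a remark to the domain-containment point so that the reader sees why the ``almost surely'' hypotheses in the theorem statement are exactly what is required.
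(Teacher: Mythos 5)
Your proof is correct: the base case plus the two-inequality chaining in the inductive step is the standard argument, and the paper itself states this as a ``well known fact'' without providing any proof, so there is nothing to diverge from. Your additional remark on domain bookkeeping --- that the image of each partial composition must land in the region where the next layer's (local) Lipschitz bound holds, which is exactly what the almost-sure lower-bound assumptions on the pre-normalization activation norms guarantee --- is a genuine subtlety the paper glosses over, and it correctly identifies why the hypotheses of \cref{thm:HiFi_global} and \cref{corollary:norm} are phrased the way they are.
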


\LoctoGlob*

\begin{proof}
    Consider a network as defined in \cref{sec:rel_work_common}. 
    Let \(\mathrm{N}_\theta = f^L\circ \ldots\circ f^l\circ f^{l-1:1}\) where \(f^{l-1:1} = f^{l-1}\circ \ldots\circ f^1\). 
    Under standard assumptions on the smoothness of activations \citep{yu2018nisp}, each layer \(f^l\) is Lipschitz continuous with Lipschitz constant \(L_f^l\). 
    From \cref{fact:f2},
    \begin{equation*}
    \mathbb{E}\!\left[\| \mathrm{N}_\theta(\rv{X}) - \mathrm{N}_{\theta\odot \mat{M}^l}(\rv{X}) \|^2\right] \le (\prod_{\ell>l}^L L_f^\ell)\sum_{c=1}^{c^l_{out}}\EE[\|\mat{Y}^l(\rv{X}) - \sum_i m_{ci}\mat{A}_{ci}(\rv{X})]\|^2
    \end{equation*}
    By taking an upper bound on the Lipschitz constants of each layer in the composition, we see that the subnetwork after layer \(l\) has a Lipschitz constant of at least \(C^l = \prod_{\ell>l}^L L_f^\ell\). 
    Where, for convolution-based networks,
    \begin{align*}
    C_l = \left(\max_i \frac{\gamma_i^l}{\sigma_i^l} \right) \eta^{L-l} \prod_{\ell > l} \| \mathcal{W}^\ell \|_2 \cdot \max_i \frac{|\gamma_i^\ell|}{\sigma_i^\ell}
    \end{align*}
    and for transformer models,
    \begin{align*}
        C_l = \eta^{L-l} \prod_{\ell > l} \| \mathcal{W}^\ell \|_2 \cdot \max_i \frac{|\gamma_i^\ell|}{r^\ell}
    \end{align*}
    The expected squared error at the final output is:
    \begin{align*}
        \mathbb{E}[\| \mathrm{N}_\theta(\rv{X}) - \mathrm{N}_{\theta \odot \mathbf{M}}(\rv{X}) \|^2] 
        &\le C_l^2 \, \mathbb{E}[\| \mathbf{Y}^l(\rv{X}) - \tilde{\rv{Y}}^l(\mathbf{X}) \|^2].
    \end{align*}

    We decompose the layer output by channels \(c \in [c_{\mathrm{out}}^l]\). The masked output for channel \(c\) is \(\tilde{\mathbf{Y}}_c^l = \sum_{i} m_{ci} \mathbf{A}_{ci}^l\), where \(m_{ci} \in \{0,1\}\) are entries of \(\mathbf{M}^l\).
    \begin{align*}
        \mathbb{E}[\| \mathbf{Y}^l(\rv{X}) - \tilde{\mathbf{Y}}^l(\rv{X}) \|^2]
        &= \sum_{c=1}^{c_{\mathrm{out}}^l} \mathbb{E}\left[ \left\| \sum_{i=1}^{c_{\mathrm{in}}} \mathbf{A}_{ci}^l(\rv{X}) - \sum_{i=1}^{c_{\mathrm{in}}} m_{ci} \mathbf{A}_{ci}^l(\rv{X}) \right\|^2 \right] \\
        &= \sum_{c=1}^{c_{\mathrm{out}}^l} \mathbb{E}\left[ \left\| \sum_{i=1}^{c_{\mathrm{in}}} (1 - m_{ci}) \mathbf{A}_{ci}^l(\rv{X}) \right\|^2 \right].
    \end{align*}
    Let \(\mathbf{v}_c = \mathbf{1} - \mathbf{m}_c\) be the indicator vector of removed components. Expanding the squared norm:
    \begin{align*}
        \mathbb{E}\left[ \left\| \sum_{i} v_{ci} \mathbf{A}_{ci}^l(\rv{X}) \right\|^2 \right] 
        &= \mathbb{E}\left[ \sum_{i} \sum_{j} v_{ci} v_{cj} \langle \mathbf{A}_{ci}^l(\rv{X}), \mathbf{A}_{cj}^l(\rv{X}) \rangle \right] \\
        &= \sum_{i,j} v_{ci} (\mathbf{Q}_c^l)_{ij} v_{cj} 
        = \mathbf{v}_c^\top \mathbf{Q}_c^l \mathbf{v}_c.
    \end{align*}
    Substituting this back completes the proof.
\end{proof}

\subsection{\texorpdfstring{Proof of \cref{lemma:fs_property}}{Proof of Lemma 3.4}}
\label{app:proofs:fs_properties}

In this section, we prove the properties of Subset Fidelity stated in \cref{lemma:fs_property}. 
We restate the definition of fidelity score and state a proposition. We then restate the proposition and provide a proof.

\FSDefn*

\FSProp*

\begin{proof}
    Let \(\mathcal{L}(\boldsymbol{\delta}; C) \coloneqq \mathbb{E}[\|\mathbf{Y}_c^l(\rv{X}) - \sum_{i \in C} \delta_i \mathbf{A}_{ci}^l(\rv{X})\|^2]\).
    The fidelity is \(\mathrm{FS}_c^l(C) = 1 - \frac{\min_{\boldsymbol{\delta}} \mathcal{L}(\boldsymbol{\delta}; C)}{\mathbb{E}[\|\mathbf{Y}_c^l(\rv{X})\|^2]}\).
  
    \textbf{1. Boundedness:} 
    Since the norm is non-negative, \(\mathcal{L}(\boldsymbol{\delta}; C) \ge 0 \implies \mathrm{FS} \le 1\).
    Selecting \(\boldsymbol{\delta} = \mathbf{0}\) yields \(\mathcal{L}(\mathbf{0}; C) = \mathbb{E}[\|\mathbf{Y}_c^l\|^2]\). Since the minimum is bounded by this value, the ratio is \(\le 1\), so \(\mathrm{FS} \ge 0\).

    \textbf{2. Monotonicity:} 
    Let \(D \subset C\). The optimization for \(D\) is equivalent to optimizing over \(C\) with the constraint \(\delta_i = 0 \, \forall i \in C \setminus D\). 
    Since \(D \subset C\), the feasible set for \(D\) is a subset of the feasible set for \(C\). 
    Therefore, \(\min_{\boldsymbol{\delta}} \mathcal{L}(\boldsymbol{\delta}; C) \le \min_{\boldsymbol{\delta}'} \mathcal{L}(\boldsymbol{\delta}'; D)\).
    A lower minimum error implies a higher fidelity score. Thus, \(\mathrm{FS}_c^l(C) \ge \mathrm{FS}_c^l(D)\).
\end{proof}

\subsection{\texorpdfstring{Proof of \cref{lemma:FSComp}}{Proof of Corollary  3.6}}\label{app:proofs:fscomp}

\corr*

\begin{proof}

  Define the error \(\mathbf{e}(\rv{X}) = \mathbf{Y}_c^l(\rv{X}) - \sum_{i \in C} \delta_i \mathbf{A}_{ci}^l(\rv{X})\). We minimize \(J(\boldsymbol{\delta}) = \mathbb{E}[\|\mathbf{e}(\rv{X})\|^2]\).
    Recall that \(\mathbf{Y}_c^l(\rv{X}) = \sum_{i=1}^{c_{\mathrm{in}}} \mathbf{A}_{ci}^l(\rv{X})\).
    Let \(\mathbf{u} = \mathbf{1} - \boldsymbol{\delta}\), where \(\mathbf{u}\) is supported on the full set of indices but we constrain \(\delta_i = 0\) (so \(u_i=1\)) for \(i \notin C\).
    The objective is:
    \[ J(\mathbf{u}) = \mathbb{E}\left[ \left\| \sum_{i=1}^{c_{\mathrm{in}}} u_i \mathbf{A}_{ci}^l(\rv{X}) \right\|^2 \right] = \mathbf{u}^\top \mathbf{Q}_c^l \mathbf{u}. \]
    We partition indices into \(C\) and \(\overline{C}\). Decompose \(\mathbf{u}\) as \([\mathbf{u}_C; \mathbf{u}_{\overline{C}}]\). 
    The constraint \(\delta_i = 0\) for \(i \notin C\) implies \(\mathbf{u}_{\overline{C}} = \mathbf{1}_{\overline{C}}\).
    We optimize with respect to \(\mathbf{u}_C\):
    \begin{align*}
        J(\mathbf{u}_C) &= 
        \begin{bmatrix} \mathbf{u}_C^\top & \mathbf{1}_{\overline{C}}^\top \end{bmatrix}
        \begin{bmatrix} \mathbf{Q}_{CC} & \mathbf{Q}_{C\overline{C}} \\ \mathbf{Q}_{\overline{C}C} & \mathbf{Q}_{\overline{C}\overline{C}} \end{bmatrix}
        \begin{bmatrix} \mathbf{u}_C \\ \mathbf{1}_{\overline{C}} \end{bmatrix} \\
        &= \mathbf{u}_C^\top \mathbf{Q}_{CC} \mathbf{u}_C + 2 \mathbf{u}_C^\top \mathbf{Q}_{C\overline{C}} \mathbf{1}_{\overline{C}} + \text{const}.
    \end{align*}
    This is a convex quadratic function, whose optima can be computed by taking the gradient w.r.t \(\mathbf{u}_C\) and setting to zero:
    \[ 2 \mathbf{Q}_{CC} \mathbf{u}_C + 2 \mathbf{Q}_{C\overline{C}} \mathbf{1}_{\overline{C}} = 0 \implies \mathbf{u}_C^\star = - \mathbf{Q}_{CC}^{-1} \mathbf{Q}_{C\overline{C}} \mathbf{1}_{\overline{C}}. \]
    Recalling \(\boldsymbol{\delta}_C = \mathbf{1}_C - \mathbf{u}_C\), we obtain:
    \[ \boldsymbol{\delta}_C^\star = \mathbf{1}_C + \mathbf{Q}_{CC}^{-1} \mathbf{Q}_{C\overline{C}} \mathbf{1}_{\overline{C}}. \]
    This matches \cref{eq:fidscore}.
\end{proof}

\subsection{\texorpdfstring{Proof of \cref{thm:uncorrelated}}{Proof of Theorem 3.7}}\label{app:proofs:uncorrelated}

In this section, we prove the optimality of the naive algorithm in selecting the \(k\)-MFS Optimal set.

\optlabel*

\begin{proof}
    The assumption \(\mathbb{E}[\langle \mathbf{A}_{ci}^l, \mathbf{A}_{cj}^l \rangle] = 0\) for \(i \ne j\) implies that the Component Similarity Matrix \(\mathbf{Q}_c^l\) is diagonal.
    Let \(q_{ii} = (\mathbf{Q}_c^l)_{ii} = \mathbb{E}[\|\mathbf{A}_{ci}^l\|^2] \ge 0\).
    This implies that the component similarity matrix is diagonal.
    For any subset \(S\), the optimal compensation \(\boldsymbol{\delta}^\star\) for diagonal \(\mathbf{Q}\) simplifies.
    The reconstruction error for subset \(S\) is minimized when we perfectly reconstruct the components in \(S\) (since they are orthogonal to components in \(\overline{S}\)).
    Thus, the residual error comes purely from the removed components \(\overline{S}\):
    \[ \min_{\boldsymbol{\delta}} \mathbb{E}\left[ \left\| \mathbf{Y}_c^l(\rv{X}) - \sum_{i \in S} \delta_i \mathbf{A}_{ci}^l(\rv{X}) \right\|^2 \right] = \mathbb{E}\left[ \left\| \sum_{j \in \overline{S}} \mathbf{A}_{cj}^l(\rv{X}) \right\|^2 \right] = \sum_{j \notin S} q_{jj}. \]
    The Subset Fidelity is:
    \[ \mathrm{FS}_c^l(S) = 1 - \frac{\sum_{j \notin S} q_{jj}}{\sum_{k} q_{kk}} = \frac{\sum_{i \in S} q_{ii}}{\mathrm{Tr}(\mathbf{Q}_c^l)}. \]
    Similarly, the singleton fidelity score for component \(i\) is \(s_{ci}^l = \frac{q_{ii}}{\mathrm{Tr}(\mathbf{Q}_c^l)}\).
    The optimization problem:
    \[ S^\star = \argmax_{|S|=k} \mathrm{FS}_c^l(S) = \argmax_{|S|=k} \sum_{i \in S} q_{ii}. \]
    This linear objective is trivially maximized by selecting the \(k\) indices with the largest \(q_{ii}\) values. Since \(s_{ci}^l \propto q_{ii}\), this is equivalent to selecting the top-\(k\) singleton fidelity scores.
\end{proof}

\begin{remark}
    
While the assumption of uncorrelated features might not hold under practical scenarios, this result provides an indication that the method could result in effective identification of critical model components in practical settings. Our experiments in \cref{sec:expts} and \cref{app:exp} practically demonstrate the empirical efficacy of the methodology.
\end{remark} 

\begin{figure}[t]
    \centering
    \begin{subfigure}[b]{0.49\textwidth}
        \includegraphics[width=\linewidth]{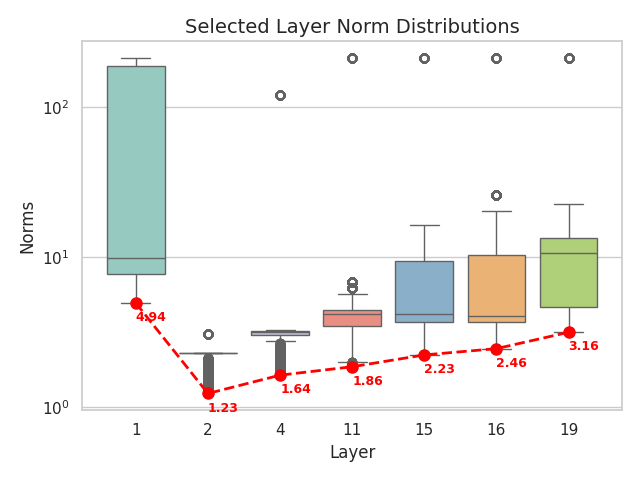}
        \caption{OPT-125M}
        \label{fig:normsubfig1}
    \end{subfigure}
    \begin{subfigure}[b]{0.49\textwidth}
        \includegraphics[width=\linewidth]{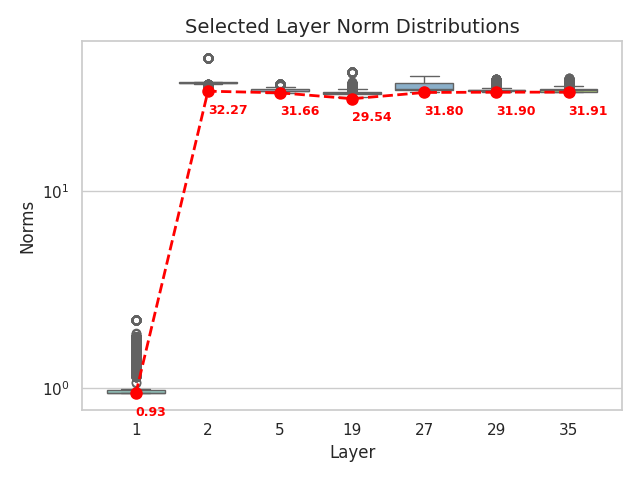}
        \caption{OPT-350M}
        \label{fig:normsubfig2}
    \end{subfigure}

    \vskip\baselineskip

    \begin{subfigure}[b]{0.49\textwidth}
        \includegraphics[width=\linewidth]{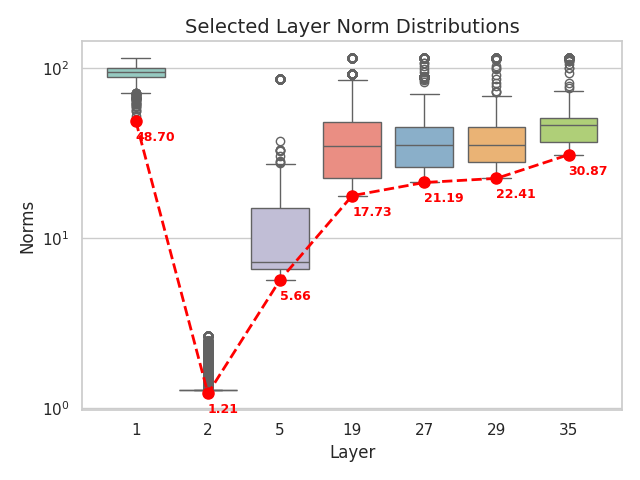}
        \caption{OPT-1.3B}
        \label{fig:normsubfig3}
    \end{subfigure}
    \begin{subfigure}[b]{0.49\textwidth}
        \includegraphics[width=\linewidth]{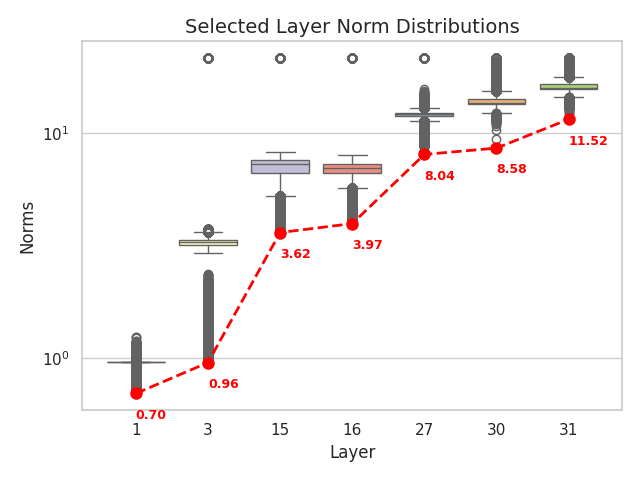}
        \caption{Llama-3.2-1B}
        \label{fig:normsubfig4}
    \end{subfigure}    
    
    \vskip\baselineskip
    \begin{subfigure}[b]{0.49\textwidth}
        \includegraphics[width=\linewidth]{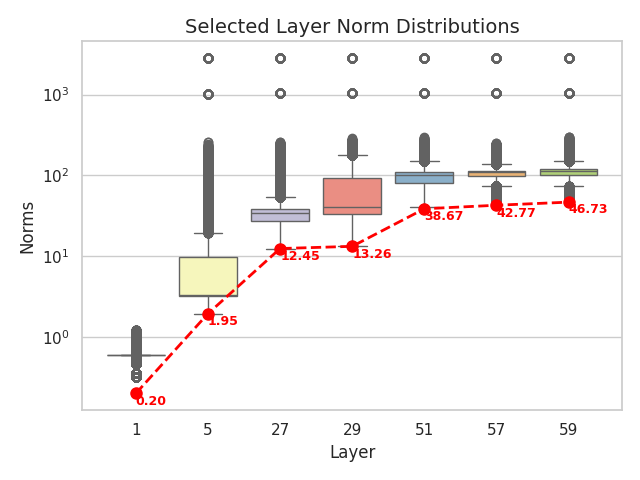}
        \caption{Llama-2-7B}
        \label{fig:normsubfig5}
    \end{subfigure}
    \begin{subfigure}[b]{0.49\textwidth}
        \includegraphics[width=\linewidth]{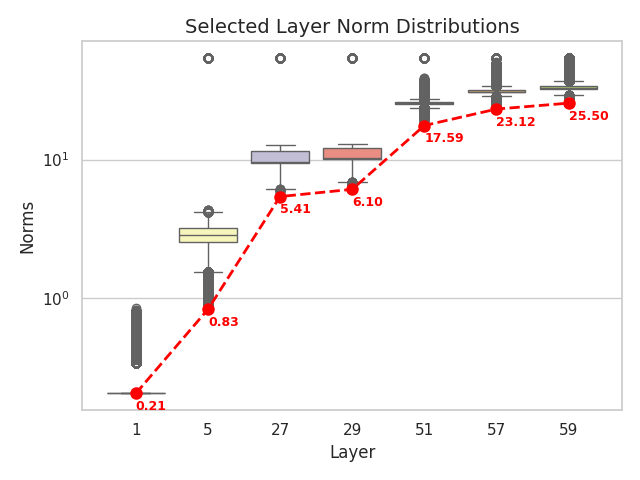}
        \caption{Llama-3.1-8B}
        \label{fig:normsubfig6}
    \end{subfigure}
    \caption{Boxplots for the distribution of norms of inputs to normalization layer. Minimum value indicated in \textcolor{red}{Red}, showing that \(\frac{1}{r}\) is at most 5. Y-axis is log scale.}
    \label{fig:norms}
\end{figure}
\section{Additional Experiments}
\label{app:exp}

In this appendix we detail additional results and ablations.
\begin{enumerate}
    \item We elaborate on the Monte Carlo simulations of \cref{eq:MFS} across multiple models, as well as the efficiency with which Subset Fidelity estimates this while being robust to data samples. We present noising and counterfactual results to demonstrate this.
    \item We discuss the synthetic samples used in our vision experiments and the effect of their quality on the algorithm.
    \item We provide additional pruning and unlearning experiments for a variety of architectures.
    \item We provide details of our compute platform, hyperparameters, and training procedure for our experiments.
\end{enumerate}

Our code is available at \url{https://github.com/DhruvaKashyap/modhifi}.

\subsection{Validating Subset Fidelity and \textsc{HiFi} Sets}
\label{app:hifi}
\subsubsection{Monte Carlo Experiments}
\label{app:hifi:monte}
In this section, we provide additional details regarding \cref{fig:monte} and demonstrate this behaviour across architectures.

Ideally, one would solve \cref{eq:MFS} exactly to compute those sets that have optimal reconstructive ability at different sizes. However, since enumerating across subsets is a combinatorial problem, we instead approximate a solution by randomly sampling 1000 sets of the given size and compute the maximum across these samples. \emph{This will always provide a lower bound for the ``true'' curve}.

Solving \cref{eq:MFS} allows us to compute the optimal \((k,\eta)\)-\textsc{HiFi} sets, since this captures the relation between \(\eta\) and \(k\), i.e. the tradeoff between sparsity and accuracy. We observe that in many layers (at least 50\% of the model), across multiple models, there are sets which contain at most 20\% of components but have a subset fidelity of around \(0.8\).
We also observe that as the difficulty of the task increases (CIFAR10 to ImageNet), fewer layers exhibit this sparsity, validating the assertion that networks trained on harder problems are less overparameterized.

\cref{fig:r50monte_cifar10,fig:r50monte_cifar100,fig:r50monte_imagenet} show this for a ResNet-50 trained on CIFAR10, CIFAR100, and ImageNet.
\cref{fig:opt125mmonte} shows this for a OPT-125m model using 128 samples of WikiText and 100 random subsets instead of a 1000. Due to the expensive nature of this experiment, we are forced to use only 100 random subsets for each size, leading to a noisier curve. However, it is clear to see that the general trend continues to hold for several layers, especially for the ``down-projection" weight matrices, which are the focus of our pruning algorithm for LLMs.

\begin{figure}[t]
    \centering
    \includegraphics[width=\textwidth]{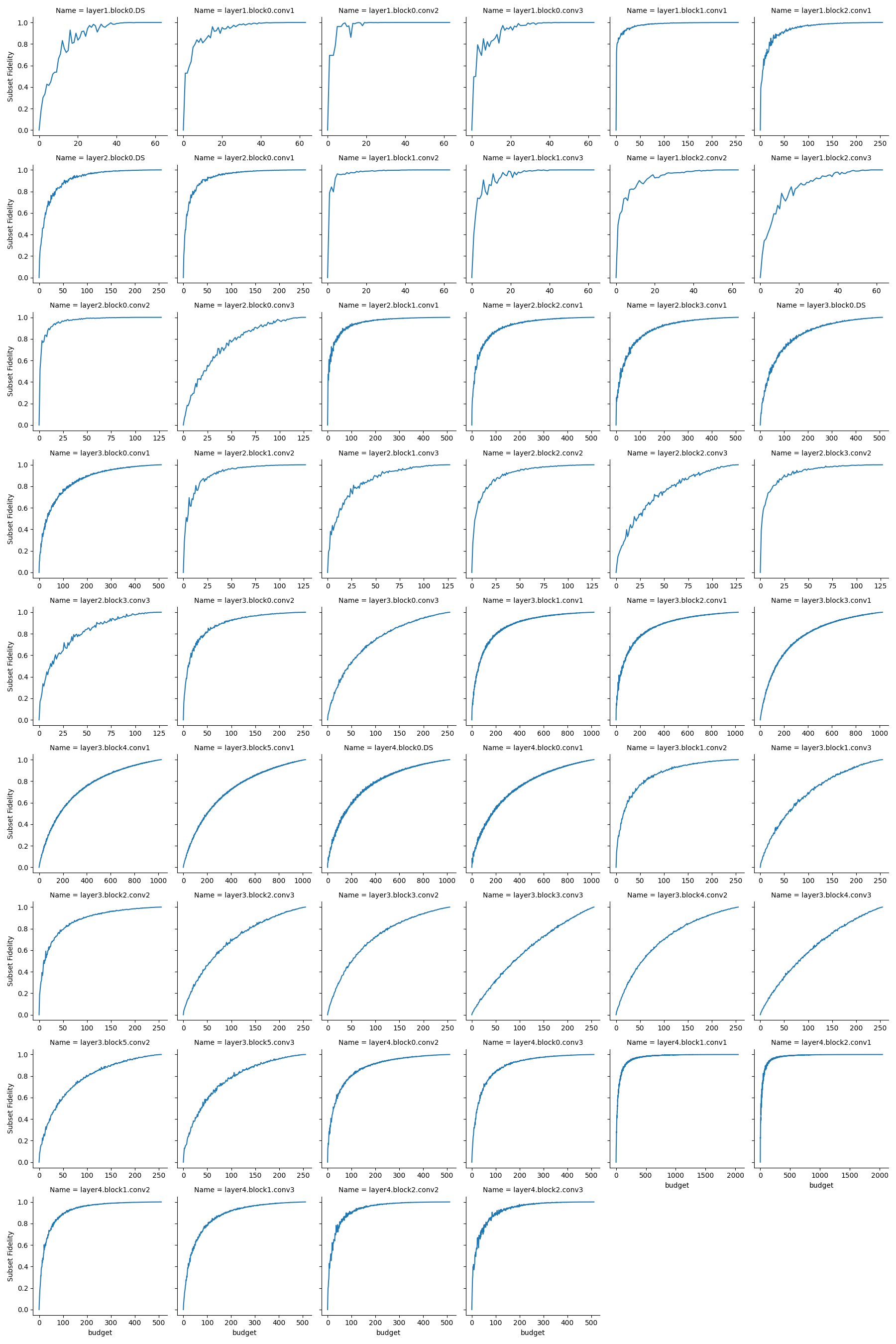}
    \caption{Estimates of Optimal subset fidelity for ResNet-50 on CIFAR10.}
    \label{fig:r50monte_cifar10}
\end{figure}

\begin{figure}[t]
    \centering
    \includegraphics[width=\textwidth]{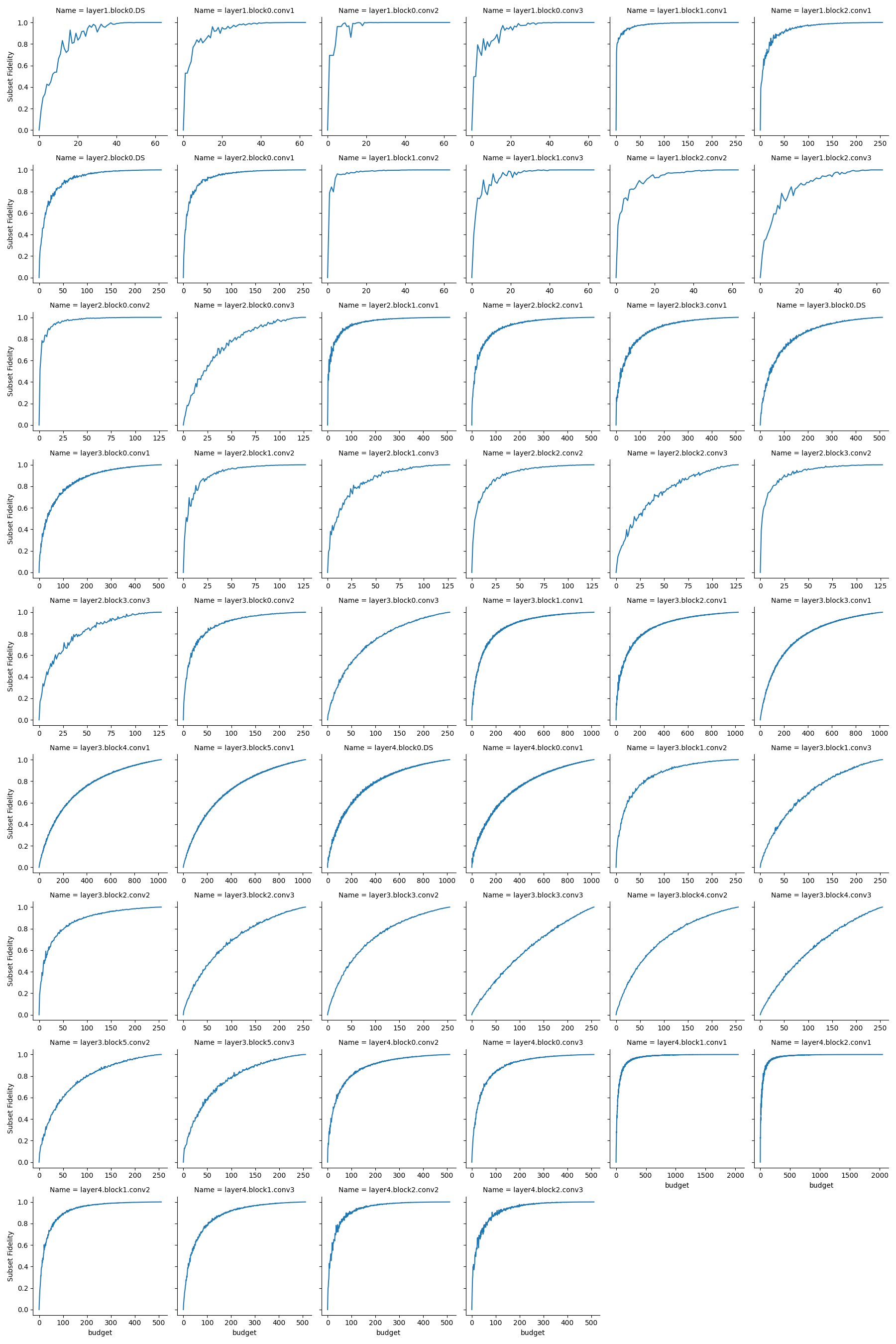}
    \caption{Estimates of Optimal subset fidelity for ResNet-50 on CIFAR100.}
    \label{fig:r50monte_cifar100}
\end{figure}

\begin{figure}[t]
    \centering
    \includegraphics[width=\textwidth]{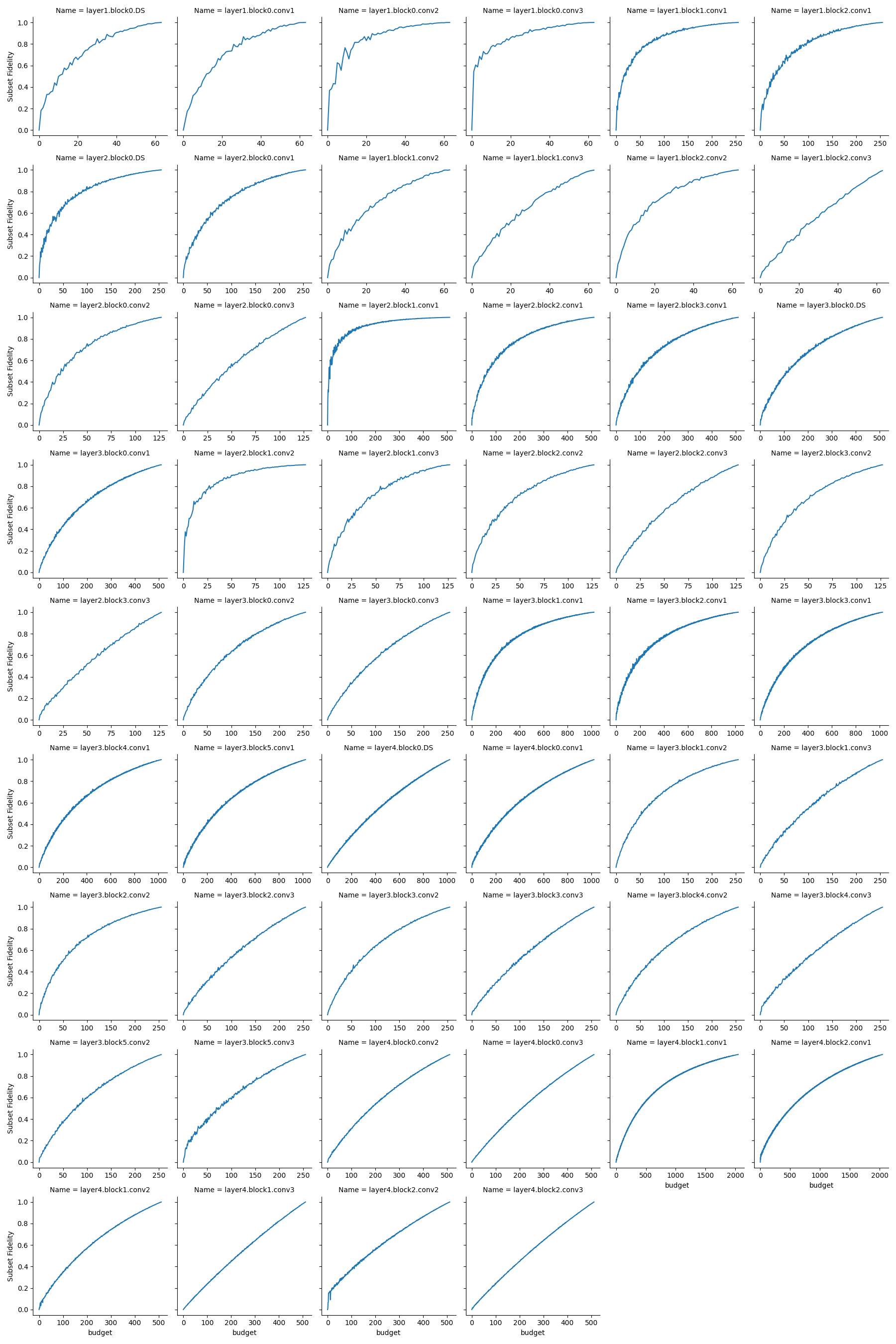}
    \caption{Estimates of Optimal subset fidelity for ResNet-50 on ImageNet.}
    \label{fig:r50monte_imagenet}
\end{figure}

\begin{figure}[t]
    \centering
    \includegraphics[width=\textwidth]{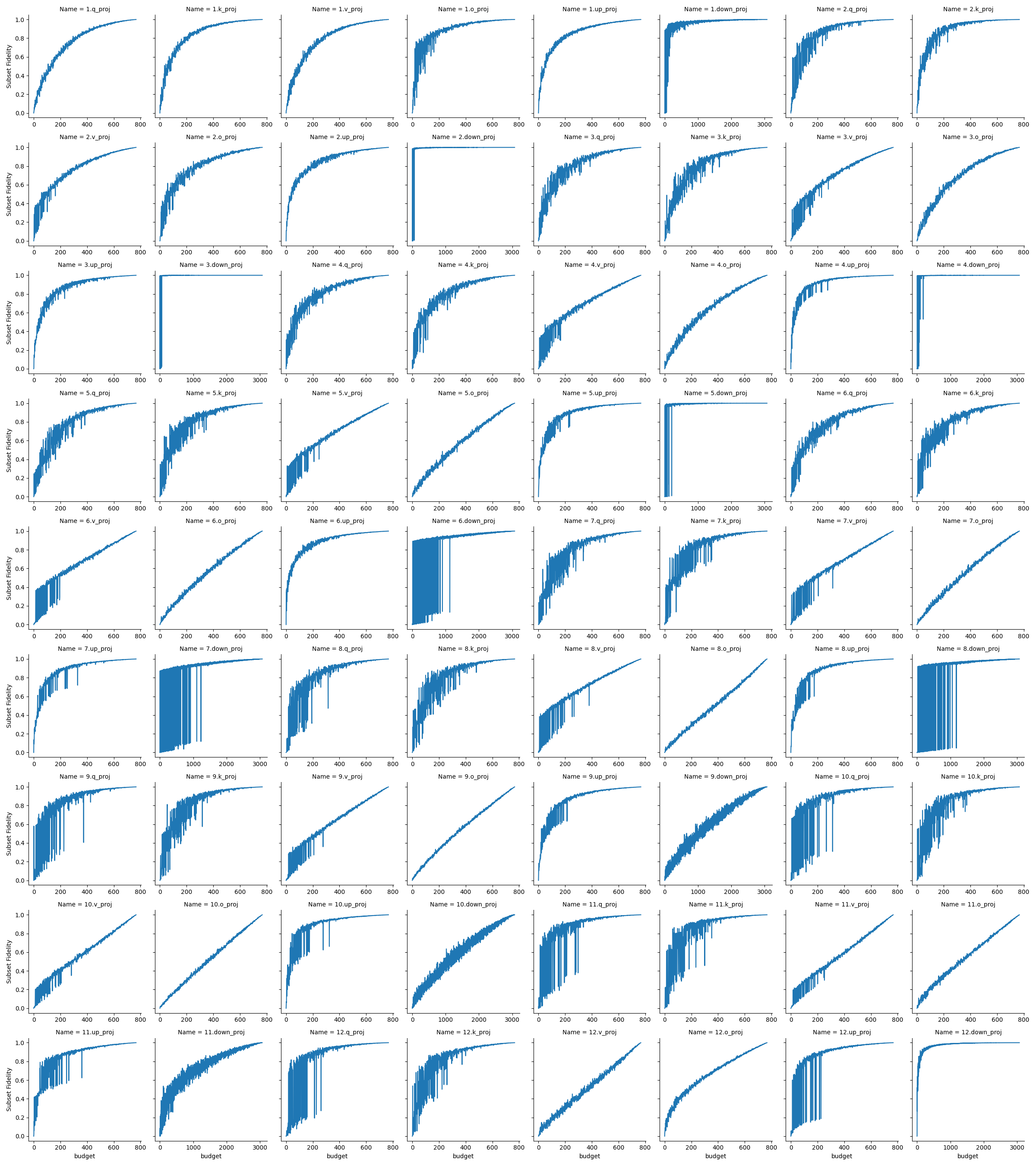}
    \caption{Estimates of Optimal subset fidelity for OPT-125M.}
    \label{fig:opt125mmonte}
\end{figure}

\subsubsection{Counterfactual study of HiFi sets}
\label{app:hifi:counter}

 We compare the effect of removing HiFi components from a layer with the effect of removing a random subset of the same size. For a ResNet-50 model trained on CIFAR-10, when around 22\% of HiFi components are removed, the accuracy drops by around 70\%, whereas removing a random subset of the same size decreases the accuracy by 32\%. Note that there is a roughly 1\% decrease in accuracy when only 22\% of the non-HiFi components are removed. This indicates that components classed as ``High Fidelity" have a significantly higher impact on the model's predictive performance than those with lower fidelity scores.

\subsubsection{Robustness of the Fidelity Score}
\label{app:hifi:robust}

In this section, we perform ablations on the number of samples required for estimating the fidelity score and show how it reacts to additive noise on the model's weights.

In \cref{fig:dsr50imagenet,fig:dsr50c10} we show how different data sizes affect different layers in a ResNet50 model trained on CIFAR10 and ImageNet, respectively. 
Each data size is selected over 3 random seeds, with error bars shown.
For clarity, we show only a subset of layers and provide plots and code to generate them.

We observe that the values remain stable for 0.2\%, 0.5\%, and 2\% of the data selected, indicating that the model is robust to the number of samples selected.

We also observe the effect of training in these graphs. In untrained models, almost all components have very small fidelity scores with a sharp increase for some values. 
This indicates that HiFi components are a function of training, with the well-trainedness of the network being a prerequisite for their presence

\begin{figure}[t]
    \centering
    \begin{subfigure}[b]{0.24\textwidth}
        \includegraphics[width=\linewidth]{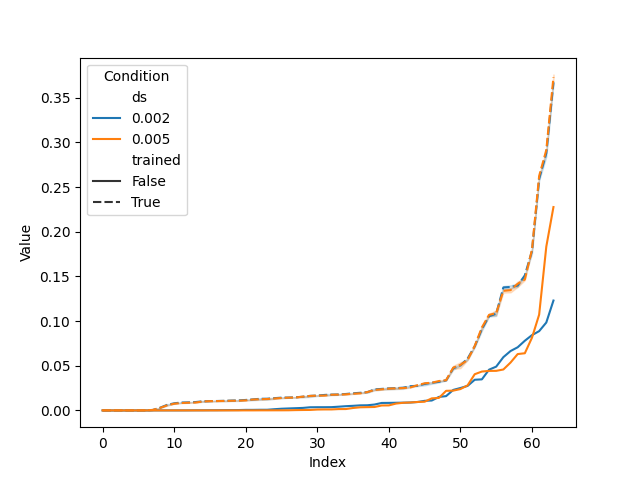}
    \end{subfigure}
    \begin{subfigure}[b]{0.24\textwidth}
        \includegraphics[width=\linewidth]{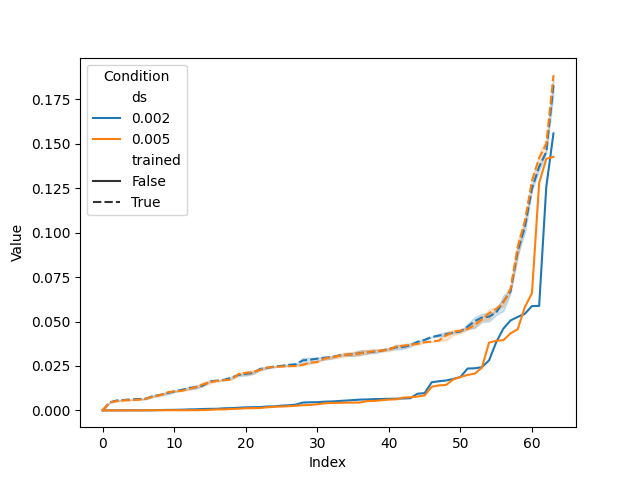}
    \end{subfigure}
    \begin{subfigure}[b]{0.24\textwidth}
        \includegraphics[width=\linewidth]{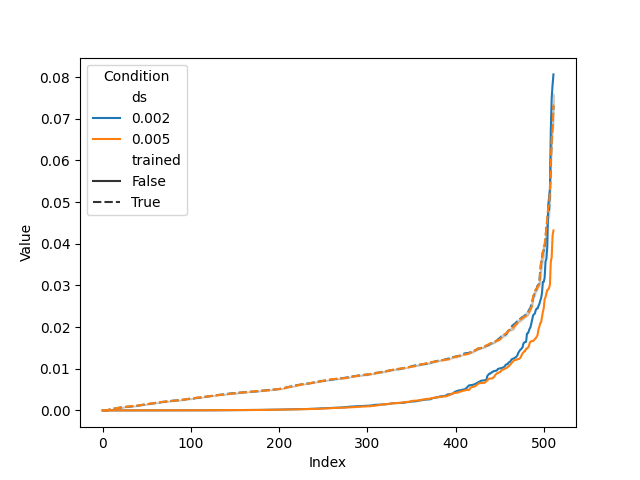}
    \end{subfigure}
    \begin{subfigure}[b]{0.24\textwidth}
        \includegraphics[width=\linewidth]{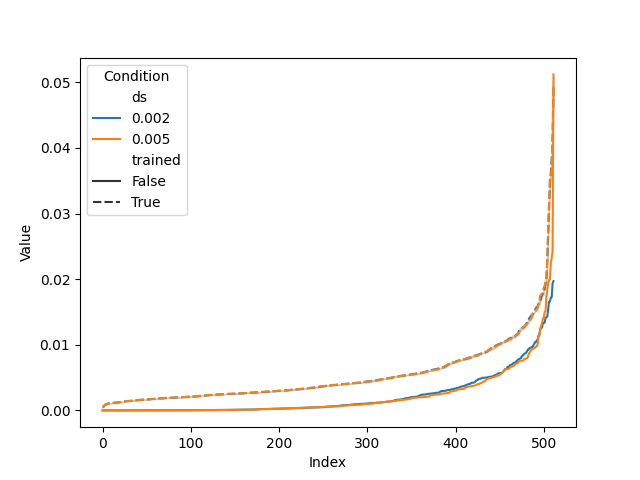}
    \end{subfigure}
    
    \vskip\baselineskip
    \begin{subfigure}[b]{0.24\textwidth}
        \includegraphics[width=\linewidth]{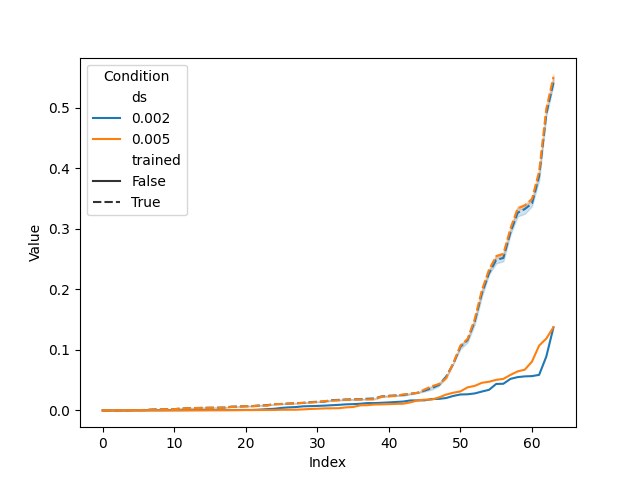}
    \end{subfigure}
    \begin{subfigure}[b]{0.24\textwidth}
        \includegraphics[width=\linewidth]{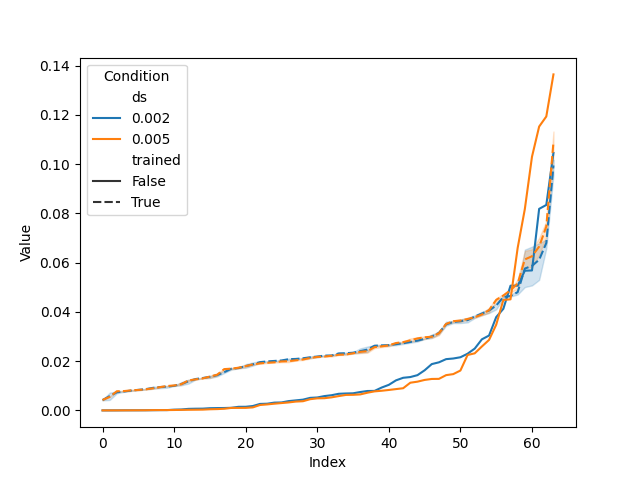}
    \end{subfigure}
    \begin{subfigure}[b]{0.24\textwidth}
        \includegraphics[width=\linewidth]{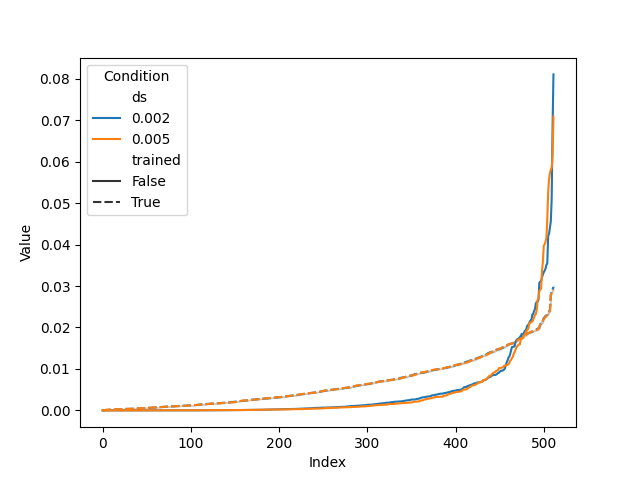}
    \end{subfigure}
    \begin{subfigure}[b]{0.24\textwidth}
        \includegraphics[width=\linewidth]{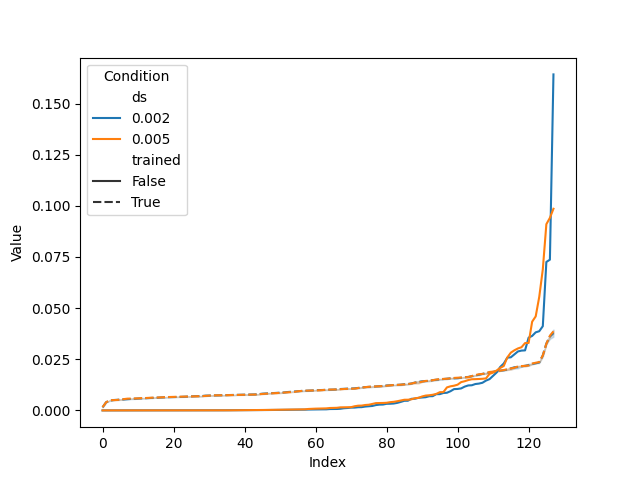}
    \end{subfigure}
    \caption{Fidelity scores for select layers of ResNet50 trained on ImageNet showing the effect of training and data set size (ds).}
    \label{fig:dsr50imagenet}
\end{figure}

\begin{figure}[t]
    \centering
    \begin{subfigure}[b]{0.24\textwidth}
        \includegraphics[width=\linewidth]{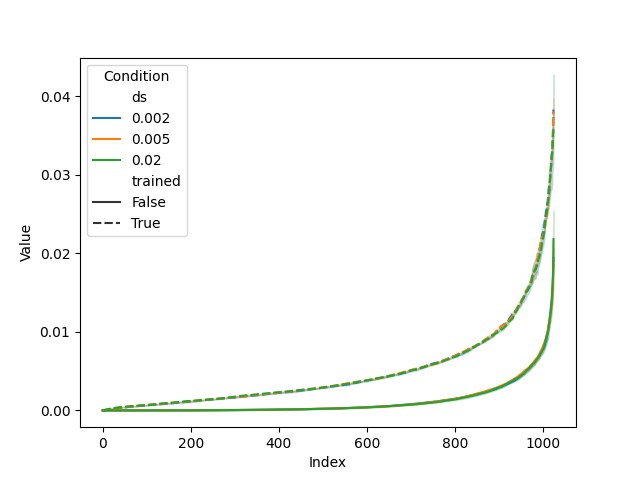}
    \end{subfigure}
    \begin{subfigure}[b]{0.24\textwidth}
        \includegraphics[width=\linewidth]{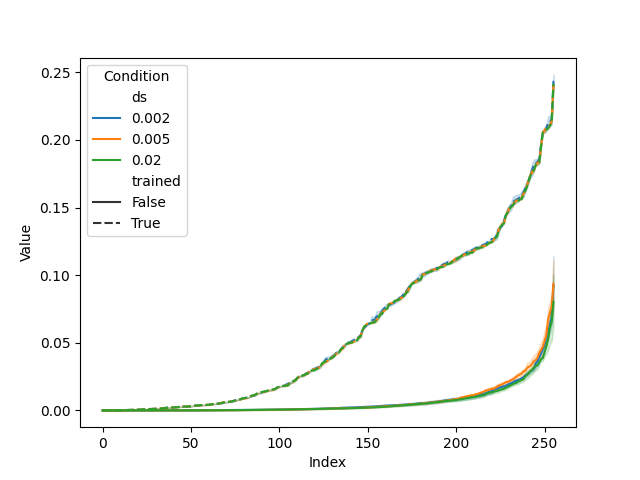}
    \end{subfigure}
    \begin{subfigure}[b]{0.24\textwidth}
        \includegraphics[width=\linewidth]{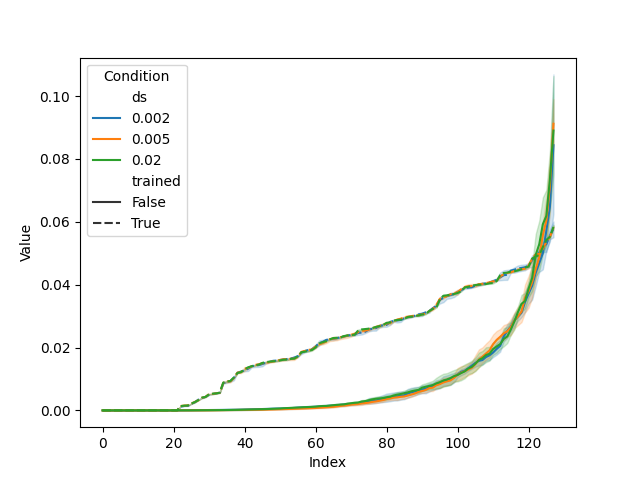}
    \end{subfigure}
    \begin{subfigure}[b]{0.24\textwidth}
        \includegraphics[width=\linewidth]{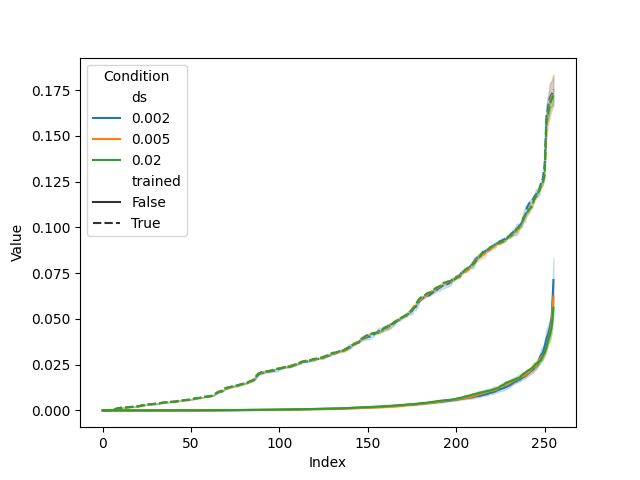}
    \end{subfigure}
    
    \vskip\baselineskip
    \begin{subfigure}[b]{0.24\textwidth}
        \includegraphics[width=\linewidth]{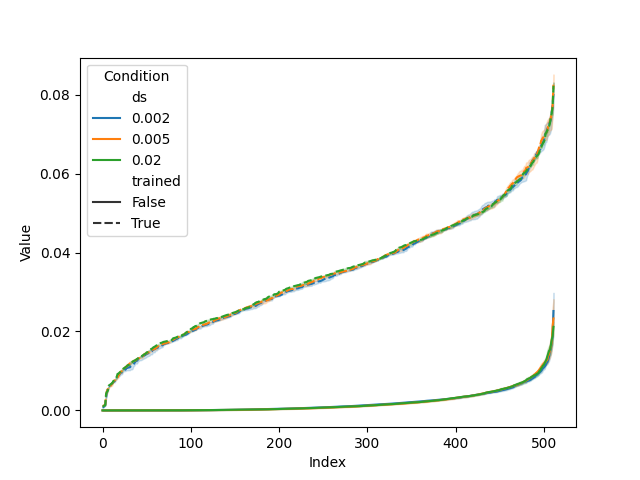}
    \end{subfigure}    
    \begin{subfigure}[b]{0.24\textwidth}
        \includegraphics[width=\linewidth]{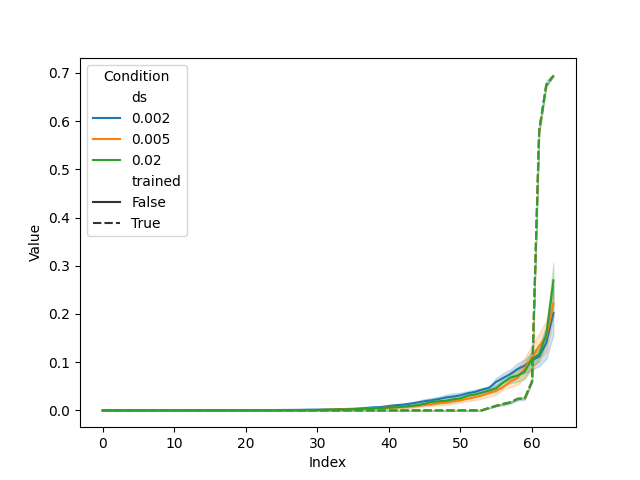}
    \end{subfigure}    
    \begin{subfigure}[b]{0.24\textwidth}
        \includegraphics[width=\linewidth]{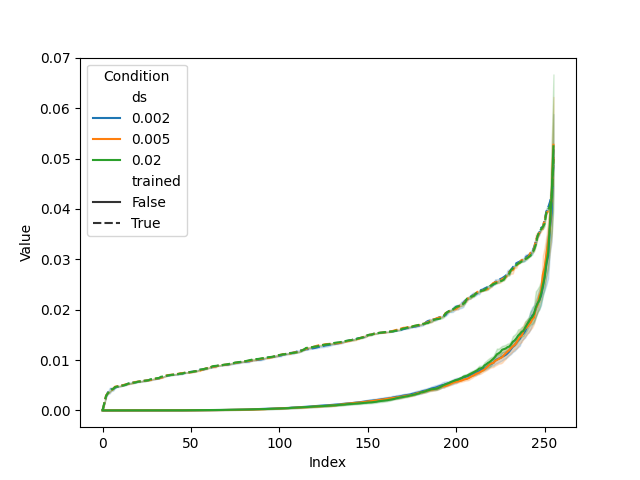}
    \end{subfigure}    
    \begin{subfigure}[b]{0.24\textwidth}
        \includegraphics[width=\linewidth]{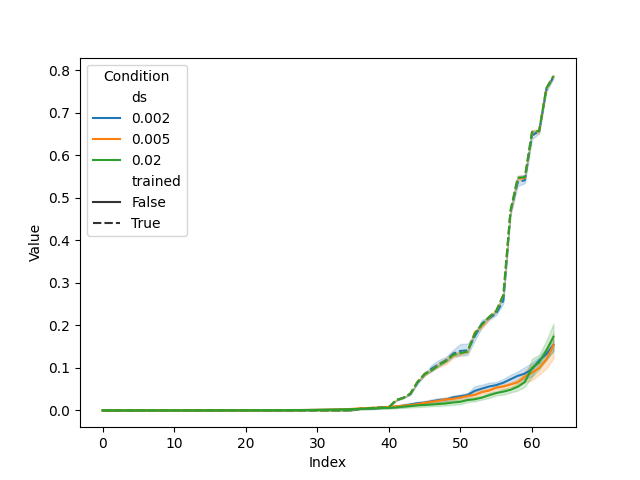}
    \end{subfigure}
    \caption{Fidelity scores for select layers of ResNet50 trained on CIFAR10 showing the effect of training and data set size.}
    \label{fig:dsr50c10}
\end{figure}

When investigating the effect of adding noise to the weights and its effect on accuracy and the fidelity score, we observe that adding zero mean noise of larger standard deviations, starting from 0.005 to 0.05, decreases the fidelity of components, with noisier weights behaving more like untrained models.
We test this on a ResNet-50 trained on CIFAR10 and present the results in \cref{fig:noiser50c10}.
Again, we present only a random subset of the layers for clarity.

\begin{figure}[t]
    \centering
    \begin{subfigure}[b]{0.24\textwidth}
        \includegraphics[width=\linewidth]{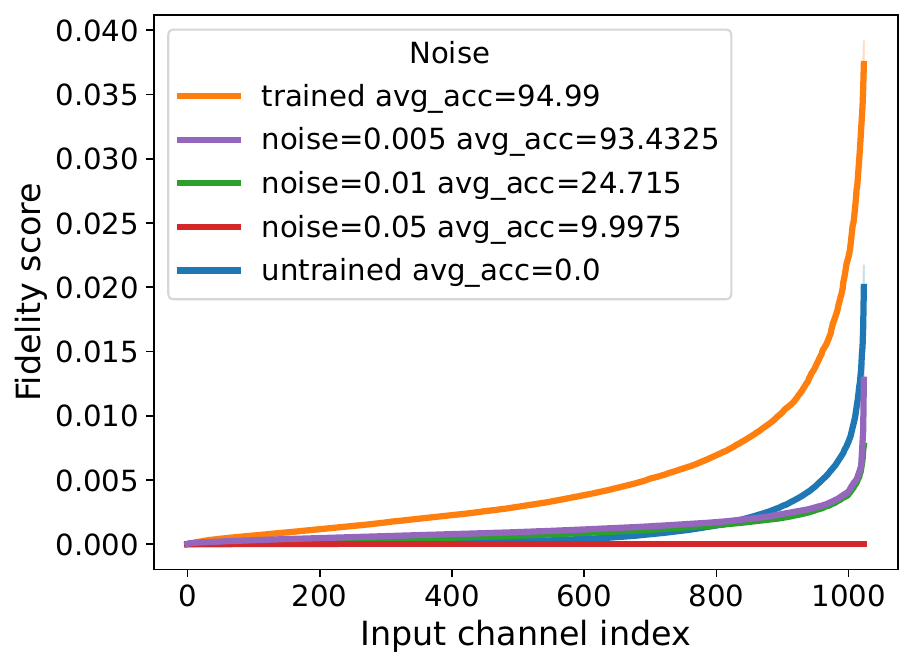}
    \end{subfigure}
    \begin{subfigure}[b]{0.24\textwidth}
        \includegraphics[width=\linewidth]{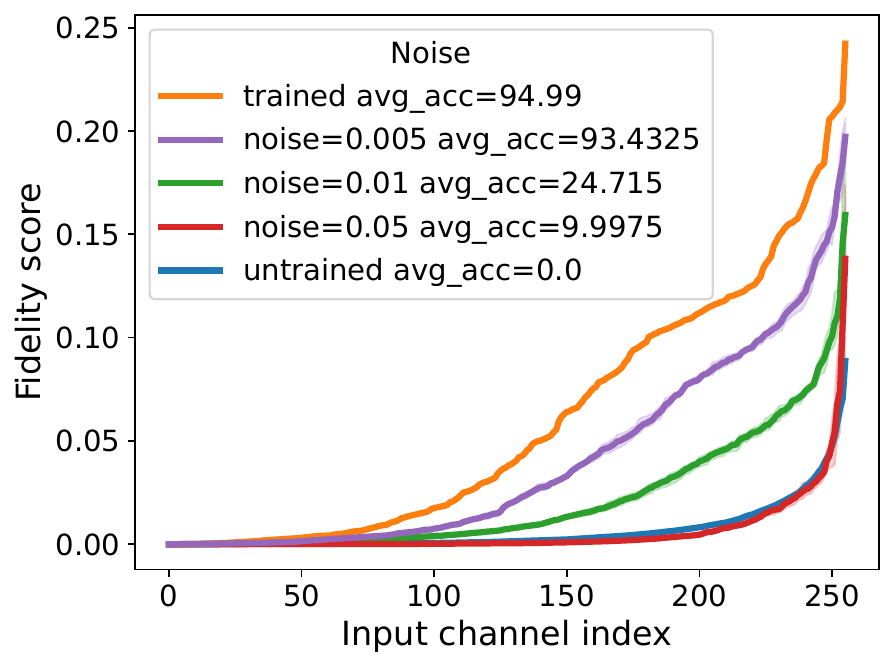}
    \end{subfigure}
    \begin{subfigure}[b]{0.24\textwidth}
        \includegraphics[width=\linewidth]{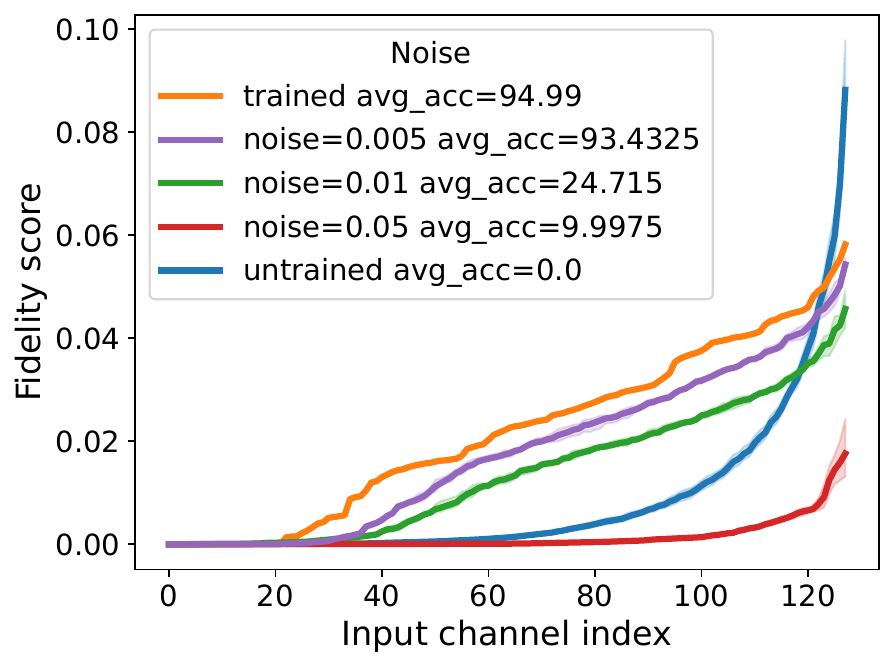}
    \end{subfigure}
    \begin{subfigure}[b]{0.24\textwidth}
        \includegraphics[width=\linewidth]{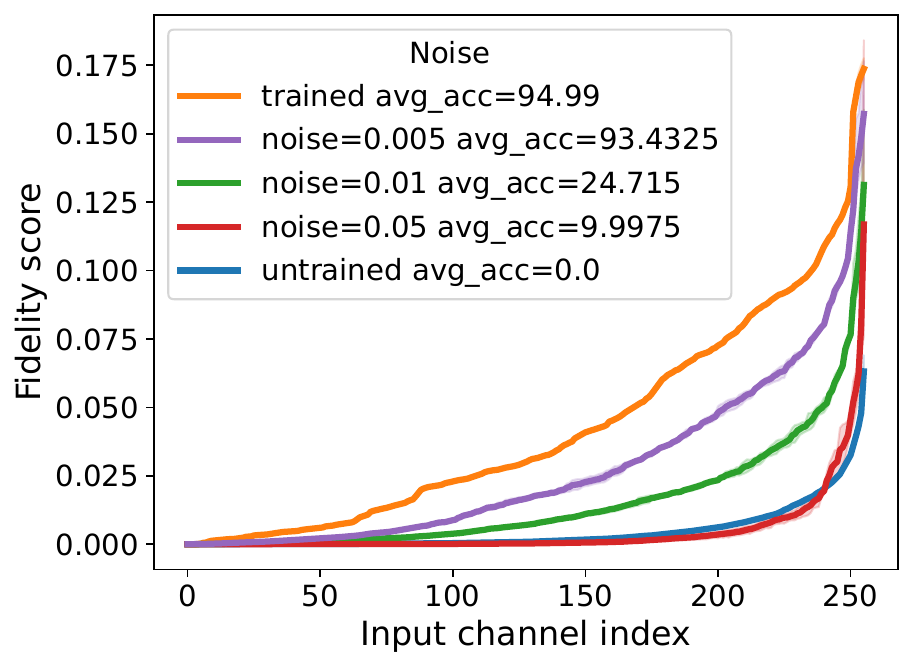}
    \end{subfigure}
    
    \vskip\baselineskip
    \begin{subfigure}[b]{0.24\textwidth}
        \includegraphics[width=\linewidth]{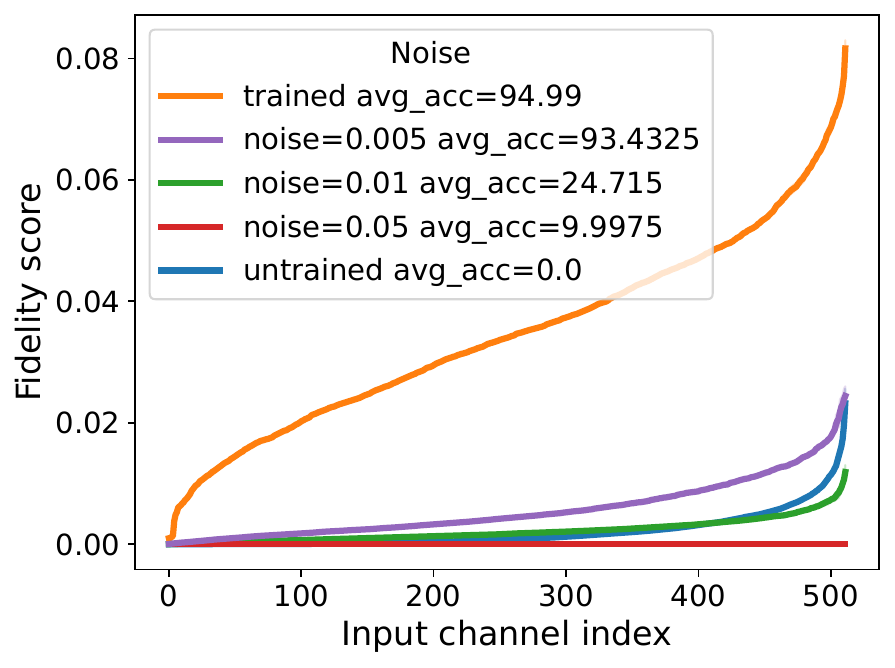}
    \end{subfigure}
    \begin{subfigure}[b]{0.24\textwidth}
        \includegraphics[width=\linewidth]{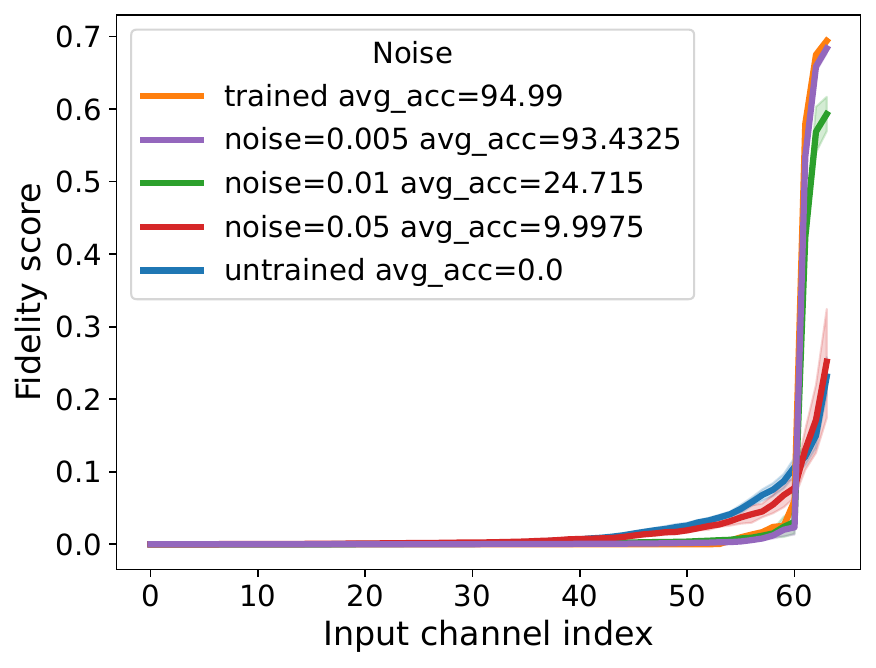}
    \end{subfigure}
    \begin{subfigure}[b]{0.24\textwidth}
        \includegraphics[width=\linewidth]{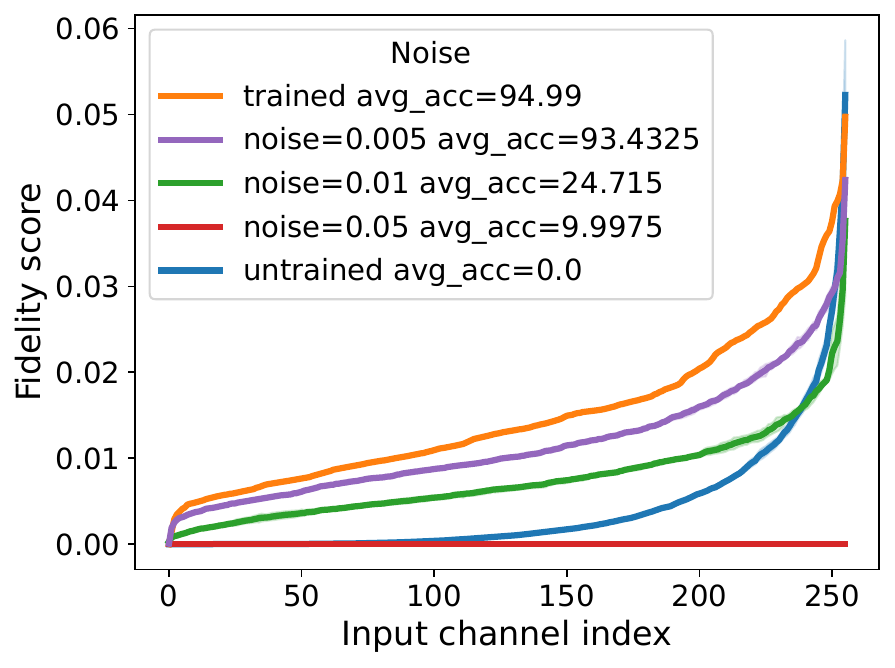}
    \end{subfigure}
    \begin{subfigure}[b]{0.24\textwidth}
        \includegraphics[width=\linewidth]{assets/App_images/ds/resnet50-dfpc_CIFAR10/layer1.block1.conv2.pdf}
    \end{subfigure}
    \caption{Fidelity scores for select layers of ResNet50 trained on CIFAR10 showing the effect of adding noise}
    \label{fig:noiser50c10}
\end{figure}
 
\subsection{\texorpdfstring{Constants in \cref{thm:HiFi_global}}{Constants in Theorem 3.6}}
\label{app:exp:consts}

In this section, we provided worst case and average case estimates of the constant \(C_l\) in \cref{thm:HiFi_global}. 
In \cref{fig:constants_worst}, we plot the constants obtained in the proof of \cref{thm:HiFi_global} in \cref{app:proofs:loc2glob} for a ResNet-50 trained on ImageNet, and observe that the values can be very large (38 orders of magnitude). However, it is important to note that these are worst case guarantees, and that these constants are much smaller in practice. In \cref{fig:constants_emp}, we compute the ratio between the global error, \(\mathbb{E}\left[\| \vec{y}(\rv{X}) - \vec{y}(\rv{X}; \mat{M}^l) \|^2\right]\) and the local error, \(\sum_{c=1}^{c_{out}^l} \EE\left[\left\|\mat{Y}^l_c(\rv{X}) - \sum_{i\in C} m^l_{ci}\mat{A}^l_{ci}(\rv{X})\right\|^2\right]\) and observe that these values are indeed much smaller (\(10\)-\(50\)) for random values of \(\mat{M}^l\) for the expected square loss.

\begin{figure}[t]
    \centering
    \begin{subfigure}{0.48\linewidth}
        \centering
        \includegraphics[width=\linewidth]{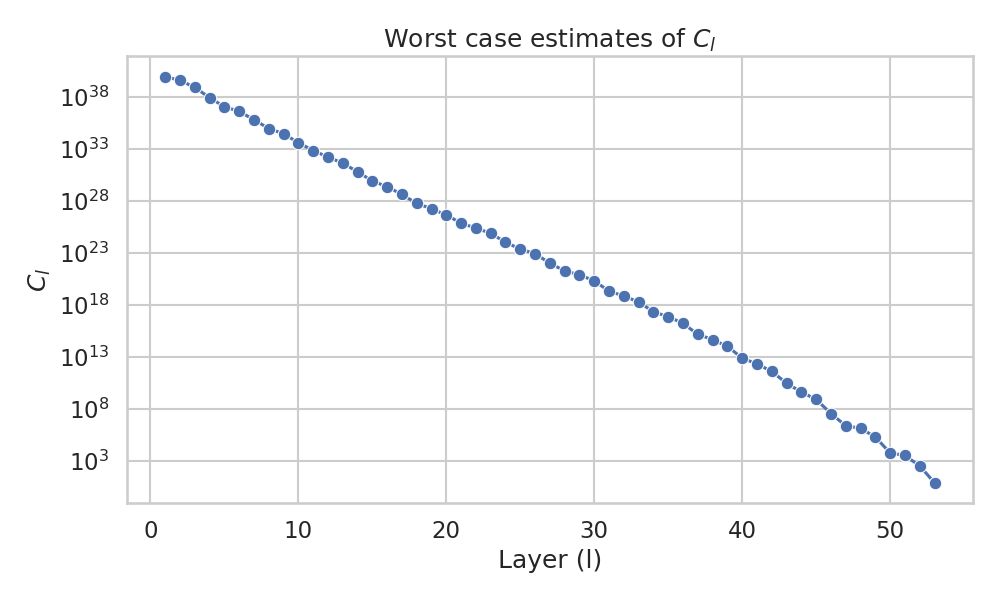}
        \caption{Worst case}
        \label{fig:constants_worst}
    \end{subfigure}\hfill
    \begin{subfigure}{0.48\linewidth}
        \centering
        \includegraphics[width=\linewidth]{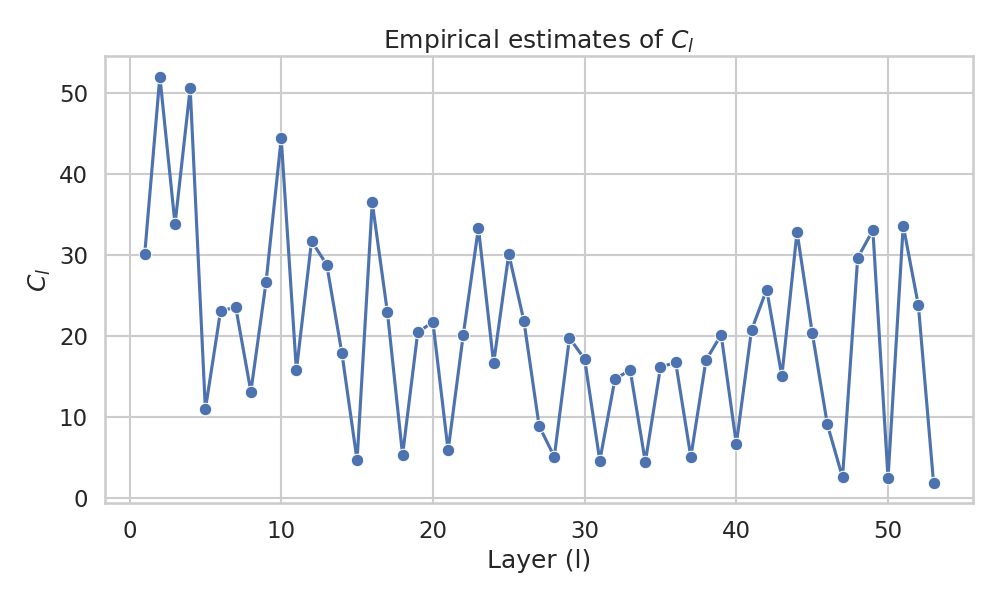}
        \caption{Empirical}
        \label{fig:constants_emp}
    \end{subfigure}
    \caption{Estimates of constants \(C_l\) across layers for a ResNet-50 trained on ImageNet.}
    \label{fig:constants}
\end{figure}

\subsection{Discussion on the synthetic samples used in the experiments}
\label{app:exp:synth}

We describe the synthetic datasets used in our vision experiments to simulate distributional access. Randomly selected example images are provided in \cref{fig:syntheticimages}. For NLP tasks, we use WikiText and Alpaca datasets \cite{alpaca, merity2016pointersentinelmixturemodels} which are standard in this field.

\begin{figure}[t]
    \centering
    \begin{subfigure}[t]{0.2\linewidth}
        \centering
        \includegraphics[width=\linewidth]{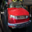}
        \caption{Automobile (5M)}
    \end{subfigure}
    \begin{subfigure}[t]{0.2\linewidth}
        \centering
        \includegraphics[width=\linewidth]{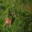}
        \caption{Deer (5M)}
    \end{subfigure}
    \begin{subfigure}[t]{0.2\linewidth}
        \centering
        \includegraphics[width=\linewidth]{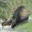}
        \caption{Beaver (DDPM)}
    \end{subfigure}
        \begin{subfigure}[t]{0.2\linewidth}
        \centering
        \includegraphics[width=\linewidth]{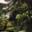}
        \caption{Forest (DDPM)}
    \end{subfigure}
    \caption{Randomly selected images from the synthetic sets}
    \label{fig:syntheticimages}
\end{figure}

\subsubsection{CIFAR5M}

For experiments with the CIFAR10 dataset, we use CIFAR5M, a dataset containing 6 million synthetic CIFAR-10-like images sampled from a Diffusion model and labeled by a Big-Transfer model \citep{nakkiran2021deepbootstrapframeworkgood}, which we randomly sample 10,000 samples from each of the 10 classes to create our dataset. This dataset has an FID \citep{FIDpaper} of 15.95 with respect to the CIFAR10 training set. This dataset is obtained from \href{https://github.com/preetum/cifar5m}{here}.

\subsubsection{CIFAR100-DDPM}

For experiments with the CIFAR100 dataset, we use CIFAR100-DDPM \citep{10.5555/3540261.3540584}, which we randomly downsample to contain 1,000 samples from each of the 100 classes. 
This dataset has an FID of 4.74 with respect to the CIFAR100 training set. We randomly sample 1,000 samples from each of the 100 classes to create our dataset. 
This dataset is obtained from \href{https://github.com/google-deepmind/deepmind-research/tree/master/adversarial_robustness/iclrw2021doing}{here}.

\subsubsection{Effect of Data Quality}
\label{app:exp:synth:ablations}
To study the effect of data quality on the performance of our algorithm in vision tasks, we apply the pruning algorithm using synthetic datasets based on CIFAR10 generated with different FIDs. We use a diffusion model \citep{Karras2022edm} to generate 3 datasets of differing quality by changing the number of diffusion steps (4,5, and 6). We report the results of our pruning algorithm with different quality datasets in \cref{tab:syn_data}.
We observe that higher quality data leads to an improved sparsity - accuracy tradeoff.

\begin{table}
    \centering
    \caption{Effect of data quality when pruning a ResNet-50 on CIFAR10}
    \begin{tabular}{ccccc}
    \toprule
        \textbf{FID}          & \textbf{Diffusion Steps}   & \textbf{Accuracy}  &   \textbf{FLOP Reduction}  & \textbf{Param Reduction}\\
    \midrule
        85.80        & 4                 & 86.27     & 2.63x & 2.66x\\ 
        35.58        & 5                 & 90.75     & 2.78x & 2.78x\\ 
        14.42        & 6                 & 90.39     & 3.50x & 3.60x\\ 
    \bottomrule
    \end{tabular}
    \label{tab:syn_data}
\end{table}

\subsection{Additional Pruning Experiments}
\label{app:prune}
We present additional pruning experiments in \cref{tab:cifar100-resnet,tab:cifar10-resnet}.

\begin{table}[tb]
    \caption{Comparison of ResNet-50 pruning for CIFAR10 and CIFAR100. ST = \emph{Synthetic Training}, i.e. training using synthetic samples.}
    \label{tab:cifar10-resnet}
    \begin{center}
    \begin{small}
    \begin{tabular}{c|lccc}
    \toprule
        \textbf{Dataset}                            & \textbf{Algorithm}       & \textbf{Accuracy}        & \textbf{FLOP Reduction}            & \textbf{Param Reduction}\\

    \midrule
\multirow{6}{*}{CIFAR10}      & Unpruned        & 94.99           & 1x            & 1x         \\
                                        & DFPC            & 90.25           & 1.46x         & 2.07x    \\
                                        & \(L_2\)   &  15.91       & 4.07x         & 4.71x      \\
                                        & \(L_2\) w/ ST   &  90.12       & 4.07x         & 4.71x      \\
                                        & \textbf{Ours}        & \textbf{91.02}  &\textbf{4.07x} &\textbf{5.36x} \\
    \midrule 
    \multirow{6}{*}{CIFAR100}
                  & Unpruned        & 78.85 & 1x& 1x\\
                                        & DFPC            & 70.31 & 1.27x& 1.22x\\
                                        & \(L_2\) & 16.77& 1.93x& 1.40x\\
                                        & \(L_2\) w/ ST& \textbf{73.83}& 1.93x& \textbf{1.40x}\\
                                        & \textbf{Ours}         & 70.93&\textbf{1.93x}&1.38x\\
    
    \bottomrule
    \end{tabular}
    \end{small}
    \end{center}
\end{table}
\begin{table}[tb]
    \centering
    \scriptsize
    \caption{Comparison of ResNet-101/VGG-19 pruning on CIFAR10 and CIFAR100. ST = \emph{Synthetic Training}, i.e. training using synthetic samples.}
    \label{tab:cifar100-resnet}
    \begin{tabular}{clcccc}
    \toprule
       \textbf{ Dataset} & \textbf{Model}                           & \textbf{Algorithm }      & \textbf{Accuracy}     & \textbf{FLOP Reduction}        & \textbf{Param Reduction}    \\
    \midrule
        \multirow{5}{*}{CIFAR-100}&\multirow{5}{*}{VGG19}          
                                         & Unpruned        & 72.02 & 1x & 1x\\
          &                              & DFPC            & 70.10 &1.26x& 1.50x\\
          &                              & \(L_2\)   & 56.46 & 1.50x& 2.40x\\                  
          &                              & \(L_2\) w/ ST   & \textbf{72.42} & 1.50x& \textbf{2.40x}\\
          &                              & \textbf{Ours}         & 70.26&\textbf{ 1.51x}& 2.31x\\
    \midrule
        \multirow{8}{*}{CIFAR10}
        &\multirow{4}{*}{ResNet-101}     & Unpruned        & 95.09          & 1x        & 1x         \\
                                        && DFPC           & 89.80           & 1.53x          & 1.84x     \\
                                        && \(L_2\) w/ ST& 90.49       & 4.20     & \textbf{5.29x}  \\
                                        && \textbf{Ours}         & \textbf{91.20} &\textbf{4.21x}&4.79x \\
    \cline{2-6}
       & \multirow{4}{*}{VGG19}          & Unpruned        & 93.50         & 1x        & 1x         \\
                                        && DFPC            & 90.25         & 1.46x     & 2.07x\\
                                        && \(L_2\) w/ ST&  89.23 & 2.39x     & \textbf{9.19x}\\
                                        && \textbf{Ours}         & \textbf{91.80}& \textbf{2.39x}& 5.52x \\
    \bottomrule
    \end{tabular}
\end{table}

\subsubsection{Ablation of weight compensation and BatchNorm correction}\label{app:prune:ablations}

In this section, we perform ablations for each component of our pruning algorithm, simple pruning, correcting batch norm statistics and weight compensation. We report our results for pruning ResNet-50 on CIFAR 10 in \cref{tab:ablcomp}. We observe that each component allows for a better accuracy sparsity trade-off.

\begin{table}[b]
    \centering
    \caption{Ablation of different components}
    \label{tab:ablcomp}
    \begin{tabular}{ccccc}
    \toprule
       \textbf{BatchNorm}  & \textbf{Compensation} & \textbf{Accuracy} & \textbf{FLOP Reduction} & \textbf{Param Reduction}\\
    \midrule
         No & No & 93.37 & 1.61x & 1.53x\\
         No & Yes & 93.49 & 2.21x & 2.17x\\
         Yes & No & 93.17 & 2.53x & 2.39x\\
         Yes & Yes & 93.76 & 3.22x & 3.30x\\
    \bottomrule
    \end{tabular}
\end{table}

\subsubsection{Final ImageNet Pruned Model}\label{app:prune:pruned_imagenet}

In \cref{fig:prunedmodel30,fig:prunedmodel54} we compare the final pruned models for ResNet-50 on ImageNet with DFPC \citep{narshana2022dfpc}. We observe that our pruning algorithm removes more channels in later coupled channels than DFPC leading to higher gains in sparsity.

\begin{figure}
    \centering
    \includegraphics[width=1\linewidth]{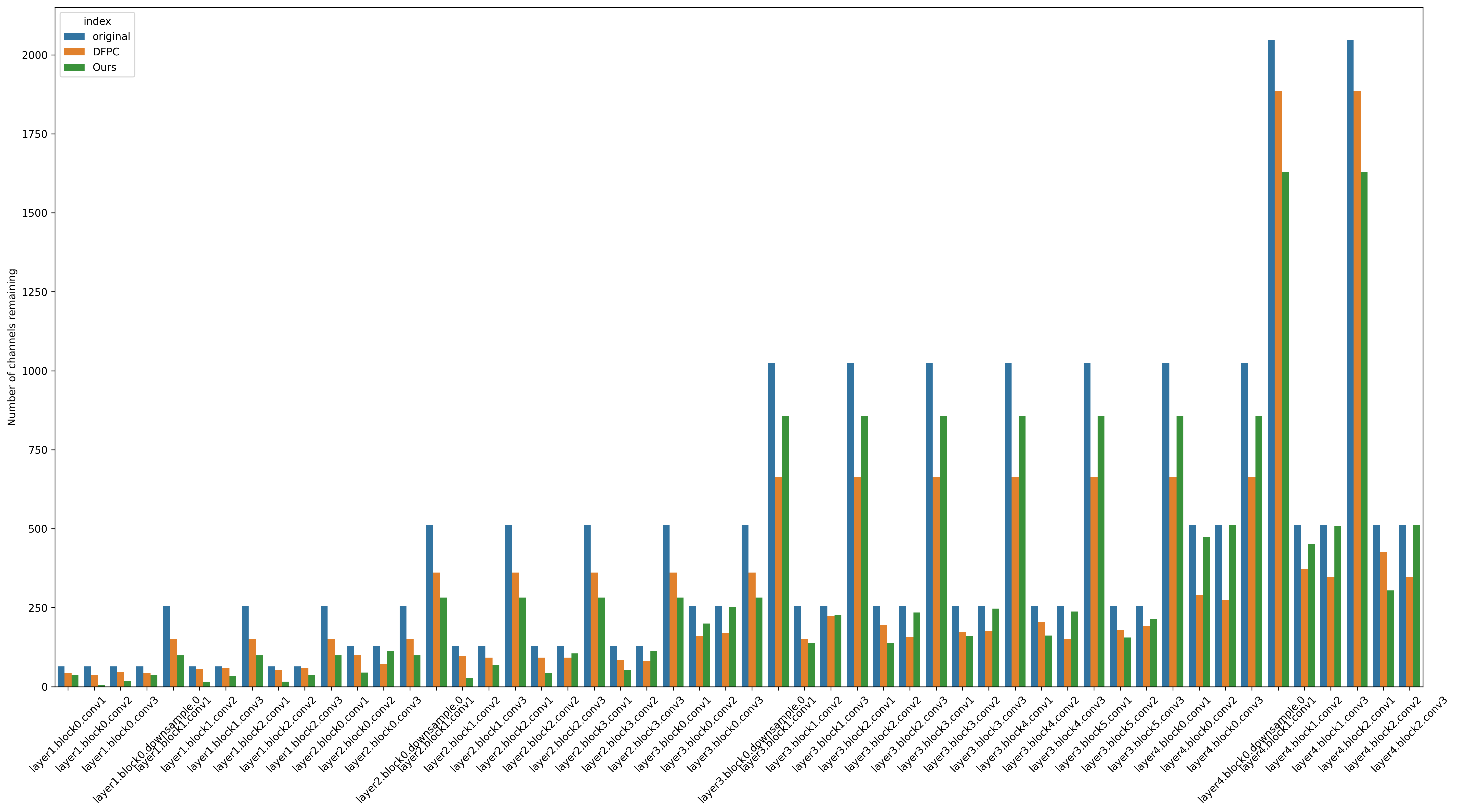}
    \caption{Number of remaining channels of pruned ImageNet model compared with DFPC (30)}
    \label{fig:prunedmodel30}
\end{figure}

\begin{figure}
    \centering
    \includegraphics[width=1\linewidth]{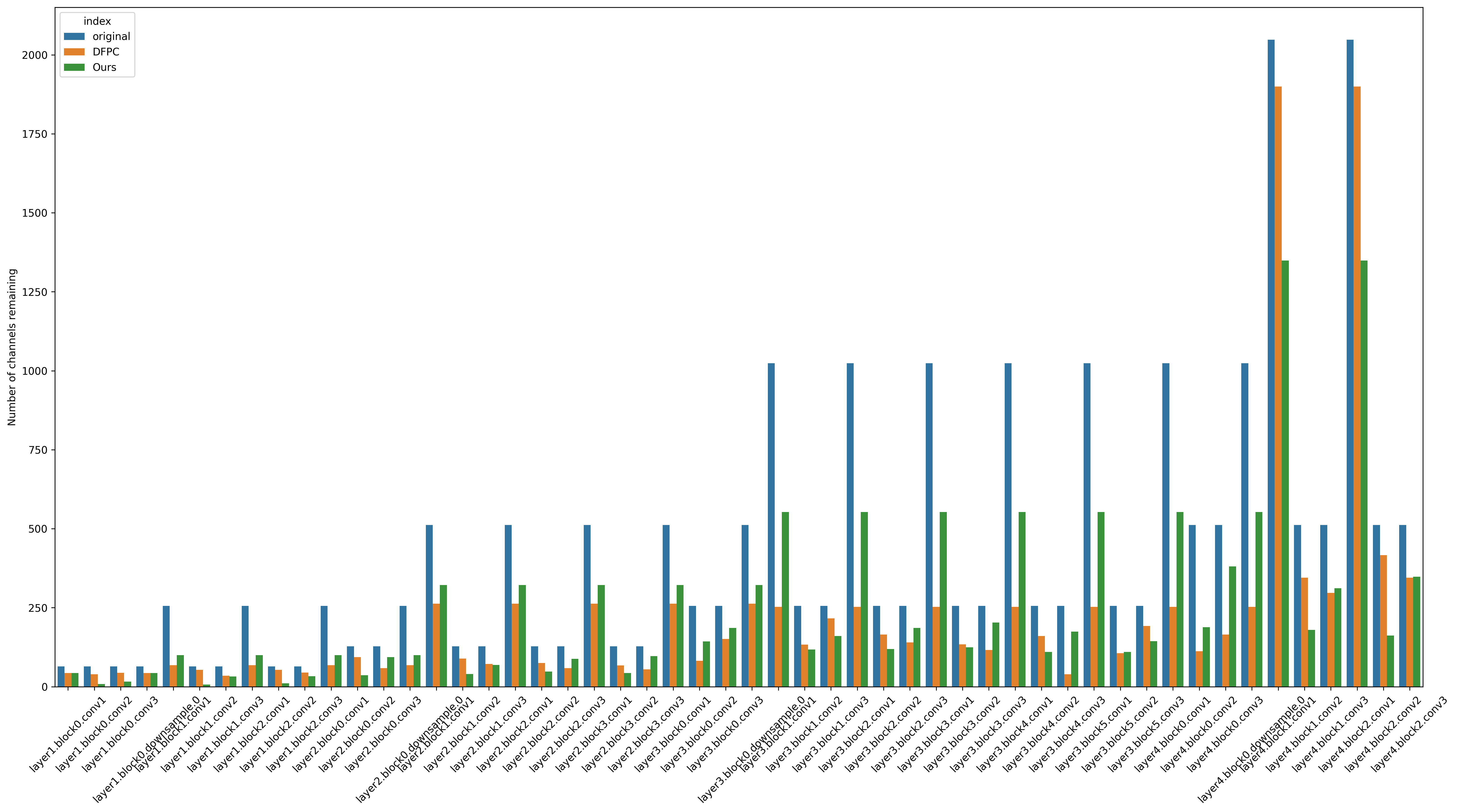}
    \caption{Number of remaining channels of pruned ImageNet model compared with DFPC (54)}
    \label{fig:prunedmodel54}
\end{figure}

\subsubsection{Baseline selection for LLM Pruning} \label{app:prune:baseline_selection}
We choose ShortGPT \citep{men2024shortgpt} and SliceGPT \citep{ashkboos2024slicegpt} as baselines against which we compare ModHiFi. We do so for two broad reasons: all three methods together represent three different granularities for conducting structured pruning for LLMs, and both ShortGPT and SliceGPT are the state-of-the-art within their respective lanes.

The three different granularities are 
\begin{enumerate}
    \item Layer pruning: Entire layers (i.e. transformer decoder blocks) are removed from the network. This is viable since transformers are constant width networks, i.e., there are no architectural restrictions to the ordering or number of layers. ShortGPT falls within this granularity.
    \item Embedding pruning: The width of the network (i.e. the embedding dimension) is pruned at a uniform rate across the entire network. This entails a form of feature selection: along with weight matrix pruning, one also has to prune the corresponding dimensions from the feature matrix being fed into every layer. SliceGPT falls within this granularity. 
    \item Hidden dimension pruning: Here, the number of layers and the width of the embedding are left unchanged. Instead, one prunes the hidden dimensions within the modules that constitute a transformer decoder block. ModHiFi falls within this granularity. 
\end{enumerate}

We would like to emphasize that both SliceGPT and ShortGPT are designed to operate on Transformer models, and as such are able to leverage specifics of the architecture to their advantage. In return for this specificity, however, they trade off the ability to generalize to CNNs, something that ModHiFi does with ease due to its architecture-agnostic nature; the only assumption made by the Fidelity Score is that the components being scored belong to linear layers. 

\subsection{Additional Unlearning Experiments}
\label{app:unlearn}
We report additional experiments in \cref{tab:forgetc10_vgg} on class unlearning on different architectures. For VGG-19 networks, we remove the HiFi channels for the forget class of the last 12 convolution layers. We also compare our work with DisCEdit-U from \cite{murti2024discedit} wherein we remove discriminative components from the last 8 convolutional layers. We use a custom implementation of the algorithm for our VGG19 and ResNet50 models for CIFAR10, as those models are unavailable in the codebase of \cite{murti2024discedit}.

\begin{table}[b]
    \centering
    \caption{Class unlearning on CIFAR10 for VGG19}
    \begin{tabular}{clcc}
    \toprule
        \textbf{Model }                          & \textbf{Algorithm}       & \textbf{Forget Accuracy}      & \textbf{Remain Accuracy}\\
    \midrule
        \multirow{3}{*}{VGG19}      & -              & 93.50        & 93.50   \\
                                    & DisCEdit-U \citep{murti2024discedit} & 2.39 & \textbf{84.2}\\
                                    & Ours     & \textbf{0.86}       & 77.85  \\
    \bottomrule
    \end{tabular}
    \label{tab:forgetc10_vgg}
\end{table}

We also compare our work with DisCEdit-U on ResNet50 trained on CIFAR10 as well, which we present in Table~\ref{tab:forgetc10_rn50disc} 
\begin{table}[tb]
    \centering
    \caption{Class unlearning on CIFAR10 for ResNet50}
    \begin{tabular}{clcc}
    \toprule
        \textbf{Model}                           & \textbf{Algorithm}       & \textbf{Forget Accuracy}      & \textbf{Remain Accuracy}\\
    \midrule
        \multirow{3}{*}{ResNet50}      & -              & 94.99        & 94.99   \\
                                    & DisCEdit-U \citep{murti2024discedit}& 3.2 & 91.6\\
                                    & Ours     &    \textbf{0.2}   & \textbf{92.98}  \\
    \bottomrule
    \end{tabular}
    \label{tab:forgetc10_rn50disc}
\end{table}

We show that our unlearning method achieves similar or superior performance to that of \cite{murti2024discedit} without fine-tuning. Moreover, unlike \cite{murti2024discedit}, our approach uses only \emph{synthetic samples}, showing the efficacy of our work in classwise unlearning, even in the absence of training data.

\paragraph{Unlearning with finetuning}
\label{app:unlearn:finetune}
Here we compare our method with 3 additional epochs of finetuning on synthetic samples of the remaining class data. 
Although this setup does not fall into the setup of the work since we do not assume access to the loss function, we provide these results to indicate that even using very few synthetic samples we can perform perfect unlearning. 
We present these results in \cref{tab:forgetune}, where we observe almost perfect unlearning for both ResNet-50 and Swin-Transformers.

\begin{table}
    \centering
    \caption{Class unlearning with 3 epochs of finetuning on synthetic samples}
    \label{tab:forgetune}
    \begin{tabular}{ccc}
        \toprule
        \textbf{Model} & \textbf{Remain Accuracy} & \textbf{Forget Accuracy}\\
        \midrule
        ResNet-50  & 93.1 & 0\\
        Swin-T & 83.6 & 0.1\\
        \bottomrule
    \end{tabular}
\end{table}

\paragraph{Unlearning with baseline budgets}

In this section, we compare our method when allowing for the same amount of finetuning as \cite{jia2023model}, with both synthetic data and training data access. While this violates our assumptions about loss function and training data access, we present these results to provide a fair comparison of our algorithm when run within the same constraints as our baselines. Our results can be found in \cref{tab:forgetfull} for the Swin Transformer.

\begin{table}
    \centering
    \caption{Class unlearning with 10 epochs of finetuning}
    \label{tab:forgetfull}
\begin{tabular}{l l c c}
\toprule
\textbf{Approach} & \textbf{Dataset} & \textbf{Forget Accuracy} & \textbf{Remain Accuracy} \\
\midrule 
 Jia et al.~\citep{jia2023model} & CIFAR10 Train & 1.20  & 90.69 \\
 \textbf{Ours} & CIFAR10 Synthetic & 0.37  & 84.63 \\
 \textbf{Ours} & CIFAR10 Train & \textbf{0.00}  & \textbf{91.1} \\
\bottomrule
\end{tabular}
\end{table} 

\subsection{Compute Platform}
\label{app:compute}

\paragraph{Implementation Details} We implement our proposed methods in PyTorch \citep{paszke2019pytorchimperativestylehighperformance} and use Huggingface's \verb|transformers| \citep{wolf2020huggingfacestransformersstateoftheartnatural} for LLM implementations.

\paragraph{Inference time measurements} 
We follow the inference time measurement setting of \cite{shen2021halp, narshana2022dfpc}. Inference time is the time taken for a model to compute the forward pass for an input and does not account for loading data into memory. We compute the inference time for a batch of 640 random tensors for GPU and 64 for CPU. 100 iterations are used for warm up, after which the inference time is averaged over the next 1000 forward passes. We compute CPU and GPU measurements on a machine whose specifications can be found in \cref{app:hardware}.

\paragraph{JIT Compilation} We present inference time numbers with JIT compilation on Pytorch \citep{NEURIPS2019_bdbca288}.

\paragraph{Hardware}
\label{app:hardware}
\cref{tab:M1} details the hardware we use to conduct our experiments.
Values in (*) indicate reported values obtained from \url{https://www.amd.com/en/products/accelerators/instinct/mi200/mi210.html}.
This machine runs Ubuntu 22.04.3 LTS with kernel 6.8.0-40-generic with the hardware in \cref{tab:M1}. Our software stack comprises of Python 3.12.8, PyTorch 2.5.1 built for ROCm 6.2, and torchvision version 0.20.1 built for ROCm 6.2.

Inference times are measured on a machine running Ubuntu 20.04.1 LTS with kernel 5.15.0-91-generic on the hardware specified in \cref{tab:M2}. The software stack used for inference consists of Python 3.12.8, PyTorch 2.5.1, and Torchvision 0.20.1 for CUDA 12.3.

\begin{table}[tb]
  \caption{Specifications of GPU hardware used for computation}
  \label{tab:M1}
  \centering
  \begin{tabular}{lll}
    \toprule
    CPU Model Name & AMD EPYC 9654 96-Core Processor      \\
    CPU(s)     & 192      \\
    Thread(s) per core & 1 \\
    Core(s) per socket     & 96       \\
    Socket(s)     & 2       \\
    NUMA node(s)     & 2       \\
    CPU MHz(Max)     & 3707.8120      \\
    L1d \& L1i cache & 6 MiB \\
    L2 cache & 192 MiB \\
    L3 cache & 768 MiB \\
    RAM & 1.48 TiB (DDR5, 4800 MT/s) \\
    GPU Model name & Instinct MI210\\
    GPU(s) & 4\\
    GPU Architecture & AMD Aldebaran\\
    Dedicated Memory Size(per GPU) &    64 GB\\
    ROCm Version & 6.0.2\\
    Peak FP32 Performance* &     22.6 TFLOPs\\
    Peak FP64 Performance*  & 22.6 TFLOPs\\
    Memory Clock* &     1.6 GHz\\
    Peak Memory Bandwidth* &     1.6 TB/s\\
    \bottomrule
  \end{tabular}
\end{table}

\begin{table}[tb]
  \caption{Specifications of GPU and CPU hardware used for computing inference time}
  \label{tab:M2}
  \centering
  \begin{tabular}{lll}
    \toprule
    CPU Model Name & Intel(R) Xeon(R) Silver 4216 CPU @ 2.10GHz      \\
    CPU(s)     & 64      \\
    Thread(s) per core & 2 \\
    Core(s) per socket     & 16       \\
    Socket(s)     & 2       \\
    NUMA node(s)     & 2       \\
    CPU MHz(Max)     & 3200      \\
    L1d \& L1i cache & 1 MiB \\
    L2 cache & 32 MiB \\
    L3 cache & 44 MiB \\
    RAM & 62.53 GiB (DDR4, 2666 MT/s) \\
    GPU Model name & NVIDIA GeForce RTX 2080 Ti\\
    CUDA version& 12.3\\
    GPU(s) & 8\\
    GPU Architecture & NVIDIA Turing\\
    Dedicated Memory Size(per GPU) &    11.81 GB\\
    \bottomrule
  \end{tabular}
\end{table}

\subsubsection{Module-level Time Consumption}

In this section, we break down the time each component of our algorithm takes. For 2000 samples batched into batches of size 64, when running the algorithm on a ResNet-50:

\begin{itemize}
    \item Computation of fidelity scores takes between 32GB to 51GB of VRAM, and between 2 minutes to 5 minutes, on 1 GPU of machine \ref{tab:M1}, across data from CIFAR10, CIFAR100, and ImageNet.
    \item Computing \(\delta^\star_c\) across 4 GPUs using an average of 60GB per GPU takes 60 minutes for CIFAR10/100, and 90 minutes for ImageNet, averaging to roughly 1 minute per layer.
\end{itemize}
 
\subsection{Hyperparameters and Training Procedure}
\label{app:hyper}

\subsubsection{Hyperparameters for Experiments}

We typically set the percentile of removed components to be between 0.01 to 0.2. We randomly select 2\% of our synthetic samples to select data for vision tasks and select 128 samples for NLP tasks.

\subsubsection{Training procedure}
\label{app:pretrain}

\paragraph{Pretraining procedure:}  For CIFAR10 and CIFAR100, we train models using SGD with a momentum factor of 0.9 and weight decay of \(5\x10^{-4}\), for 200 epochs using Cosine  Annealing step sizes with an initial learning rate of 0.1.

\paragraph{ImageNet post training:} For ImageNet, we use off-the-shelf pretrained models from Torchvision \citep{NEURIPS2019_bdbca288}. We train the model for 3 epochs after each iteration of pruning with learning rates of 0.1, 0.01, 0.001. After the pruning ends, we finally train the network for 160 epochs with a batch size of 512. We use the SGD Optimizer with a momentum factor of 0.9 and weight decay of \(1\x10^{-4}\) and start with an LR warm-up for 10 epochs, followed by Cosine Annealed step sizes with an initial learning rate of 0.1 with Cutmix and Mixup augmentations.

\paragraph{\(L_2\) Post training procedure:}
For the synthetic training experiments mentioned in \cref{sec:expts}, we first prune the model using \(L_2\) norm as the grouped saliency to a similar sparsity as our algorithm. We then train the model using 50000 samples from the synthetic dataset for 100 epochs with a batch size of 128 using SGD optimizer with momentum factor of 0.9 with initial learning rate of 0.01 and a MultiStepLR learning rate scheduler with milestones at 60 and 80 epochs. 

\section{Additional Algorithm details}
\label{app:algo}

In this section, we discuss algorithmic nuances not discussed in the main body of the paper.

\subsection{Clarification on Lipschitz Bounds and Their Role}\label{app:algo:lipschitz_detail}

In \cref{sec:method}, we introduced a local-to-global error bound (\cref{thm:HiFi_global}) that connects intermediate-layer deviations to changes in the final-layer output, assuming the model is composed of Lipschitz-continuous layers with constants \(C_l\). This result serves to theoretically motivate the use of local reconstruction error -- what we formalize as Subset Fidelity -- as a proxy for reconstruction error at the output.

Importantly, we do not compute or estimate Lipschitz constants in any part of our algorithm. 
Our pruning and unlearning algorithms do not depend on knowledge of the values of $C_l$.
The bound in \cref{thm:HiFi_global} is used qualitatively to support the intuition that preserving high-fidelity intermediate representations leads to stability in the \emph{final} model predictions.

Empirically, we find that Subset Fidelity correlates strongly with the effect of component removal on prediction quality (see \cref{fig:monte,app:exp}), even in the absence of explicit Lipschitz bound estimation. This supports our design choice to treat \cref{thm:HiFi_global} as a motivating principle, not an operational tool.

We believe this distinction is important to clarify: while our framework draws conceptual inspiration from Lipschitz continuity, it remains loss-free, and hyperparameter-driven in practice, with no reliance on any difficult-to-estimate constants.

\subsection{Additional Algorithmic details on Fidelity Estimation}
\label{app:algo:identhifi}

To efficiently estimate the fidelity of each component at a given layer, we use a saliency measure to approximate the fidelity score. This is based on the component’s contribution to reconstructing the layer’s output, computed via the inner product between the layer output and the component-specific activation contribution. This can be written as 

\begin{equation*}
\tilde{R}_{ci}^l = \EE\left[\langle \mat{Y}_c(\rv{X}),\mat{A}_{ci}(\rv{X}) \rangle\right] = \langle \mat{Q}^c_i, \vecone{}\rangle = \alpha^l_{ci}\EE[||\mat{A}_{ci}||^2]
\end{equation*}

In networks that include \textbf{BatchNorm} (e.g., ResNets \citep{he2016deep}, VGG \citep{simonyan2015very}), we refine this reconstruction by centering the activations using the BatchNorm’s stored running mean. This leads to a modified formulation of the component similarity matrix:

\begin{equation*}
\tilde{Q}^c_{ij} = \mathbb{E}\left[\langle \mat{A}_{ci}(\rv{X}), \mat{A}_{cj}(\rv{X}) \rangle\right] - \langle \mathbb{E}[\mat{A}_{ci}(\rv{X})], \mathbb{E}[\mat{A}_{cj}(\rv{X})] \rangle
\end{equation*}

These quantities are computed efficiently using modern GPU architectures. The forward activations from a calibration set are batched and evaluated across multiple GPUs in parallel. In sequence-based architectures such as Transformers \citep{liu2021swintransformerhierarchicalvision}, we compute the expectation over all elements in the sequence dimension.

\paragraph{Numerical Stability and Regularization.}
To compute the optimal linear compensation for modifying components, we solve a least-squares system involving the component similarity matrix $\tilde{Q}^c$. However, this matrix may be ill-conditioned or rank-deficient in practice. To ensure numerical stability and avoid inversion errors, we add a small $\ell_2$ regularization term ($\lambda = 10^{-4}$) to the diagonal before solving.

\paragraph{Behavior of HiFi Components During Editing.}
The role of HiFi components depends on the editing task:

\begin{itemize}
    \item \textbf{Structured Pruning:} We retain HiFi components and discard the rest. While we cannot guarantee a fixed sparsity level in the output model (since HiFi components may span all inputs), we observe in practice that reasonable sparsity emerges naturally. For more aggressive pruning, the algorithm is applied iteratively.
    
    \item \textbf{Class Unlearning:} Simply discarding low-fidelity components is insufficient. Instead, we aim to remove or disrupt the influence of HiFi components that are specific to the forget class. The editing strategy depends on the network type:
    \begin{itemize}
        \item In BatchNorm networks, we zero out the weights of HiFi components computed as per the forget class samples.
        \item In LayerNorm-based networks with residual connections (e.g., Swin-T), we negate the weights of HiFi components. This rotates the forget-class representation in the opposite direction due to the residual path.
    \end{itemize}
\end{itemize}

The unlearning strategy for Transformer-based architectures is captured in the following procedure:

\begin{algorithm}[H]
    \caption{\textsf{ViT-Edit-X}: Structured Editing for Transformers}
    \label{alg:editxvit}
    \begin{algorithmic}[1]
    \REQUIRE Model parameters $\theta$, HiFi components $H$, coupled channels $CC$
    \ENSURE Edited model parameters $\hat{\theta}$
    \IF{$X = \texttt{Prune}$}
        \FOR{$i \in [c_{\text{in}}^l] \setminus \{i \mid (c, i) \in \bigcup_l H_l\}$}
            \STATE $\mat{W}^l_{c,i} \gets 0$ \hfill $\forall c \in [c_{\text{out}}^l],\ l \in CC$
        \ENDFOR
    \ELSIF{$X = \texttt{Unlearn}$}
        \FOR{each layer $l \in CC$}
            \STATE $\hat{W}^l_{c,i} \gets -W^l_{c,i}$ \hfill $\forall (c,i) \in H_l$
        \ENDFOR
    \ENDIF
    \STATE \textbf{Return:} $\hat{\theta}$
    \end{algorithmic}
\end{algorithm}

\paragraph{Fidelity Estimation in LLMs}
Due to the large scale of LLMs and the range of floating point values, estimation of scores becomes more challenging. We estimate the the fidelity scores by computing the row norms of the regularized Cholesky decomposition of \(\mat{Q}\). The scores are estimated as 
\begin{equation*}\mathrm{FS}(\{i\}) \approx ||\vec{L}_i||^2\;\;\text{where}\;\;\mat{Q} = \mat{L}\mat{L}^\top\;\; \text{is the Cholesky decomposition of }\mat{Q}\end{equation*}
We use the Cholesky decomposition since it is efficient to compute.

\subsection{Computational cost}
\label{app:algo:cobra:compcost}

Let \(N\) be the number of data points used to estimate the saliency and \(M^l\) be the complexity of computing the input contribution at layer \(l\) for a single sample in a set of coupled channels with \(m\) layers. 
The complexity to compute the set of retained channels for an output channel of a layer is, \(t_{sal}^l = O(N M^l C_{in}^l d^l)\). To select the components for the coupled channels, the top \(p\) elements for each layer and output channel in them are collected, this costs \(O(\sum_{l=1}^m C_{out}^l(C_{in}^l\log{C_{in}^l}+t_{sal}^l))\). 
The algorithm shows a linear dependence on the number of layers in the network, compared with the BGSC algorithm \citep{narshana2022dfpc} which has a quadratic dependence.

\section{Full LLM Disclosure}
\label{app:llm}
We employed Large Language Models (LLMs) to refine the text for grammar and clarity. Additionally, LLMs were used to generate auxiliary scripts for data visualization (plots). We confirm that LLMs were not used to implement any of the core algorithms or methodologies proposed in this work.

\end{document}